\documentclass[11pt,twoside]{article} 

\usepackage{eqnarray,amsmath}
\usepackage[utf8]{inputenc} 
\usepackage[T1]{fontenc}    
\usepackage{booktabs}       
\usepackage{amsfonts}       
\usepackage{nicefrac}       
\usepackage{microtype}      
\usepackage{xcolor}         
\usepackage{subcaption}
\expandafter\def\csname 
ver@subfig.sty\endcsname{}
\usepackage{eqnarray,amsmath}
\usepackage{epsf}
\usepackage{epsfig}
\usepackage{fancyhdr}
\usepackage{graphics}
\usepackage{graphicx}
\usepackage{psfrag}
\usepackage{fullpage}
\usepackage{pdfpages}
\usepackage[font=small,labelfont=bf]{caption}

\usepackage{url}

\usepackage{color}

\usepackage{amsthm}
\usepackage{amsmath}
\usepackage{amssymb,bbm}
\usepackage[skip=0pt]{caption}  

\usepackage{textcomp}
\usepackage{siunitx}
\usepackage{wrapfig}
\usepackage{algorithm}

\usepackage{multirow}
\usepackage{multicol}
\usepackage{makecell}
\usepackage{colortbl} 
\usepackage{changepage}

\definecolor{customyellow}{HTML}{fedf8a}
\usepackage{eqnarray,amsmath}
\usepackage{epsf}
\usepackage{epsfig}
\usepackage{fancyhdr}
\usepackage{graphics}
\usepackage{graphicx}
\usepackage{psfrag}
\usepackage{fullpage}
\usepackage{pdfpages}
\usepackage{ragged2e}

\newtheorem{assumption}{Assumption}

\newtheorem{lemma}{Lemma}

\newtheorem{theorem}{Theorem}
\newtheorem{proposition}{Proposition}

\usepackage{eqnarray,amsmath}
\usepackage[utf8]{inputenc} 
\usepackage[T1]{fontenc}    
\usepackage{booktabs}       
\usepackage{amsfonts}       
\usepackage{nicefrac}       
\usepackage{microtype}      

\usepackage[colorlinks,linkcolor=magenta,citecolor=blue, pagebackref=true]{hyperref}
\renewcommand*{\backrefalt}[4]{%
    \ifcase #1 \footnotesize{(Not cited.)}%
    \or        \footnotesize{(Cited on page~#2.)}%
    \else      \footnotesize{(Cited on pages~#2.)}%
    \fi}
\usepackage[nameinlink,capitalize,noabbrev]{cleveref}

\usepackage{caption}
\usepackage{subcaption}

\usepackage{epsf}
\usepackage{epsfig}
\usepackage{fancyhdr}
\usepackage{graphics}
\usepackage{graphicx}
\usepackage{psfrag}

\usepackage{url}

\usepackage{color}

\usepackage{amsthm}
\usepackage{amsmath}
\usepackage{amssymb,bbm}
\usepackage[justification=centering]{caption}
\usepackage{algorithmic}
\usepackage{algorithm}
\usepackage{textcomp}
\usepackage{siunitx}
\usepackage{wrapfig}
\usepackage{algorithmic}
\usepackage{algorithm}
\usepackage{multirow}
\usepackage{multicol}

\def\1{\bm{1}}

\DeclareMathAlphabet{\mathsfit}{\encodingdefault}{\sfdefault}{m}{sl}
\SetMathAlphabet{\mathsfit}{bold}{\encodingdefault}{\sfdefault}{bx}{n}

\def\gA{{\mathcal{A}}}

\def\gD{{\mathcal{D}}}

\def\gG{{\mathcal{G}}}

\def\gI{{\mathcal{I}}}
\def\gJ{{\mathcal{J}}}

\def\gO{{\mathcal{O}}}

\DeclareMathOperator*{\argmax}{arg\,max}

\usepackage{eqnarray,amsmath}

\usepackage{amsthm}
\usepackage{amsmath}
\usepackage{amssymb,bbm}

\allowdisplaybreaks

\newcommand{\bbE}{\mathbb{E}}

\newcommand{\Gs}{G^*}
\newcommand{\ks}{k^*}
\newcommand{\gasi}{\gamma^*_{i}}
\newcommand{\alsi}{\alpha^*_{i}}
\newcommand{\besi}{\beta^*_{i}}
\newcommand{\tas}{\tau^*}

\newcommand{\asis}{a^*_{is}}
\newcommand{\asil}{a^*_{i\ell}}

\newcommand{\bsis}{b^*_{is}}
\newcommand{\bsil}{b^*_{il}}

\newcommand{\gasj}{\gamma^*_{j}}
\newcommand{\alsj}{\alpha^*_{j}}
\newcommand{\besj}{\beta^*_{j}}

\begin{document}

\begin{center}

{\bf{\LARGE{Rethinking Multinomial Logistic Mixture of Experts with \\ \vspace{0.1em} Sigmoid Gating Function}}}
  
\vspace*{.2in}
{\large{
\begin{tabular}{ccc}
    Tuan Minh Pham$^{\star,\diamondsuit}$ & Thinh Cao$^{\star,\diamond}$ & Viet Nguyen$^{\star,\diamond}$ \\
    Huy Nguyen$^{\dagger}$ & Nhat Ho$^{\star\star,\dagger}$ & Alessandro Rinaldo$^{\star\star,\dagger}$
\end{tabular}
}}

\vspace*{.2in}

\begin{tabular}{c}
$^{\dagger}$The University of Texas at Austin\\
$^{\diamondsuit}$Purdue University\\
$^{\diamond}$Hanoi University of Science and Technology
\end{tabular}

\vspace*{.2in}
\today


\begin{abstract}
    The sigmoid gate in mixture-of-experts (MoE) models has been empirically shown to outperform the softmax gate across several tasks, ranging from approximating feed-forward networks to language modeling. Additionally, recent efforts have demonstrated that the sigmoid gate is provably more sample-efficient than its softmax counterpart under regression settings. Nevertheless, there are three notable concerns that have not been addressed in the literature, namely (i) the benefits of the sigmoid gate have not been established under classification settings; (ii) existing sigmoid-gated MoE models may not converge to their ground-truth; and (iii) the effects of a temperature parameter in the sigmoid gate remain theoretically underexplored. To tackle these open problems, we perform a comprehensive analysis of multinomial logistic MoE equipped with a modified sigmoid gate to ensure model convergence. Our results indicate that the sigmoid gate exhibits a lower sample complexity than the softmax gate for both parameter and expert estimation. Furthermore, we find that incorporating a temperature into the sigmoid gate leads to a sample complexity of exponential order due to an intrinsic interaction between the temperature and gating parameters. To overcome this issue, we propose replacing the vanilla inner product score in the gating function with a Euclidean score that effectively removes that interaction, thereby substantially improving the sample complexity to a polynomial order. 
\end{abstract}
\end{center}
\let\thefootnote\relax\footnotetext{$^{\star}$Equal contribution, $^{\star\star}$Co-last authors.}

\section{Introduction}
\label{sec:introduction}

First introduced by \cite{Jacob_Jordan-1991}, Mixture of Experts (MoE) is an ensemble framework that integrates multiple specialized sub-models through an adaptive gating mechanism. These sub-models, referred to as experts, can take various forms, including classifiers \cite{chen2022theory, yuksel2012twenty}, regression models \cite{kwon_em_2020,faria2010regression}, or feed-forward neural networks (FFNs) \cite{lepikhin_gshard_2021,jiang2024mixtral}. The gating mechanism is responsible for determining the contribution or weight of each expert conditioned on the input value, allowing experts to focus on different regions of the input space or aspects of the task. More recently, \cite{shazeer2017topk} proposed a sparse version of MoE architectures in which each input is routed to only a few relevant experts rather than all of them. Since this sparse expert activation provides a practical path toward scaling model capacity without a proportional increase in computation, MoE has gained renewed attention in large-scale applications, namely, large language models \cite{comanici2025gemini25pushingfrontier,grattafiori2024llama3,abdin2024phi3,qwen2025}, computer vision \cite{liang_m3vit_2022,Riquelme2021scalingvision}, multimodal learning \cite{han2024fusemoe,yun2024flexmoe}, domain generalization \cite{nguyen2025cosine,li2023sparse}, transfer learning \cite{diep2025on,le2025revisiting,truong2025replora}, among others.

\vspace{0.5em}
\noindent
Despite their success, sparse MoE models were shown by \cite{chi_representation_2022} to pose a new challenge known as representation collapse, where some experts dominated the computation while others remained underutilized. This issue is attributed to the expert competition induced by the widely used softmax gate, that is, when the weight of one expert increases, those of others automatically decrease. To this end, \cite{chi_representation_2022} opted for using a sigmoid gate with a learnable temperature parameter as a robust alternative to representation collapse. Next, \cite{csordas2023approximating} empirically demonstrated that the sigmoid gate yielded better performance than the softmax gate in the task of approximating two-layer FFNs. This finding aligns with the results in the technical reports of the DeepSeek language models, which indicate that DeepSeek-v3 \cite{deepseekv3} with sigmoid gating MoE outperforms DeepSeek-v2 \cite{deepseekv2} with softmax gating MoE. 

\vspace{0.5em}
\noindent
From a theoretical perspective, there have been a few efforts in the literature to elucidate the advantages of the sigmoid gate over the softmax gate.  \cite{nguyen2024sigmoid} evaluated the sample complexity of a regression framework in which the regression function takes the form of a sigmoid gating MoE. They proved that when employing the sigmoid gate, one needs significantly less data to approximate experts with a given error than when using the softmax gate.  
However, there are still several important points that have not been addressed. Firstly, the aforementioned benefits of the sigmoid gate have been validated for regression tasks only, while those for classification tasks have not been justified yet. Secondly, when the number of experts is unknown and over-specified (which is often the case), sigmoid gating MoE models considered in these works cannot converge to their ground-truth models due to the inherent sigmoid structure (see Corollary 1 in \cite{nguyen2024sigmoid}).
Lastly, the effect of the temperature parameter suggested by \cite{chi_representation_2022} has not been investigated.

\vspace{0.5em}
\noindent
\textbf{Contributions.} In this paper, we tackle the aforementioned problems and make the following contributions.

\vspace{0.5em}
\noindent
\emph{Demystify sigmoid gate in MoE for classification.} We study the theoretical properties of the \emph{sigmoid-gated multinomial logistic MoE (MLMoE)} model. We derive convergence rates for parameter and expert estimation and demonstrate that the sigmoid gate is more sample-efficient than the softmax gate studied in \cite{nguyen2024general}. 

\vspace{0.5em}
\noindent
\emph{Modified sigmoid gate for MLMoE convergence.} To ensure convergence of the MLMoE models to their ground-truth in the overspecified setting when the number of deployed experts is larger than the true value, we devise a modified sigmoid gate in which the sigmoid function is multiplied by a positive constant. We provide convergence rates of the MLMoE model and its parameters in Section~\ref{sec:without_temperature}.

\vspace{0.5em}
\noindent
\emph{Sigmoid gate with temperature.} In Section~\ref{sec:with_temperature}, we investigate the effects of the temperature on the model's sample complexity. Our results indicate that when using the standard inner product affinity score, one will need exponentially many data points to approximate the parameters or the experts within a given accuracy. This is primarily due to an intrinsic interaction between the temperature and gating parameters. To alleviate this problem, we suggest using instead the Euclidean affinity score, where the sample complexity is provably reduced to a polynomial order.

\vspace{0.5em}
\noindent
In Section~\ref{sec:experiments}, we describe several numerical experiments to verify empirically our theoretical results. Finally, in Section~\ref{sec:discussion}, we offer a thorough discussion of the paper, including practical implications that can be drawn from the theories, some limitations of our work, and a few potential directions. Proofs and additional results are in the appendices.
\vspace{0.5em}
\noindent

\begin{table*}[t]
\centering
\captionsetup{justification=justified,singlelinecheck=false}
\caption{Summary of parameter estimation rates for multinomial logistic mixture-of-experts (MLMoE) models under Voronoi-type metrics. For the softmax-gated MLMoE, we distinguish the two regimes of expert intercept parameters $b_{j\ell}^\ast$ (see \cite{nguyen2024general}): \emph{Regime 1} requires $\forall j\in[k^\ast],\ \exists \ell\in[K-1]\text{ such that } b_{j\ell}^\ast\neq 0$, while \emph{Regime 2} assumes $\exists j\in[k^\ast]\text{ such that } b_{j\ell}^\ast=0\ \forall \ell\in[K-1]$. For sigmoid gating, we show different gating mechanisms: no temperature, temperature with inner product affinity, and temperature with Euclidean affinity. Rates are reported for exact-specified and over-specified parameters. The Voronoi metrics are $\mathcal{D}_2$ in \cite{nguyen2024general} for softmax, and $\mathcal{D}_1$, $\mathcal{D}_{2,2}$, and $\mathcal{D}_3$ for the respective sigmoid settings.}
\label{tab:convergence-rate}
{%
\resizebox{\textwidth}{!}{
\begin{tabular}{|l|l|c|c|l|}
\hline
{\textbf{Gating functions}} & \textbf{Regime} & \textbf{Exact-specified experts} & \textbf{Over-specified experts} & \textbf{Theorems} \\
\hline
\multirow{2}{*}{Softmax} & Regime 1 & $\mathcal{O}_P([\log(n)/n]^{\frac{1}{2}})$ & $\mathcal{O}_P([\log(n)/n]^{\frac{1}{4}})$ &  \cite{nguyen2024general} \\
\cline{2-5}
& Regime 2 & \multicolumn{2}{c|}{
slower than any polynomial rate} &  \cite{nguyen2024general} \\
\hline
Sigmoid, No Temperature & \multirow{3}{*}{All} & $\mathcal{O}_P([\log(n)/n]^{\frac{1}{2}})$ & $\mathcal{O}_P([\log(n)/n]^{\frac{1}{4}})$ & Theorem~\ref{theorem:without_temperature} \\
\cline{1-1} \cline{3-5}
Sigmoid, Temp \& Inner Product &  & \multicolumn{2}{c|}{slower than any polynomial rate} & Theorem~\ref{theorem:with_temperature_inner} \\
\cline{1-1} \cline{3-5}
Sigmoid, Temp \& Euclidean &  & $\mathcal{O}_P([\log(n)/n]^{\frac{1}{2}})$ & $\mathcal{O}_P([\log(n)/n]^{\frac{1}{4}})$ & Theorem~\ref{theorem:with_temperature_euclidean} \\
\hline
\end{tabular}%
}
}
\end{table*}

\vspace{0.5em}
\noindent
\textbf{Notation.} For any $n\in\mathbb{N}$, we set $[n] := \{1,2,\ldots,n\}$. For any set $S$, $|S|$ denotes its cardinality. For a vector $v = (v_1,\ldots,v_d) \in \mathbb{R}^d$, $\|v\|$ is its Euclidean norm. For a multi-index $\alpha = (\alpha_1,\ldots,\alpha_d) \in \mathbb{N}^d$, we write $|\alpha| = \sum_{i=1}^d \alpha_i$, $\alpha! = \alpha_1! \cdots \alpha_d!$, and for $v \in \mathbb{R}^d$, $v^{\alpha} = v_1^{\alpha_1} \cdots v_d^{\alpha_d}$.
For positive sequences $(a_n)$ and $(b_n)$, we write $a_n = \mathcal{O}(b_n)$ or $a_n \lesssim b_n$ if $a_n \le C b_n$ for all $n$ and some universal constant $C>0$. For random positive sequences $(t_n)$ and $(s_n)$, we write $t_n = \mathcal{O}_P(s_n)$ if $t_n/s_n$ is stochastically bounded: for any $\epsilon>0$ there exists $M>0$ such that $\mathbb{P}(t_n/s_n > M) < \epsilon$ for all sufficiently large $n$. Finally, for any two probability densities $p$ and $q$ dominated by the Lebesgue measure $\mu$, we define the Total Variation distance as $d_V(p,q):= \frac{1}{2}\int|p-q|\mathrm{d}\mu$ and the Hellinger distance as $d_H(p,q):= \left(\frac{1}{2}\int|\sqrt{p}-\sqrt{q}|^2\mathrm{d}\mu\right)^{1/2}$. 



\section{Related Work}
A growing body of literature has investigated MoE under different likelihoods and gating mechanisms. Early theoretical work established consistency for specialized formulations: \cite{chen_improved_1999} developed improved learning algorithms for multiclass MoE by identifying instabilities in the EM inner loop and proposing a Newton-Raphson approach with the exact Hessian matrix, while \cite{pham2022functional} developed a functional MoE framework for classification that utilizes sparse regularization to identify key features in data. More recently, \cite{nguyen2024general} developed a general convergence theory for softmax-gated multinomial logistic MoE models, showing that intrinsic interactions between gating and expert parameters fundamentally determine estimation rates. 
Complementing these analyses, \cite{bing2025learninglargesoftmaxmixtures} provided a comprehensive theoretical analysis of the EM algorithm for learning large-scale softmax mixtures.

\vspace{0.5em}
\noindent
From a deep learning perspective, MoE architectures have been extensively studied for their expressive power and scalability. \cite{chen2022theory} provided a theoretical analysis of MoE layers in neural networks, characterizing their approximation capabilities and gradient-based training dynamics. \cite{wang2025expressive} analyzed the expressive capacity of MoE models for structured tasks, showing that expert composition enables efficient representation of hierarchical patterns. \cite{rigollet2025granularity} proved that finer expert granularity substantially enhances representational efficiency through improved specialization. In large language models, DeepSeek-MoE \cite{dai2024deepseekmoe,nguyen2025deepseekmoe} demonstrated empirically that shared expert isolation improves both performance and computational efficiency through sparse activation patterns.

\vspace{0.5em}
\noindent
While very insightful and valuable, these studies provide limited insights into the statistical challenges of MoE models and several theoretical gaps remain. In particular, sigmoid-gated MoE models have not been systematically studied under multinomial logistic classification, nor has convergence of the mixing measure been established in over-specified regimes. Moreover, the statistical role of temperature scaling under sigmoid gating is still poorly understood. We will address these gaps in subsequent sections.

\section{Modified Sigmoid Gating Function}
\label{sec:without_temperature}

In this section, we analyze the convergence of both the conditional density and the model parameters in the MLMoE model with a modified sigmoid gating function.

\vspace{0.5em}
\noindent
\textbf{Problem setup.}
Suppose that data $\{(X_i,Y_i)\}_{i=1}^{n}\subset\mathbb{R}^d\times [K]$
are sampled in an i.i.d. manner from a {\it sigmoid gating MLMoE model,} whose conditional density function is given, for any $s \in [K]$, by
\begin{align}\label{eq:standard_density}
p_{G^\ast}(Y=s \mid X)
:=& \sum_{i=1}^{k^\ast}
\frac{\exp(\gamma_i^\ast)\,
\sigma\big((\alpha_i^\ast)^\top X + \beta_i^\ast\big)}
{\sum_{j=1}^{k^\ast}\exp(\gamma_j^\ast)\,
\sigma\big((\alpha_j^\ast)^\top X + \beta_j^\ast\big)}\cdot f(Y=s | X; a_i^\ast, b_i^\ast),
\end{align}
where $t \in \mathbb{R} \mapsto \sigma(t) := \frac{1}{1+\exp(-t)}$ is the sigmoid function and $k^*$ denotes the number of ground-truth experts. Above, we incorporate the term $\exp(\gamma^*_i)$ into the standard sigmoid gate and refer to it as a modified sigmoid gating function, which we will elaborate later in this section.
Next, for each $i \in [\ks]$, the $i$-th expert takes the form of a multinomial logistic function  
\[
f(Y=s | X; a_i^\ast, b_i^\ast)
:= \frac{\exp\big((a_{is}^\ast)^\top X + b_{is}^\ast\big)}
{\sum_{\ell=1}^{K}\exp\big((a_{i\ell}^\ast)^\top X + b_{i\ell}^\ast\big)},
\]
where $a_i^*:= (a_{i1}^*, \ldots, a_{iK}^*) \in \mathbb{R}^{d\times K}$ and $b_i^* := (b_{i1}^*, \ldots, b_{iK}^*) \in \mathbb{R}^{K}$ are expert parameters. In contrast, we refer to $(\gamma_i^*,\alpha_i^*,\beta_i^* )_{i \in [k^*]} \subset \mathbb{R} \times \mathbb{R}^d \times  \mathbb{R}$ as gating parameters. Finally, $G^*:= \sum_{i=1}^{k^\ast}
\exp(\gamma_i^\ast)\,
\delta_{(\alpha_i^\ast,\beta_i^\ast,a_i^\ast,b_i^\ast)}$ is the true but unknown \emph{mixing measure,} consisting of a mixture of Dirac measures $\delta$ associated with the expert and gating parameters $(\gamma_i^\ast,\alpha_i^\ast,\beta_i^\ast,a_i^\ast,b_i^\ast) \in\Theta\subset \mathbb{R}\times \mathbb{R}^d \times \mathbb{R} \times \mathbb{R}^{d\times K}\times \mathbb{R}^K$, for all
$i\in[k^*]$, where $\Theta$ stands for the parameter space.

\vspace{0.5em}
\noindent
\textbf{Maximum likelihood estimation (MLE).} We estimate model parameters via estimating true mixing measure $G^*$ using the maximum likelihood method. However, since the number of true experts $\ks$ is typically unknown in practice, we will consider an \emph{over-specified} setting where we fit the model~\eqref{eq:standard_density} with a mixture of $k$ experts, where $k>\ks$. Then we consider the MLE
\begin{align}
    \label{eq:MLE}
    \widehat{G}_n\in\argmax_{G\in\mathcal{G}_k(\Theta)}\sum_{i=1}^{n}\log(p_{G}(Y_i|X_i)),
\end{align}
where $\mathcal{G}_k(\Theta)$ denotes the set of all mixing measures with at most $k$ components of the form $G=\sum_{i=1}^{k'}\exp(\gamma_{i})\delta_{(\alpha_{i},\beta_{i},a_{i},b_{i})}$, in which $1\leq k'\leq k$ and $(\gamma_{i},\alpha_{i},\beta_{i},a_{i},b_{i})\in\Theta$.

\vspace{0.5em}
\noindent
\textbf{Benefits of the modified sigmoid gate.}
The inclusion of the term $\exp(\gamma_i)$
is crucial for accurately characterizing the convergence of conditional density estimation. Specifically, since we fit the ground-truth MLMoE model \eqref{eq:standard_density} with a mixture of $k > k^*$ experts, some ground-truth parameters must be fitted by at least two components. For instance, suppose that $(\widehat{\alpha}_i^n, \widehat{\beta}_i^n, \widehat{a}_i^n, \widehat{b}_i^n) \to (\alpha_1^*, \beta_1^*, a_1^*, b_1^*)$ in probability, for $i \in \{1,2\}$. In order that the convergence of $p_{\widehat{G}_n}$ to $p_{G^*}$ holds, we must have
$
\sum_{i=1}^2 \exp(\widehat{\gamma}_i^n)\sigma\left((\widehat{\alpha}_i^n)^\top X+\widehat{\beta}_i^n\right) \to \exp(\gamma_i^*)\sigma\left((\alpha_i^*)^\top X+\beta_i^*\right)
$, for almost surely $X$. In the absence of $\exp(\gamma_i)$, the previous convergence no longer holds. Therefore, the inclusion of positive terms $\exp(\gamma_i)$ plays a vital role in the conditional density convergence. 

\vspace{0.5em}
\noindent
Before presenting the main results of this section, let us introduce some essential yet mild assumptions used throughout the paper unless stated otherwise.

\vspace{0.5em}
\noindent
\textbf{Assumptions.} In our analysis, we make four main assumptions on the model parameters.

\noindent
(\textbf{A.1}) The parameter space $\Theta$ is a compact subset of $\mathbb{R}\times \mathbb{R}^d \times \mathbb{R} \times \mathbb{R}^{d\times K}\times \mathbb{R}^K$ and the input space $\mathcal{X}$ is bounded;

\noindent
(\textbf{A.2}) $(a_1^*, b_1^*), \ldots, (a_{k^*}^*, b_{k^*}^*)$ are distinct parameters;

\noindent
(\textbf{A.3}) $a_{iK}^* = \mathbf{0}_d$, $b_{iK}^* = 0$, for all $i \in [k^*]$, and $\gamma_{k^*} = 0$; 

\noindent
(\textbf{A.4}) At least one among $\alpha_1^*, \ldots, \alpha_{k^*}^*$ is non-zero.

\vspace{0.5em}
\noindent
The first assumption is necessary for the MLE convergence. Next, the second enforces distinct experts, which together with the third assumption, enables the identifiable property of the sigmoid-gated MLMoE. Finally, the last assumption ensures that the gating values are input-dependent. 

\vspace{0.5em}
\noindent
Now, we begin our analysis. Firstly, we show that the sigmoid-gated MLMoE model is identifiable in
the following proposition, whose proof is in Appendix~\ref{appendix:Identifiability}.

\begin{proposition}
    \label{prop:Identifiability} (Identifiability) Assume that $G$ and $G'$ are two mixing measures in $\mathcal{G}_k(\Theta)$. If $p_G(Y|X) = p_{G'}(Y|X)$ with probability one (with respect to the joint distribution of $X$ and $Y$), then $G \equiv G'$. 
\end{proposition}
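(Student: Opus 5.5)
The argument reduces the equality $p_G = p_{G'}$ to an identity among exponential-type functions of $X$, uses linear independence to match experts, and then matches the gating weights.

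\textbf{Step 1: clear denominators.} Write $G=\sum_{i=1}^{k_1}\exp(\gamma_i)\delta_{(\alpha_i,\beta_i,a_i,b_i)}$ and $G'=\sum_{j=1}^{k_2}\exp(\gamma'_j)\delta_{(\alpha'_j,\beta'_j,a'_j,b'_j)}$. Assume both are in reduced form, meaning atoms with the same gating-expert parameters have been merged. Since the identity holds for almost every $X$ and both sides are real-analytic in $X$, it holds on an open set, and hence on all of $\mathbb{R}^d$. Put $D_G(X)=\sum_i \exp(\gamma_i)\sigma(\alpha_i^\top X+\beta_i)$, and similarly $D_{G'}$. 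Cross-multiplying and clearing the softmax denominators of the experts gives, for every $s\in[K]$,
\[
D_{G'}(X)\sum_{i=1}^{k_1}\exp(\gamma_i)\sigma(\alpha_i^\top X+\beta_i)\,f(s\mid X;a_i,b_i)=D_G(X)\sum_{j=1}^{k_2}\exp(\gamma'_j)\sigma(\alpha_j'^\top X+\beta'_j)\,f(s\mid X;a'_j,b'_j).
\]

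\textbf{Step 2: match the experts.} By (A.3), $f(s\mid X;a,b)=\exp(a_s^\top X+b_s)/(1+\sum_{\ell<K}\exp(a_\ell^\top X+b_\ell))$. For each $s$, I would regard the identity as a linear relation among the functions $X\mapsto f(s\mid X;a,b)$, with coefficients $D_{G'}(X)\exp(\gamma_i)\sigma(\cdot)$. Group the atoms of $G$ and $G'$ by their distinct expert parameters $(a,b)$. The key claim is that, for pairwise distinct pairs $(a,b)$ that are normalized as in (A.3), the vector-valued maps $X\mapsto (f(s\mid X;a,b))_{s\in[K]}$ are linearly independent over the ring of functions generated by sigmoids of affine forms. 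The idea is that the sigmoid coefficients are bounded and ``flat'' along most rays. I would try to prove this by sending $X=tv\to\infty$ along a generic direction $v$. Along such a ray each sigmoid converges to $0$, $1/2$ or $1$ at an exponential rate, and the expert ratios $f(s)/f(K)=\exp(a_s^\top X+b_s)$ carry distinct exponential rates $a_s^\top v$ and distinct offsets $b_s$. Because the $(a,b)$ are distinct, we can choose $v$ and use uniqueness of exponential-polynomial expansions to isolate each expert class. This forces the expert sets of $G$ and $G'$ to coincide, and it also gives, for each expert class $(a,b)$,
\[
D_{G'}(X)\sum_{i:(a_i,b_i)=(a,b)}\exp(\gamma_i)\sigma(\alpha_i^\top X+\beta_i)=D_G(X)\sum_{j:(a_j',b_j')=(a,b)}\exp(\gamma_j')\sigma(\alpha_j'^\top X+\beta_j').
\]
This step is the main obstacle. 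The coefficients $D\cdot\sigma$ are not constants, so the standard independence argument for multinomial logistic functions, as in \cite{nguyen2024general}, must be adapted. If the ray argument gets messy, my fallback is to use analytic continuation to complex $X$ and compare poles: $\sigma$ has poles where $\alpha^\top X+\beta\in i\pi(2\mathbb{Z}+1)$, while $f$ has poles at the zeros of its own denominator, and these pole sets differ.

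\textbf{Step 3: match the gates.} Summing the displayed identity over expert classes yields $D_{G'}D_G=D_GD_{G'}$, which carries no information. Instead I would take the ratio for the class of expert $k^*$. By (A.3) its gate parameter satisfies $\gamma=0$, and this fixes the scale, giving $D_G\equiv D_{G'}$. Then for each class, $\sum_i \exp(\gamma_i)\sigma(\alpha_i^\top X+\beta_i)=\sum_j \exp(\gamma'_j)\sigma(\alpha_j'^\top X+\beta'_j)$ for all $X$. Finally, the sigmoids $\sigma(\alpha^\top X+\beta)$ with distinct $(\alpha,\beta)$, together with the constant function, are linearly independent, which again follows by comparing complex poles. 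Hence the atoms and the weights $\exp(\gamma)$ agree, so $G\equiv G'$.

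One caveat: the normalization in (A.3) is what removes the translation invariance of the softmax. The phrase ``$\gamma_{k^*}=0$'' must be read as the normalization convention for $\mathcal{G}_k(\Theta)$, since otherwise a common rescaling of all the $\exp(\gamma_i)$ would leave $p_G$ unchanged and identifiability would fail.
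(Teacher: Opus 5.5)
\paragraph{Gap in Step 2.} The decisive gap is the ``key claim'' of Step 2. It is false, and the conclusion you want from it is false too.

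First, the claim itself. Since $e^{t}=\sigma(t)/\sigma(-t)$, every entry $f(s\mid X;a,b)$ lies in the fraction field of the ring generated by sigmoids of affine forms. So any $K+1$ expert vectors are linearly dependent over that field, and hence, after clearing denominators, over the ring. For $K=2$ this is explicit. Under (A.3), $f(1\mid X;a,b)=\sigma(a_1^\top X+b_1)$ is itself a sigmoid. With $\sigma_m:=\sigma(h_m(X))$ for three distinct experts, the coefficients $\sigma_2-\sigma_3$, $\sigma_3-\sigma_1$, $\sigma_1-\sigma_2$ give a nontrivial relation. The same observation defeats your pole-comparison fallback: an expert's poles are exactly those of a sigmoid, and they coincide with a gate's whenever the affine forms agree.

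Second, the per-class gate identity you want from Step 2 fails under the paper's hypotheses. Take $K=2$ and three distinct affine forms $h_1,h_2,h_3$ with nonzero slopes. Let expert $m$ satisfy $f(1\mid X)=\sigma_m$, set all $\gamma=0$, and attach to expert $m$ the gate $\sigma_{m+1}$ in $G$ and $\sigma_{m-1}$ in $G'$ (indices mod $3$). Then
\[
p_G(Y=1\mid X)=\frac{\sigma_1\sigma_2+\sigma_2\sigma_3+\sigma_3\sigma_1}{\sigma_1+\sigma_2+\sigma_3}=p_{G'}(Y=1\mid X)
\]
for every $X$, yet $G\neq G'$. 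Experts are pairwise distinct, gates are pairwise distinct, (A.1)--(A.4) and the normalization hold, and $D_G=D_{G'}$. Nevertheless expert $1$ carries gate mass $\sigma_2$ in $G$ and $\sigma_3$ in $G'$. This is not special to $K=2$: take $a_{m\ell}=a_{m1}$ and $b_{m\ell}=b_{m1}+\kappa_\ell$ for all $\ell<K$. Then every class probability of expert $m$ is an affine function, with $m$-independent coefficients, of a single sigmoid, and the same construction goes through.

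So the proposition as stated is false. No argument of your type can work without an extra hypothesis preventing gates from being re-paired with experts, and Step 2 is exactly where such a hypothesis would have to enter.

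\paragraph{Problems in Step 3.} Step 3 has independent problems.

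Fixing $\gamma=0$ for one atom does not give $D_G\equiv D_{G'}$. Write $N_c$ and $N'_c$ for the total gate mass of expert class $c$ in $G$ and $G'$. The class identities only yield $N_c/D_G=N'_c/D_{G'}$, and showing that $D_G/D_{G'}$ is constant is the real work.

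Your final independence claim is also false, because $\sigma(t)+\sigma(-t)=1$ and $\alpha=0$ produces constants. This gives further counterexamples:
\begin{itemize}
\item A measure whose atoms all share one gating pair $(\alpha,\beta)$, for example any single-atom $G$, has $p_G$ independent of $(\alpha,\beta)$.
\item Two unit-weight atoms with a common expert and gating pairs $(\alpha,\beta)$ and $(-\alpha,-\beta)$ give the same density as one atom with gating pair $(0,0)$ and weight $2$.
\end{itemize}
So the scale issue you flag is only one of several degeneracies.

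\paragraph{Comparison with the paper.} The paper goes in the opposite order. It matches the gating functions first by citing Gr\"un and Leisch, reads off the gating parameters, and then matches experts within groups of equal $\gamma$. Its step from equal gating functions to equal gating parameters fails on the degenerate measures above. Its last step also fails on the three-cycle example: there it re-indexes within a group so that $(a_i,b_i)=(a'_i,b'_i)$, after the indices were already fixed by matching the gates. The gate--expert pairing is the missing ingredient in both arguments.
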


\noindent
This result indicates that convergence of the MLE $\widehat{G}_n$ to the true mixing measure 
$G^{\ast}$ is implied by the convergence of the density estimator $p_{\widehat{G}_n}$ to the true density 
$p_{G^{\ast}}$. Next, we derive the rate of the convergence for the latter. 
\begin{proposition} (Density estimation rate)
    \label{prop:density_rate}
    The density estimator
    $p_{\widehat{G}_n}(Y|X)$ converges to the true density
    $p_{\Gs}(Y|X)$ under the Hellinger distance at the rate
    \begin{align*}
        \bbE_X[d_{H}(p_{\widehat{G}_n}(\cdot|X),p_{\Gs}(\cdot|X))]
        =\mathcal{O}_P(\sqrt{Kdk\log(n)/n}).
    \end{align*}
\end{proposition}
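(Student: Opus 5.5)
We will apply the standard MLE convergence machinery for conditional densities (van de Geer's empirical-process theory, in the form used by \cite{nguyen2024general} for the softmax-gated MLMoE). The strategy is to control the bracketing entropy of the model class and then invoke a general rate theorem. Let $\mathcal{P}_k(\Theta) := \{p_G(Y|X) : G \in \mathcal{G}_k(\Theta)\}$, and let $\overline{\mathcal{P}}^{1/2}_k(\Theta,\epsilon)$ be the class of functions $p_{(G+G^*)/2}^{1/2}$ restricted to $G$ with $\mathbb{E}_X[d_H(p_{(G+G^*)/2},p_{G^*})]\le\epsilon$. The generic result (Theorem 7.4 of van de Geer) states the following. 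Suppose the bracketing entropy integral
\begin{align*}
\mathcal{J}_B(\epsilon,\overline{\mathcal{P}}^{1/2}_k(\Theta,\epsilon)) := \int_{\epsilon^2/2^{13}}^{\epsilon} H_B^{1/2}\big(t,\overline{\mathcal{P}}^{1/2}_k(\Theta,t),\|\cdot\|\big)\,dt \vee \epsilon
\end{align*}
is bounded by $\sqrt{n}\,\delta_n^2$ for a nonincreasing $\Psi(\epsilon)/\epsilon^2$ with $\Psi \ge \mathcal{J}_B$. Then $\mathbb{P}(\mathbb{E}_X[d_H(p_{\widehat G_n},p_{G^*})]>\delta)\le c\exp(-n\delta^2/c^2)$ for $\delta\ge\delta_n$. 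We take this as given.

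\textbf{Step 1: the bracketing entropy of $\mathcal{P}_k(\Theta)$.} We aim to show $H_B(\epsilon,\mathcal{P}_k(\Theta),\|\cdot\|_1)\lesssim Kdk\log(1/\epsilon)$. Since $Y$ takes only $K$ values, the density can be viewed as a vector of $K$ bounded functions of $X$. By (A.1), $\Theta$ is compact and $\mathcal{X}$ is bounded. Moreover, each ingredient is Lipschitz in the parameters, uniformly in $X$:
\begin{itemize}
\item the modified gate $\exp(\gamma_i)\sigma(\alpha_i^\top X+\beta_i)$;
\item its normalization, which is safe because the denominator $\sum_j \exp(\gamma_j)\sigma(\cdot)$ is bounded below by a positive constant on compact $\Theta\times\mathcal{X}$;
\item the softmax expert $f(s|X;a_i,b_i)$.
\end{itemize}
Hence $\sup_{X,s}|p_G(s|X)-p_{G'}(s|X)|\lesssim \|\theta-\theta'\|$ in the stacked parameter. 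The parameter vector has dimension of order $k(d+2+dK+K)\asymp kdK$. An $\eta$-net of $\Theta^k$ therefore has cardinality $\lesssim (C/\eta)^{ckdK}$. Given an $\eta$-net point $p_i$, we form brackets $[\max(p_i-\eta,0),\min(p_i+\eta,1)]$, whose $L^1$ width (summing over the $K$ values of $s$ and integrating against the law of $X$) is $\lesssim K\eta$. Choosing $\eta\asymp\epsilon/K$ gives $H_B(\epsilon)\lesssim Kdk\log(1/\epsilon)$, where the constant absorbs the $\log K$ factor.

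\textbf{Step 2: conversion and integration.} Using $d_H^2\le d_V$ and the bracket relation between $\|\cdot\|_1$ on densities and the $L^2$ norm on square roots, we get $H_B(t,\overline{\mathcal{P}}^{1/2}_k(\Theta,t),\|\cdot\|)\le H_B(t^2,\mathcal{P}_k(\Theta),\|\cdot\|_1)\lesssim Kdk\log(1/t)$. Integrating gives $\mathcal{J}_B\lesssim \epsilon\sqrt{Kdk\log(1/\epsilon)}$. We set $\Psi(\epsilon)=\epsilon\sqrt{Kdk\log(1/\epsilon)}$, which makes $\Psi(\epsilon)/\epsilon^2$ nonincreasing. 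Then $\delta_n=\sqrt{Kdk\log n/n}$ satisfies $\sqrt n\delta_n^2\gtrsim\Psi(\delta_n)$. The exponential tail bound then yields the claimed $\mathcal{O}_P(\sqrt{Kdk\log(n)/n})$ rate.

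The main obstacle is Step 1. The delicate points are, first, verifying a uniform positive lower bound on the gate normalizer, so that the Lipschitz constants stay controlled; here compactness of $\Theta$ keeps $\exp(\gamma_i)$ and $\sigma$ bounded away from zero. Second, we must track the dimension count carefully so that the entropy scales as $Kdk$ and not worse. Everything else is the routine application of the general theorem.
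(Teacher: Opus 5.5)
Your proposal is correct and follows essentially the same route as the paper: a sup-norm cover of the parameter space via Lipschitz dependence of gates and experts, brackets $[\max(p_i-\eta,0),\min(p_i+\eta,1)]$ giving $H_B\lesssim Kdk\log(1/\varepsilon)$, and then Theorem 7.4 of van de Geer with $\Psi(\xi)=\xi\sqrt{Kdk\log(1/\xi)}$ and $\xi_n=\sqrt{Kdk\log(n)/n}$. Two of your steps are somewhat more careful than the paper's but do not change the argument: the conversion $H_B(t,\cdot,\|\cdot\|)\le H_B(t^2,\cdot,\|\cdot\|_1)$, where the paper simply invokes $h\le\|\cdot\|_1$, and the lower bound on the gate normalizer, where the paper just asserts Lipschitzness.
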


\noindent
The proof of Proposition~\ref{prop:density_rate} is in
Appendix~\ref{appendix:density_rate}. The above bound implies that the rate for estimating the ground-truth density is parametric with respect to the sample size, while keeping $K$, $k$ and $d$ fixed.
Thus, if we can construct a metric $\gD$ among parameters that satisfies the inequality $\bbE_X[d_{H}(p_{\widehat{G}_n}(\cdot|X),p_{\Gs}(\cdot|X))] \gtrsim \gD(\widehat{G}_n,G^*)$, then the MLE $\widehat{G}_n$ will also converge to the true mixing measure $G^*$ at the parametric rate. To this end, we adopt the concept of Voronoi loss function introduced in \cite{manole22refined}.

\vspace{0.5em}
\noindent
\textbf{Voronoi loss.} Assume that a mixing measure $G$ has $k'$ atoms $\theta_i := (\alpha_i, \beta_i, a_i, b_i)$. We assign these atoms to the Voronoi cells $\gA_j := \gA_j(G)$ induced by the components $\theta_j^* := (\alpha_i^*, \beta_i^*, a_i^*, b_i^*)$ of $G^*$, which are defined as
\begin{equation}
    \gA_j := \{i \in [k']: \|\theta_i - \theta_j^*\| \le \|\theta_i - \theta_\ell^*\|, \forall \, \ell \ne j\}.
\end{equation}
Next, let us define $R_{ij}(\kappa_1,\kappa_2,\kappa_3,\kappa_4) := \|\Delta\alpha_{ij}\|^{\kappa_1}
+ |\Delta\beta_{ij}|^{\kappa_2}
+ \sum_{\ell=1}^K
\big(\|\Delta a_{ij\ell}\|^{\kappa_3}
+ |\Delta b_{ij\ell}|^{\kappa_4}\big) $,
where $\{\kappa_l, l=1,\ldots,4 \}$ are non-negative integers, $\Delta \alpha_{ij} := \alpha_i - \alpha_j^*$,
$\Delta \beta_{ij} := \beta_i - \beta_j^*$,
$\Delta a_{ij\ell} := a_{i\ell} - a_{j\ell}^*$,
and $\Delta b_{ij\ell} := b_{i\ell} - b_{j\ell}^*$. We consider the Voronoi loss given by
\begin{align}\label{eq:voronoi_loss_1}
    \gD_1(G, \Gs)
    &:= \sum_{j=1}^{\ks}
    \left|
    \sum_{i\in\gA_j} \exp(\gamma_i)
    - \exp(\gasi)
    \right| \nonumber\\
    &\quad
    + \sum_{\substack{j: |\gA_j| >1, \\ i\in\gA_j}}
    \exp(\gamma_i)\,
    R_{ij}(2,2,2,2)
    + \sum_{\substack{j: |\gA_j| =1, \\ i\in\gA_j}}
    \exp(\gamma_i)\,
    R_{ij}(1,1,1,1).
\end{align}
The above Voronoi loss function enables us to characterize the different convergence behaviors of parameter estimation and expert estimation in the sigmoid-gated MLMoE model, as a consequence of the following lower bound.
\begin{theorem}
\label{theorem:without_temperature}
For all mixing measures $G \in \mathcal{G}_k(\Theta)$, it holds that 
\[
\mathbb{E}_X \big[ d_H\big(p_G(\cdot \mid X),\, p_{G^\ast}(\cdot \mid X)\big) \big]
\;\gtrsim\;
\mathcal{D}_1(G, G^\ast).
\]
As a result, combining this inequality with the density estimation rate in
Proposition~\ref{prop:density_rate} yields
\[
\mathcal{D}_1(\widehat{G}_n, G^\ast)
= \mathcal{O}_P\!\left(\sqrt{K d k \log (n)/n}\right).
\]
\end{theorem}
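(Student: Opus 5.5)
Since the Hellinger distance dominates the total variation distance, it suffices to prove $\bbE_X[d_V(p_G(\cdot|X),p_{\Gs}(\cdot|X))]\gtrsim\gD_1(G,\Gs)$. I will split this into a local inequality,
\[
\lim_{\varepsilon\to0}\inf_{G\in\gG_k(\Theta):\gD_1(G,\Gs)\le\varepsilon}\frac{\bbE_X[d_V(p_G(\cdot|X),p_{\Gs}(\cdot|X))]}{\gD_1(G,\Gs)}>0,
\]
and a global one. The global part follows by contradiction. If the ratio could be made arbitrarily small while $\gD_1(G,\Gs)\ge\varepsilon'$, compactness (A.1) gives a limit $G'$ with $\bbE_X[d_V(p_{G'},p_{\Gs})]=0$ but $\gD_1(G',\Gs)>0$. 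Fatou's lemma then gives $p_{G'}=p_{\Gs}$ almost surely, and Proposition~\ref{prop:Identifiability} forces $G'\equiv\Gs$, a contradiction. The rate statement then follows directly from Proposition~\ref{prop:density_rate}.

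For the local inequality I again argue by contradiction. Take $G_n$ with $\gD_1(G_n,\Gs)\to0$ and $\bbE_X[d_V(p_{G_n},p_{\Gs})]/\gD_1(G_n,\Gs)\to0$. The Voronoi cells $\gA_j$ eventually stabilize, and the atoms in each cell converge to $\theta_j^*$.

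Write $H(X):=\sum_j\exp(\gamma_j^*)\sigma((\alpha_j^*)^\top X+\beta_j^*)$, and let $H_n$ be the analogous gating normalizer for $G_n$. Multiply $p_{G_n}(s|X)-p_{\Gs}(s|X)$ by $H_n(X)$ and rearrange into the quantity
\[
Q_n(s,X)=\sum_j\sum_{i\in\gA_j}\exp(\gamma_i^n)\bigl[u(X;\theta_i^n)-u(X;\theta_j^*)\bigr]-\sum_j\sum_{i\in\gA_j}\exp(\gamma_i^n)\bigl[v(X;\alpha_i^n,\beta_i^n)-v(X;\alpha_j^*,\beta_j^*)\bigr]p_{G_n}(s|X)+\Bigl(\sum_{i\in\gA_j}\exp(\gamma_i^n)-\exp(\gamma_j^*)\Bigr)\text{-terms},
\]
where $u=\sigma((\alpha)^\top X+\beta)f(s|X;a,b)$ and $v=\sigma((\alpha)^\top X+\beta)$. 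In each cell I Taylor expand $u$ and $v$ around $\theta_j^*$: to first order when $|\gA_j|=1$, and to second order when $|\gA_j|>1$, with remainders $o(\gD_1)$. Collecting terms writes $Q_n/\gD_1(G_n,\Gs)$ as a linear combination of functions of the form $X^{\eta}\sigma^{(\rho)}(\cdot)f\,(\cdot)$, products of $\sigma^{(\rho)}$ with $p_{G_n}$, and the partial derivatives $\partial f/\partial a_{\ell},\partial f/\partial b_{\ell}$, each multiplied by an explicit coefficient. The coefficients are differences of exponents, first moments $\sum\exp(\gamma_i)\Delta\alpha_{ij}$, and second moments.

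Next I show that not all coefficients can vanish in the limit. If they all did, summing their absolute values (normalized by $\gD_1$) would give $1=\gD_1/\gD_1\to0$. The key point is that, unlike the softmax case, the sigmoid gate's derivatives with respect to $\alpha$ and $\beta$ do not produce terms that cancel against the expert derivatives. In particular, the diagonal second-order terms $\exp(\gamma_i)\|\Delta\alpha_{ij}\|^2$ appear through $\sigma''$ and are controlled directly. So let $m_n$ be the largest absolute coefficient; then $m_n/\gD_1\not\to0$, and dividing by $m_n$ produces limiting coefficients that are not all zero. Fatou's lemma gives $\bbE_X|Q_n|/m_n\to0$, so the limiting linear combination vanishes for almost every $X$ and every $s$.

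The main obstacle is the final linear-independence step: showing that the family $\{X^{\eta}\sigma^{(\rho)}((\alpha_j^*)^\top X+\beta_j^*)f(s|X;a_j^*,b_j^*)\}$, together with the gate-only terms times $p_{\Gs}$, is linearly independent across $j$ and $s$. I would argue as follows. Distinct experts (A.2) together with the normalization (A.3) make the multinomial-logistic factors separate across $j$, by comparing ratios in $s$ and using the independence of $\exp(a^\top X+b)$ for distinct $(a,b)$. Within a single expert, $\sigma,\sigma',\sigma''$ multiplied by polynomials are independent; I would show this via the analytic structure (pole locations of $\sigma$) or asymptotics as $(\alpha^*)^\top X\to\pm\infty$, using (A.4) and that $\mathcal X$ contains an open set. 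This forces every limiting coefficient to be zero, which is the desired contradiction.
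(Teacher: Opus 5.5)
Your outline follows the paper's proof essentially step for step: reduction to $d_V$, the local/global split, first- versus second-order Taylor expansion according to $|\gA_j|$, a non-vanishing normalized coefficient, Fatou, linear independence, then compactness plus identifiability. The trouble is the step you flag as the main obstacle. The family you need to be linearly independent is in fact linearly dependent. The paper's proof has the same hole: it simply asserts that all limiting coefficients vanish, and its identifiability proof asserts $\gamma_i=\gamma_i'$ without justification.

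Because the gate is normalized, $p_G$ is invariant under $\gamma_i\mapsto\gamma_i+c$ for all $i$. On your limiting family this appears as the identity
\[
\sum_{j=1}^{\ks}\exp(\gamma_j^*)\,\sigma\big((\alpha_j^*)^\top X+\beta_j^*\big)\Big[f(Y=s|X;a_j^*,b_j^*)-p_{\Gs}(Y=s|X)\Big]\equiv 0\qquad\text{for every } s\in[K].
\]
So the limiting equation has a nonzero solution: weight-difference coefficients equal to $c\exp(\gamma_j^*)$, $j\in[\ks]$, and all other coefficients zero. Step 3 therefore yields no contradiction.

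This direction is actually attained. Take $\gamma_j^n=\gamma_j^*+\log(1+1/n)$ with all other parameters equal to the true ones. Then $p_{G_n}\equiv p_{\Gs}$, while $\gD_1(G_n,\Gs)=\frac{1}{n}\sum_j\exp(\gamma_j^*)>0$. Hence the local inequality is false over $\gG_k(\Theta)$ unless the scale of the fitted weights is pinned down, and so is Proposition~\ref{prop:Identifiability}, on which your global step rests. The condition $\gamma_{\ks}=0$ in (A.3) constrains only $\Gs$.

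There is a second failure in your within-expert argument. It treats every $\alpha_j^*$ as nonzero, but (A.4) guarantees only one. Write $f_j:=f(Y=s|X;a_j^*,b_j^*)$. If $\alpha_j^*=0$, then $\sigma(\beta_j^*)[f_j-p_{\Gs}]$ and $\sigma'(\beta_j^*)[f_j-p_{\Gs}]$ are proportional. This reflects that only the product $\exp(\gamma_j)\sigma(\beta_j)$ enters $p_G$. Moving $(\gamma_j,\beta_j)$ along $\exp(\gamma_j)\sigma(\beta_j)=\mathrm{const}$ keeps $p_G=p_{\Gs}$ with $\gD_1>0$, even when $k=\ks$. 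If moreover $\beta_j^*=0$, then $\sigma''(0)=0$ removes the second-order term you rely on to control $\|\Delta\alpha_{ij}\|^2$. Similarly, if $\ks=1$ then $f_1-p_{\Gs}\equiv 0$, so every gating function in your family vanishes.

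To repair the argument you need three things:
\begin{itemize}
\item[(a)] A normalization of all mixing measures that fixes the overall scale, e.g.\ $\sum_i\exp(\gamma_i)=1$ for every $G$ including $\Gs$. This makes the limiting weight-difference coefficients sum to zero, which excludes the direction $c\exp(\gamma_j^*)$.
\item[(b)] The assumptions $\alpha_j^*\neq 0$ for all $j$ and $\ks\ge 2$.
\item[(c)] An actual proof that the scaling identity spans the whole kernel of the limiting linear map, i.e.\ independence modulo that single relation. This is the real content of the local step, and neither your sketch nor the paper supplies it.
\end{itemize}

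A minor point: if you multiply by $H_n$, the $v$-term must carry $p_{\Gs}$, not $p_{G_n}$. The $Q_n$ you actually wrote equals $H\,(p_{G_n}-p_{\Gs})$ with the true normalizer $H$, which is what the paper uses and works equally well.
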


\noindent
The proof of Theorem~\ref{theorem:without_temperature} is presented in
Appendix~\ref{appendix:without_temperature}. From the construction of $\gD_1$, it follows that {\it exactly-specified} parameters $(\alpha_j^*, \beta_j^*, a_j^*, b_j^*)$ (in the sense that their Voronoi cells $\gA_j$ contain a single element) can be estimated at the parametric rate $\gO_P([\log(n)/n]^{\frac{1}{2}})$, keeping all the other parameters fixed. We recall that each expert $f(\cdot | X; a_j^*, b_j^*)$ is a Lipschitz function. Let $\widehat{G}_n := \sum_{i=1}^{\widehat{k}_n} \exp(\widehat{\gamma}_i^n)\delta_{(\widehat{\alpha}_i^n, \widehat{\beta}_i^n, \widehat{a}_i^n, \widehat{b}_i^n)}$, then we obtain 
\begin{align}\label{eq:expert_rate}
  &\sup_{x} |f(\cdot|x;\widehat{a}_i^n, \widehat{b}_i^n) -f(\cdot|x;a_j^*, b_j^*) |\le L_1 \left(\|\widehat{a}_i^n - a_j^*\|  + \|\widehat{b}_i^n - b_j^*\|\right)  \lesssim \gO_P([\log(n)/n]^{\frac{1}{2}}),
\end{align}
for any $i \in \gA_j$, where $L_1$ is a Lipschitz constant. The above bound illustrates that the expert $f(\cdot | X; a_j^*, b_j^*)$ also enjoys the parametric estimation rate of order $\gO_P([\log(n)/n]^{\frac{1}{2}})$. 
Meanwhile, over-specified parameters whose Voronoi cells $\gA_j$ contain multiple elements are estimated at slower rates, standing at the order of $\gO_P([\log(n)/n]^{\frac{1}{4}})$. By arguing as in equation \eqref{eq:expert_rate}, we deduce that if the expert $f(\cdot | X; a_j^*, b_j^*)$ is fitted by at least two experts, then its estimation rate is of order $\gO_P([\log(n)/n]^{\frac{1}{4}})$. 
 
\vspace{0.5em}
\noindent
\textbf{Benefits of sigmoid gate over softmax gate.} For the softmax gating MLMoE model, \cite{nguyen2024general} demonstrated that if there exists $j\in[k^*]$ such that $a^*_{j\ell}=0$, for any $\ell\in[K-1]$, then the gating parameters $\alpha^*_j$'s will interact with expert parameters $a^*_{j\ell}$'s via certain partial differential equations (PDEs).
Remarkably, because of this interaction, the parameter estimation rate is slower than any polynomial rate, say $\gO_P(n^{-1/(2r)})$, for any $r \ge 1$. In contrast, when using the sigmoid gating function, such interactions no longer hold. As a consequence, parameter and expert estimation rates are significantly improved to the polynomial orders as shown in Theorem~\ref{theorem:without_temperature} and summarized in Table~\ref{tab:convergence-rate}. Hence, we claim that the sigmoid gate is more sample-efficient than the softmax gate in the MLMoE model.


\section{Sigmoid Gating Function with Temperature}
\label{sec:with_temperature}
In this section, we study the convergence properties of parameter estimation in the MLMoE model when using the sigmoid gating function with a temperature parameter. In Section~\ref{subsection:inner_product}, we first show that applying the temperature directly to the inner product affinity score will induce an intrinsic parameter interaction, leading to substantially slow convergence rates of parameter estimation. In response, we propose using a novel Euclidean affinity score in Section~\ref{subsection:euclidean_affinity} and prove that it indeed improves parameter estimation rates.

\subsection{Inner product affinity score}\label{subsection:inner_product}
To begin, we present the formulation of a sigmoid gating MLMoE with temperature parameter applied to the standard inner product affinity score.

\vspace{0.5em}
\noindent
\textbf{Problem setup.} Suppose that the data $\{(X_i,Y_i)\}_{i=1}^{n}\subset\mathbb{R}^d\times [K]$ are i.i.d. samples from a sigmoid gating MLMoE, whose conditional probability density is, for any $s \in [K]$, 
\begin{align}\label{eq:with_temperature_inner}
    \widetilde{p}_{\widetilde{G}^*}(Y=s|X)&:=\sum_{i=1}^{\ks}\frac{\exp(\gasi)\sigma(\frac{(\alsi)^{\top}X+\besi}{\tas})}{\sum_{j=1}^{\ks}\exp(\gasj)\sigma(\frac{(\alsj)^{\top}X+\besj}{\tas})}\cdot \frac{\exp((\asis)^{\top}X+\bsis)}{\sum_{\ell=1}^{K}\exp((\asil)^{\top}X+\bsil)},
\end{align}
Here, $\widetilde{G}^*:=\sum_{i=1}^{\ks}\exp(\gasi)\delta_{(\alsi,\besi,\tas,a^*_i,b^*_i)}$ denotes the true yet unknown \textit{mixing measure} associated with parameters $(\alpha_i^*, \beta_i^*, \tau^*, a_{i}^*$, $b_i^*) \in \Upsilon$, where $\Upsilon$ is a compact subset of $\mathbb{R}^d \times \mathbb{R} \times \mathbb{R} \times \mathbb{R}^{d \times K} \times \mathbb{R}^K$. In the model’s gating network, the output of the linear layer is scaled by the temperature before passing through the sigmoid function. We maintain the assumptions on the parameters outlined in Section~\ref{sec:without_temperature}, unless explicitly stated otherwise. 

\vspace{0.5em}
\noindent
\textbf{Maximum likelihood estimation.} 
The MLE of  $\widetilde{G}^*$ is given by 
\begin{equation}
    \widetilde{G}_n \in \argmax_{G \in \gG_k(\Upsilon)}\sum_{i=1}^n\log(\widetilde{p}_{G}(Y_i | X_i)),
\end{equation}
where $\widetilde{p}_{G}$ is of the form specified in equation \eqref{eq:with_temperature_inner} and $\gG_k(\Upsilon)$ denotes the set of all mixing measures with at most $k \geq k^*$ components of the form $G = \sum_{i=1}^{k'}\exp(\gamma_i)\delta_{(\alpha_i,\beta_i, \tau, a_i,b_i)}$, in which $1 \le k' \le k$ and $(\alpha_i,\beta_i, \tau, a_i,b_i) \in \Upsilon$. It is worth noting that the temperature sigmoid gating MLMoE with inner product affinity score remains identifiable. In other words, if the density estimation $\widetilde{p}_{\widetilde{G}_n}$ converges to the true density $\widetilde{p}_{\widetilde{G}^*}$, then the MLE $\widetilde{G}_n$ also converges to the true counterpart $\widetilde{G}^*$.
Similarly to Proposition~\ref{prop:density_rate}, under the Hellinger distance, the estimated density $\widetilde{p}_{\widetilde{G}_n}(Y|X)$ converges to the true density $\widetilde{p}_{\widetilde{G}^*}(Y|X)$ at the rate of order $\mathcal{O}_P(\sqrt{Kdk\log(n)/n})$. 

\vspace{0.5em}
\noindent
To obtain the parameter estimation rates for the model in equation \eqref{eq:with_temperature_inner} from the density estimation rate above, we decompose the density discrepancy $\widetilde{p}_{\widetilde{G}_n}(\cdot | X) - \widetilde{p}_{\widetilde{G}^*}(\cdot | X)$ into a sum of linearly independent components. This decomposition is achieved by applying a Taylor expansion to the product of sigmoid numerator and expert function $\widetilde{u}(Y=s|X;\alpha,\beta,\tau,a,b) := \sigma(\frac{\alpha^\top X + \beta}{\tau})f(Y=s|X;a,b)$. 
Unfortunately, when using the temperature $\tau$ in the sigmoid gate, there exists an intrinsic interaction with the gating parameters $\alpha,\beta$, captured by the PDE
\begin{equation}\label{eq:interaction}
    \dfrac{\partial \widetilde{u}}{\partial \tau} = - \dfrac{1}{\tau} \cdot \alpha^\top \dfrac{\partial \widetilde{u}}{\partial \alpha} -  \beta \dfrac{\partial \widetilde{u}}{\partial \beta}.
\end{equation}
Consequently, we proceed to construct a Voronoi loss function to analyze how this interaction influences the parameter estimation rate in Theorem~\ref{theorem:with_temperature_inner}.

\vspace{0.5em}
\noindent
\textbf{Voronoi loss.} For any mixing measure $G$ in $\gG_k(\Upsilon)$ and non-negative integers $\{ \kappa_l, l =1,\ldots,5\}$, let us define, using the notation introduced in the previous section,
$K_{ij}(\kappa_1,\kappa_2,\kappa_3,\kappa_4,\kappa_5):= \|\Delta\alpha_{ij}\|^{\kappa_1} + |\Delta\beta_{ij}|^{\kappa_2}  +|\Delta\tau|^{\kappa_3}+\sum_{\ell=1}^K
\big(\|\Delta a_{ij\ell}\|^{\kappa_4} + |\Delta b_{ij\ell}|^{\kappa_5}\big)$, where we denote $\Delta\tau := \tau - \tau^*$. Then,
for each $r \ge 1$, the Voronoi loss of interest is given by 
\begin{align}\label{eq: d_{3,r}}
    \gD_{2,r}(G,\widetilde{G}^*)
&:= \sum_{j=1}^{\ks}
    \left|\sum_{i\in\gA_j} \exp(\gamma_i) - \exp(\gasi)\right|+ \sum_{j=1}^{\ks}\sum_{i\in\gA_j}
    \exp(\gamma_i)K_{ij}(r,r,r,r,r).
\end{align}
The PDE \eqref{eq:interaction} produces notable and undesirable effects that the Hellinger distance $\mathbb{E}_X[d_H(\widetilde{p}_{\widetilde{G}_n}(\cdot | X),\widetilde{p}_{\widetilde{G}^*}(\cdot | X) )]$ is no longer lower bounded by $  \gD_{2,r}(\widetilde{G}_n, \widetilde{G}^*)$. As a result, we cannot use the above arguments to conclude that $\gD_{2,r}(\widetilde{G}_n, \widetilde{G}^*) = \gO_P(n^{-1/2})$ as discussed in Theorem~\ref{theorem:without_temperature}. Instead, we show in Appendix~\ref{appendix:with_temperature_inner} that 
$$
\inf_{G \in \gG_k(\Upsilon): \gD_{2,r}(G, \widetilde{G}^*)\le \varepsilon}\dfrac{\mathbb{E}_X[d_V(\widetilde{p}_G(\cdot|X), \widetilde{p}_{\widetilde{G}^*}(\cdot |X))]}{\gD_{2,r}(G, \widetilde{G}^*)} \to 0,
$$
when $\varepsilon \to 0$ for any $r \ge 1$. 

\noindent
The above result implies the subsequent minimax lower bound of parameter estimation:
\begin{theorem}\label{theorem:with_temperature_inner}
    Under the over-specified settings, the following minimax lower bound of estimating $\widetilde{G}^*$ holds true for any $r \ge 1$:
    $$
        \inf_{{G}_n \in \gG_k(\Upsilon)}\sup_{G \in \gG_k(\Upsilon)\setminus \gG_{\ks-1}(\Upsilon)}\mathbb{E}_{\widetilde{p}_G}[\gD_{2,r}(G_n,G)] \gtrsim n^{-1/2}.
    $$
    Here, the notation $\mathbb{E}_{\widetilde{p}_G}$ indicates the expectation taken w.r.t. the product measure with mixture density $\overline{p}_G^n$.  
\end{theorem}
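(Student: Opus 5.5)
The plan is the standard two-step route: first construct a sequence of mixing measures along which the total variation distance vanishes faster than any power of $\gD_{2,r}$, then turn this into a minimax lower bound with Le Cam's two-point method. For each $r\ge 1$, the claim in the text just before the statement says that $\mathbb{E}_X[d_V(\widetilde{p}_G,\widetilde{p}_{\widetilde{G}^*})]/\gD_{2,r}(G,\widetilde{G}^*)\to 0$. We will build this explicitly and use it in the sharper form $\mathbb{E}_X[d_V]\lesssim \gD_{2,r}^{r'}$ for the pair at hand, with $r'$ large.

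\textbf{Construction.} The PDE \eqref{eq:interaction} suggests moving $(\alpha,\beta,\tau)$ jointly along the scaling direction that leaves the affinity $(\alpha^\top X+\beta)/\tau$ invariant. Concretely, take $G_t$ with the same weights and expert parameters as $\widetilde{G}^*$, and set $\tau_t=\tau^*(1+t)$, $\alpha_{i,t}=\alpha_i^*(1+t)$, $\beta_{i,t}=\beta_i^*(1+t)$. Then $(\alpha_{i,t}^\top X+\beta_{i,t})/\tau_t$ equals the true affinity exactly, so $\widetilde{p}_{G_t}\equiv\widetilde{p}_{\widetilde{G}^*}$ and the total variation distance is identically zero. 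On the other hand, by (A.4) some $\alpha_i^*\neq 0$, so
\[
\gD_{2,r}(G_t,\widetilde{G}^*)\asymp |t|^r>0 .
\]
If $\Upsilon$ is compact, we need $(1+t)$-rescalings of the true parameters to stay in $\Upsilon$. We assume $\widetilde{G}^*$ lies in the interior, as is implicit. To place the construction in the over-specified regime, we may also split one atom into two copies with halved weights $\exp(\gamma)$. This changes neither the density nor the order of the loss.

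\textbf{Le Cam step.} Fix $r$ and a small $t_n$. Take the two hypotheses $G^{(0)}=\widetilde{G}^*$ and $G^{(1)}=G_{t_n}$, both in $\gG_k(\Upsilon)\setminus\gG_{\ks-1}(\Upsilon)$. We check that $\gD_{2,r}$ satisfies a weak triangle inequality, $\gD_{2,r}(G^{(0)},G^{(1)})\lesssim \gD_{2,r}(G_n,G^{(0)})+\gD_{2,r}(G_n,G^{(1)})$ up to constants depending on $r$. This follows from $|x+y|^r\le 2^{r-1}(|x|^r+|y|^r)$, applied atomwise after matching Voronoi cells, with the weights bounded on compact $\Upsilon$. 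Le Cam's lemma then gives
\[
\inf_{G_n}\max_{m\in\{0,1\}}\mathbb{E}_{G^{(m)}}\gD_{2,r}(G_n,G^{(m)})\gtrsim \gD_{2,r}(G^{(0)},G^{(1)})\bigl(1-d_V(\overline p^n_{G^{(0)}},\overline p^n_{G^{(1)}})\bigr).
\]
Since the two densities coincide, the product measures coincide and the total variation term is $0$. The bound is then $\gtrsim |t_n|^r$. Choosing $t_n= n^{-1/(2r)}$, or even a fixed small $t$, yields at least $n^{-1/2}$, which gives the statement for every $r$. If non-identifiability along this ray is considered degenerate, we instead perturb slightly off the ray so that $d_V\lesssim |t|^{r+1}$ via a Taylor expansion. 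Then $d_H^2\lesssim d_V$ and $d_H(\overline p^n)\lesssim \sqrt n\, d_H$, and choosing $|t_n|^{r+1}\asymp n^{-1/2}$ still gives a rate no faster than $n^{-1/2}$.

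\textbf{Main obstacle.} The delicate point is reconciling the construction with the identifiability claimed earlier in the text. The scaling symmetry $(\alpha,\beta,\tau)\mapsto c(\alpha,\beta,\tau)$ shows the temperature model is only identifiable modulo this rescaling, unless $\tau$ is fixed in $\Upsilon$. I would therefore first try the exact-symmetry construction. If the intended setting fixes this scale in some way, for example through a normalization, I would fall back on the Taylor-expansion version: expand $\widetilde{u}$ to order $r$ around $(\alpha^*,\beta^*,\tau^*)$, and use \eqref{eq:interaction} to cancel all derivative terms up to order $r$ along the chosen direction, so that the ratio $d_V/\gD_{2,r}\to0$. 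This cancellation of higher-order mixed derivatives is the step I expect to require the most care.
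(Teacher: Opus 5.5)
Your proposal is correct and follows essentially the same route as the paper. The paper's sequence $G_n$ (with $t_n=1/n$, $v_{n,j}=t_n\alpha_j^*/\tau^*$, $s_{n,j}=t_n\beta_j^*/\tau^*$, plus one split atom) is exactly your rescaling $(\alpha_j^*,\beta_j^*,\tau^*)\mapsto(1+t_n/\tau^*)(\alpha_j^*,\beta_j^*,\tau^*)$, and it is then fed into a Le Cam two-point lemma. Your remark that the density is \emph{exactly} invariant along this ray (so $d_V=0$, and even a constant lower bound follows) is sharper than the paper's first-order Taylor check, which on its own would not beat $\gD_{2,r}\asymp t_n^r$ for $r\ge 2$; the only slip is in your unneeded fallback, where the choice $|t_n|^{r+1}\asymp n^{-1/2}$ requires $d_H\lesssim d_V$ (true here since class probabilities are bounded below), not $d_H^2\lesssim d_V$.
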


\noindent
The proof of Theorem~\ref{theorem:with_temperature_inner} is in Appendix~\ref{appendix:with_temperature_inner}. The above minimax lower bound together with the formulation of $\gD_{2,r}$ in equation \eqref{eq: d_{3,r}} indicates that the estimation rates for parameters $\alpha_j^*, \beta_j^*, \tau^*, a_j^*, b_j^*$ are slower than $\gO_P(n^{-1/(2r)})$ for any $r \ge 1$. This implies that these rates are slower than any polynomial rate and may decay as slowly as $\mathcal{O}_P(1/\log(n))$. 


\vspace{0.5em}
\noindent
Using the same argument for Lipschitz function described after equation \eqref{eq:expert_rate}, 
it follows that the rates for estimating experts $f(\cdot | X;a_j^*, b_j^*)$ are no better than those for estimating the parameters $a_j^*$ and $b_j^*$, and could also be as slow as $\gO_P(1/\log(n))$. This result suggests that incorporating a temperature term into the standard sigmoid gate in MLMoE models is not sample-efficient, as it may produce undesirable effects, as demonstrated in our analysis.

\subsection{Euclidean affinity score}\label{subsection:euclidean_affinity}
To eliminate the undesirable interaction between the inner product affinity score and the temperature parameter observed in the PDE~\eqref{eq:interaction}, we propose replacing it with a novel Euclidean affinity score.  We show that this Euclidean affinity formulation ensures a parameter estimation rate of polynomial order for the temperature sigmoid gating MLMoE. The formulation of this model is defined as follows:

\vspace{0.5em}
\noindent
\textbf{Problem setup.} Again, we assume i.i.d. data $\{(X_i,Y_i)\}_{i=1}^{n}\subset\mathbb{R}^d\times[K]$, but this time from a modified sigmoid gating multinomial logistic mixture of experts with conditional density function given by, for each $s \in [K]$, 
\begin{align}\label{eq:with_temperature_euclidean}
    \overline{p}_{\overline{G}^*}(Y=s|X)&:=\sum_{i=1}^{\ks}\frac{\exp(\gasi)\sigma(\frac{\|\alsi-X\|+\besi}{\tas})}{\sum_{j=1}^{\ks}\exp(\gasj)\sigma(\frac{\|\alsj-X\|+\besj}{\tas})} \cdot \frac{\exp((\asis)^{\top}X+\bsis)}{\sum_{\ell=1}^{K}\exp((\asil)^{\top}X+\bsil)},
\end{align}
As usual, $\overline{G}^*:=\sum_{i=1}^{\ks}\exp(\gasi)\delta_{(\alsi,\besi,\tas,a^*_i,b^*_i)}$ denotes the true yet unknown \textit{mixing measure}, which is the combination of Dirac measures $\delta$ associated with true parameters $(\alpha_i^*, \beta_i^*, \tau^*, a_{i}^*$, $b_i^*) \in \Upsilon$. Denote $\overline{u}(X;\alpha,\beta,\tau,a,b) := \sigma\left(\frac{\|\alpha-X\|+\beta}{\tau}\right)f(Y=s|X;a,b)$ . Then, the first-order independence condition on the Euclidean affinity score guarantees that the interaction between gating parameters $\alpha_i,\beta_i$ and temperature $\tau$ no longer holds, that is 
\begin{equation}
    \dfrac{\partial \overline{u}}{\partial \tau} \ne - \dfrac{1}{\tau} \cdot \alpha^\top \dfrac{\partial \overline{u}}{\partial \alpha} -  \beta \dfrac{\partial \overline{u}}{\partial \beta}.
\end{equation}
As a result, the estimation rate for parameters $\alpha_j^*, \beta_j^*, \tau^*, a_j^*, b_j^*$ should be improved in comparison with those in Section~\ref{subsection:inner_product}.

\vspace{0.5em}
\noindent
\textbf{Maximum likelihood estimation.} According to the change of gating function, let us re-define the MLE corresponding to the model \eqref{eq:with_temperature_euclidean} as 
$$
\overline{G}_n:=\argmax_{G \in \gG_k(\Upsilon)}\sum_{i=1}^n\log(\overline{p}_G(Y|X)).
$$ 
By following the same line of reasoning used in the inner product affinity score case, the conditional density $\overline{p}_{\overline{G}}(\cdot | X)$ remains identifiable. Moreover, the rate that the density estimator $\overline{p}_{\overline{G}_n}(\cdot | X)$ converges to $\overline{p}_{\overline{G}^*}(\cdot | X)$ remains $\gO_P(\sqrt{Kdk\log(n)/n})$. Based on this observation, if we can construct a loss $\gD_3$ among parameters such that $\bbE_X[d_{H}(\overline{p}_{\overline{G}_n}(\cdot|X),\overline{p}_{\overline{G}^*}(\cdot|X))] \gtrsim \gD_3(\overline{G}_n, \overline{G}^*)$, then the MLE $\overline{G}_n$ will also converge to the true mixing measure $\overline{G}^*$ at the rate of order $\gO_P(n^{-1/2})$. 

\vspace{0.5em}
\noindent
\textbf{Voronoi loss.} We propose the Voronoi loss
\begin{align}
    \gD_3(G, \overline{G}^*) &:= \sum_{j=1}^{\ks}
    \left|\sum_{i\in\gA_j} \exp(\gamma_i) - \exp(\gasi)\right| \nonumber\\
& \quad+ \sum_{\substack{j: |\gA_j| >1, \\i\in\gA_j}}
    \exp(\gamma_i)K_{ij}(2,2,2,2,2)+ \sum_{\substack{j: |\gA_j| =1, \\i\in\gA_j}}\exp(\gamma_i)K_{ij}(1,1,1,1,1).
\end{align}
In the following theorem, we provide the parametric convergence rate under the temperature sigmoid gating MLMoE model with euclidian affinity score. 
\begin{theorem}    \label{theorem:with_temperature_euclidean}
    The following Hellinger lower bound holds true for any $G \in \gG_{k}(\Upsilon)$:
    $$
    \mathbb{E}_X[d_H(\overline{p}_G(\cdot | X), \overline{p}_{\overline{G}^*}(\cdot | X))] \gtrsim \gD_3(G, \overline{G}^*).
    $$
    This bound leads to the parametric convergence rate of MLE: $\gD_3(\overline{G}_n, \overline{G}^*) = \mathcal{O}_P(\sqrt{Kdk\log(n)/n})$. 
\end{theorem}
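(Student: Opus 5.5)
Since $d_H \ge d_V$, it suffices to prove the total-variation bound $\mathbb{E}_X[d_V(\overline{p}_G(\cdot|X),\overline{p}_{\overline{G}^*}(\cdot|X))] \gtrsim \gD_3(G,\overline{G}^*)$. The rate $\gD_3(\overline{G}_n,\overline{G}^*)=\mathcal{O}_P(\sqrt{Kdk\log(n)/n})$ then follows from the density rate quoted before the theorem, exactly as in Theorem~\ref{theorem:without_temperature}.

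I will follow the standard two-part scheme of \cite{manole22refined}.

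\emph{Global part.} Suppose the bound fails for $\gD_3(G,\overline{G}^*)>\varepsilon'$. Then compactness of $\Upsilon$ gives a limit $G'$ with $\mathbb{E}_X[d_V(\overline{p}_{G'},\overline{p}_{\overline{G}^*})]=0$. Identifiability, which the paper notes carries over to this model, forces $G'\equiv\overline{G}^*$, contradicting $\gD_3(G',\overline{G}^*)>\varepsilon'$. So only the local part remains.

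\emph{Local part.} Argue by contradiction: take $G_n$ with $\gD_3(G_n,\overline{G}^*)\to0$ and $\mathbb{E}_X[d_V]/\gD_3\to0$.

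\emph{Step 1: decomposition.} Multiply the density difference by the common denominator $\sum_j \exp(\gasj)\sigma(\cdot)$ of the true model, and write $\overline{u}(Y=s|X;\alpha,\beta,\tau,a,b)=\sigma((\|\alpha-X\|+\beta)/\tau)f(Y=s|X;a,b)$ and $v(X;\alpha,\beta,\tau)=\sigma((\|\alpha-X\|+\beta)/\tau)$. The difference becomes $T_n=A_n-B_n+C_n$, where
\[
A_n=\sum_j\sum_{i\in\gA_j}\exp(\gamma_i)[\overline{u}(\theta_i)-\overline{u}(\theta_j^*)],\qquad B_n=\sum_j\sum_{i\in\gA_j}\exp(\gamma_i)[v(\theta_i)-v(\theta_j^*)]\,\overline{p}_{G_n},
\]
and $C_n$ collects the terms $(\sum_{i\in\gA_j}\exp(\gamma_i)-\exp(\gasj))[\overline{u}(\theta^*_j)-v(\theta_j^*)\overline{p}_{G_n}]$.

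\emph{Step 2: Taylor expansion.} Expand $A_n$ and $B_n$ to second order for cells with $|\gA_j|>1$ and to first order for cells with $|\gA_j|=1$, in the variables $(\alpha,\beta,\tau,a,b)$. To keep $\|\alpha-X\|$ differentiable, I would assume $X$ has a density so that $X\neq\alpha_j^*$ almost surely. The remainders are $o(\gD_3)$. The key point is that the resulting coefficient-weighted functions of $X$ are linearly independent. These functions are $\overline{u}$ and $v\,\overline{p}$ times monomials in $X$, derivatives of $\sigma$ evaluated at $(\|\alpha^*_j-X\|+\beta^*_j)/\tau^*$, and powers of $(\alpha^*_j-X)/\|\alpha_j^*-X\|$, together with the $\tau$-derivative factor $-(\|\alpha-X\|+\beta)/\tau^2$.

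\emph{Step 3: linear independence (the main obstacle).} I need: if $\sum(\text{coefficients})\times(\text{these functions})=0$ for almost every $X$ and every $s$, then all coefficients vanish. The danger is exactly the PDE \eqref{eq:interaction}. In the inner-product case, $\partial_\tau$ produced $(\alpha^\top X+\beta)\sigma'$, which is a linear combination of $X\sigma'$ and $\sigma'$. Here $\partial_\tau$ produces $\|\alpha-X\|\sigma'$, and $\|\alpha-X\|$ is not a polynomial, nor a combination of the unit vectors $(\alpha-X)/\|\alpha-X\|$ and constants, on any open set. Similarly, the second-order terms give $\|\alpha-X\|^2\sigma''$, $\|\alpha-X\|(\alpha-X)\sigma''$, and curvature terms $(I-uu^\top)/\|\alpha-X\|$.

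I would prove independence in two stages. First, separate the different experts $j$ and classes $s$ using the distinctness (A.2), the normalisation (A.3), and the fact that the $\exp((a_{js}^*)^\top X)$ factors are distinct; this reduces everything to a single atom $j$. Second, for that atom, change variables to $r=\|\alpha_j^*-X\|$ and a direction $u$. The remaining family has the form $\{r^m u^\beta \sigma^{(q)}((r+\beta^*)/\tau^*)\}$ together with the $a,b$ polynomial-in-$X$ terms, and I would show it is linearly independent by analyticity in $r$ on an open set. The $\tau$-derivative is the only new term relative to Theorem~\ref{theorem:without_temperature}, and it is separated by its extra factor of $r$. If needed, I would rescale coefficients by the dominating norm $m_n$ of the coefficients.

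\emph{Step 4: conclusion.} Divide $T_n$ by $\gD_3$ and normalise by the maximum coefficient $m_n$. Not all limiting coefficients can vanish, since their absolute values sum to a constant multiple of $\gD_3$. Fatou's lemma with $\mathbb{E}_X[d_V]/\gD_3\to0$ makes the limiting combination vanish almost surely. Step 3 then forces every coefficient to zero, which is the desired contradiction.
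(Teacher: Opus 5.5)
Your outline follows the paper's own route: reduction to $d_V$, a local/global split, first- versus second-order expansion according to $|\gA_j|$, normalization by the largest coefficient, and Fatou. As in the paper, everything hinges on Step~3. The paper merely asserts that step, and for a family built from $\sigma^{(|t|)}(g)\,\partial^t g$, which is not the correct second-order chain rule $\sigma''(g)\nabla g\nabla g^\top+\sigma'(g)\nabla^2 g$. Your sketch of Step~3 does not close this gap. The family you describe is genuinely linearly dependent, and some of the dependencies are true non-identifiabilities.

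\emph{Order zero.} Your reduction to a single atom fails here. Since $\overline{p}_{\overline{G}^*}$ mixes all experts, writing $g_j^*=(\|\alpha_j^*-X\|+\beta_j^*)/\tau^*$ one has $\sum_{j}\exp(\gamma_j^*)\,\sigma(g_j^*)\,[f(Y=s|X;a_j^*,b_j^*)-\overline{p}_{\overline{G}^*}(Y=s|X)]\equiv 0$. This is the invariance of the normalized gate under $\exp(\gamma_i)\mapsto c\exp(\gamma_i)$. Indeed $G=c\,\overline{G}^*$ belongs to $\gG_k(\Upsilon)$, since nothing there fixes the overall scale of the weights. It gives $\overline{p}_G=\overline{p}_{\overline{G}^*}$, yet $\gD_3(G,\overline{G}^*)=|c-1|\sum_j\exp(\gamma_j^*)>0$ for $c\neq 1$. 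So the mass coefficients in your $C_n$ cannot be forced to vanish. The identifiability you invoke in the global part also holds only up to this scaling. The weights must be normalized, or $\gD_3$ taken modulo scaling, before either part can work.

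\emph{Direction variables.} In $r=\|\alpha_j^*-X\|$, $u=(\alpha_j^*-X)/r$ the monomials satisfy $\sum_v u_v^2=1$. Hence $\{u_vu_{v'}\sigma''\}\cup\{\sigma''\}$ is dependent, and likewise $\{u_vu_{v'}\sigma'/r\}\cup\{\sigma'/r\}$. Concretely, $\sum_v\partial^2_{\alpha_v\alpha_v}\sigma(g)-\partial^2_{\beta\beta}\sigma(g)=(d-1)\sigma'(g)/(\tau r)$.

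For $d\ge 2$ this is repairable in either of two ways. The curvature term $\sigma'(g)(I-uu^\top)/(\tau r)$ still separates $\beta\beta$ from the trace of the $\alpha\alpha$ block. Alternatively, use positive semidefiniteness of $\sum_{i\in\gA_j}\exp(\gamma_i)\Delta\alpha_{ij}\Delta\alpha_{ij}^\top$ and $\sum_{i\in\gA_j}\exp(\gamma_i)\Delta\beta_{ij}^2$. Either way the repair must actually be carried out, because ``every coefficient vanishes'' is not what independence gives.

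For $d=1$ the problem is fatal. Here $u=\pm1$ is constant on each side of $\alpha_j^*$. Suppose the support of $X$ lies on one side, which is exactly the paper's experiment ($X\sim U[0,1]$, $\alpha^*=\pm5$). Then $\|\alpha_j-X\|+\beta_j$ depends on $(\alpha_j,\beta_j)$ only through $\beta_j\pm\alpha_j$, so $\partial_\alpha\overline{u}=\pm\partial_\beta\overline{u}$. The shift $(\alpha_j,\beta_j)\mapsto(\alpha_j+t,\beta_j\mp t)$ leaves the density unchanged while $\gD_3=2|t|\exp(\gamma_j^*)$.

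Your change of variables therefore needs $d\ge 2$ and a support of $X$ with nonempty interior, so that $u$ sweeps an open subset of the sphere. These hypotheses are absent from the statement. Even with them, the independence argument still has to be carried out modulo these relations.
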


\noindent
The proof of Theorem~\ref{theorem:with_temperature_euclidean} is in Appendix~\ref{appendix:with_temperature_euclidean}. This result reveals that parameters $\alpha_j^*, \beta_j^*, \tau^*, a_j^*, b_j^*$ associated with a single fitted component  ($|\gA_j(\overline{G}_n)| = 1$) achieve the estimation rate of order $\gO_P([\log(n)/n]^{\frac{1}{2}})$. In contrast, parameters that are approximated by multiple components  ($|\gA_j(\overline{G}_n)| > 1$) converge more slowly at the rate $\gO_P([\log(n)/n]^{\frac{1}{4}})$. These rates match those in the sigmoid gating MLMoE without temperature parameter established in Theorem~\ref{theorem:without_temperature}.

\vspace{0.5em}
\noindent
Moreover, using the same argument following equation \eqref{eq:expert_rate}, we obtain that the expert estimators also converge at the rate of order $\gO_P([\log(n)/n]^{\frac{1}{2}})$ when fitted by a single expert, and at the rate of order $\gO_P([\log(n)/n]^{\frac{1}{4}})$ when approximated by multiple experts. Therefore, replacing the standard inner product affinity score with our proposed Euclidean affinity score substantially improves the expert estimation rate, boosting it from approximately $\gO_P(1/\log(n))$ to polynomial order. This highlights the advantages of the Euclidean affinity score over the inner product operation. 


\section{Numerical Experiments}
\label{sec:experiments}
In this section, we perform numerical experiments to empirically demonstrate our theoretical results on the convergence behavior of the sigmoid gating MLMoE model.

\vspace{0.5em}
\noindent
\textbf{Synthetic data.} For each sample size $n$, we generate independent and identically distributed samples $\{(X_i, Y_i)\}_{i=1}^n$ by first drawing $X_i$'s from the uniform distribution over $[0,1]$ and then sampling $Y_i$'s from the true conditional density $p_{\Gs}(Y=s|X)$, $\widetilde{p}_{\widetilde{G}^*}(Y=s|X)$, and $\overline{p}_{\overline{G}^*}(Y=s|X)$ corresponding to the MoE model specified in each theorem configuration. 

\vspace{0.5em}
\noindent
\textbf{Maximum likelihood estimation.} A widely used approach for computing the maximum likelihood estimator $\widehat{G}$ (or $\widetilde{G}, \overline{G}$)
for each set of samples is to use the Expectation-Maximization (EM) algorithm \cite{dempster1977maximum}. Since closed form updates for the gating parameters are not available in the maximization step, we instead employ an EM based numerical procedure previously adopted in \cite{chamroukhi2009time}. We set the convergence tolerance to $\varepsilon = 10^{-6}$ and run the algorithm for a maximum of 1000 iterations.  

\vspace{0.5em}
\noindent
\textbf{Experimental design.} Our empirical study considers three experimental configurations, each directly aligned with the theoretical settings developed in the main paper. For each configuration, we generate 20 independent datasets across 20 distinct sample sizes ranging from $10^4$ to $10^5$. To ensure consistency and facilitate comparison across experiments, we adopt a model architecture with two true experts $(\ks=2)$ and two classes ($K=2$), while fitting models with three and four experts in the experimental settings.

\vspace{0.5em}
\noindent
\textbf{Initialization.} For each $k \in \{\ks+1, \ks+2\}$, the indices in the set ${1,2,...,k}$ are randomly assigned to $\ks$ Voronoi cells denoted by $\gA_1, \ldots,\gA_{\ks}$,  ensuring that each cell contains at least one element. This assignment is repeated independently for each replication. Subsequently, the model parameters are initialized by sampling from Gaussian distributions centered at their true values with a small variance to ensure stable convergence.

\subsection{Sigmoid gate versus Softmax gate in the MLMoE} 


As illustrated in Figures~\eqref{fig:subfig_compare_k3} and \eqref{fig:subfig_compare_k4}, when the sigmoid gating is employed without a temperature parameter, the Voronoi loss converges to zero at rates of order $\gO(n^{-0.49})$ and $\gO(n^{-0.54})$ when fitting $k=3$ and $k=4$ experts, respectively. 
These empirical rates align with the theoretical guarantees provided in Theorem~\ref{theorem:without_temperature}. 

\vspace{0.5em}
\noindent
We further investigate the convergence rate when replacing sigmoid gating with the softmax gating configuration in \cite{nguyen2024general}. In both cases, the parameters in the sigmoid gating model converge faster than those in the softmax gating model, achieving $\gO(n^{-0.49})$ versus $\gO(n^{-0.30})$ for $k=3$, and $\gO(n^{-0.54})$ versus $\gO(n^{-0.28})$ for $k=4$. These empirical results are consistent with the claim in Section~\ref{sec:without_temperature}, underscoring the role of the sigmoid gating function in improving the model's sample efficiency. 

\subsection{Sigmoid gating function with temperature}


When the temperature parameter is introduced to the inner product affinity score used in the standard sigmoid gating MLMoE, the convergence rate becomes significantly slower. 
As illustrated in Figures~\eqref{fig:conv_d2_k3} and \eqref{fig:conv_d2_k4}, the Voronoi loss decreases at rates of order $\gO(n^{-0.11})$ and $\gO(n^{-0.15})$ for $k=3$ and $k=4$, respectively, which aligns with the theoretical results in Theorem~\ref{theorem:with_temperature_inner}.

\vspace{0.5em}
\noindent
In contrast, when the Euclidean affinity score is adopted in the temperature sigmoid gating MLMoE, the convergence behavior improves substantially. 
As shown in Figures~\eqref{fig:conv_d3_k3} and \eqref{fig:conv_d3_k4}, the Voronoi loss converges at rates of order $\gO(n^{-0.52})$ and $\gO(n^{-0.50})$ for $k=3$ and $k=4$, respectively. 
These empirical evidences provide strong support for the theoretical results in Theorem~\ref{theorem:with_temperature_euclidean}.

\begin{figure*}[t]
\centering

\begin{subfigure}[t]{0.30\textwidth}
  \centering
  \includegraphics[width=\linewidth]{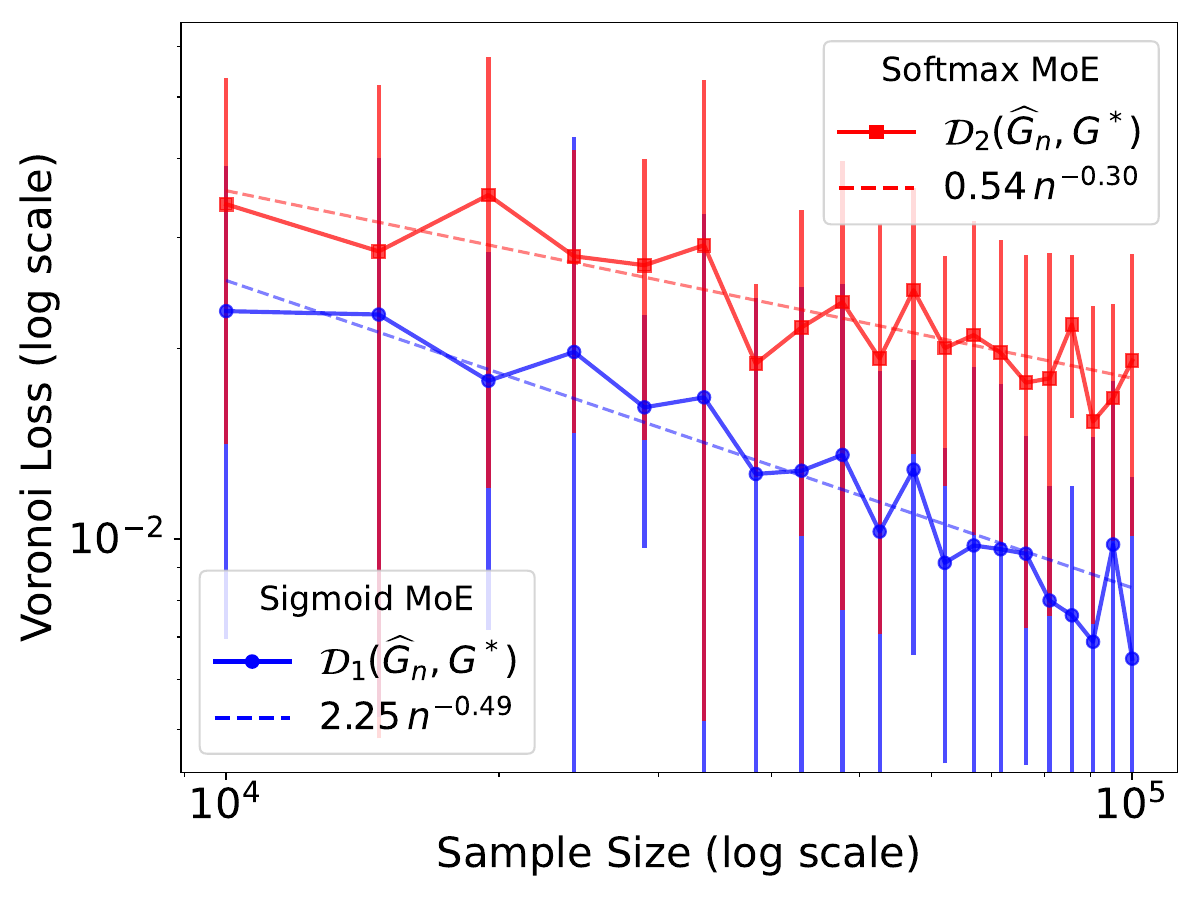}
  \caption{Convergence rate of $\gD_1(\hat{G}_n,{G}^*)$ vs $\gD_2(\hat{G}_n,{G}^*)$ when $k = 3$}
  \label{fig:subfig_compare_k3}
\end{subfigure}
\hfill
\begin{subfigure}[t]{0.30\textwidth}
  \centering
  \includegraphics[width=\linewidth]{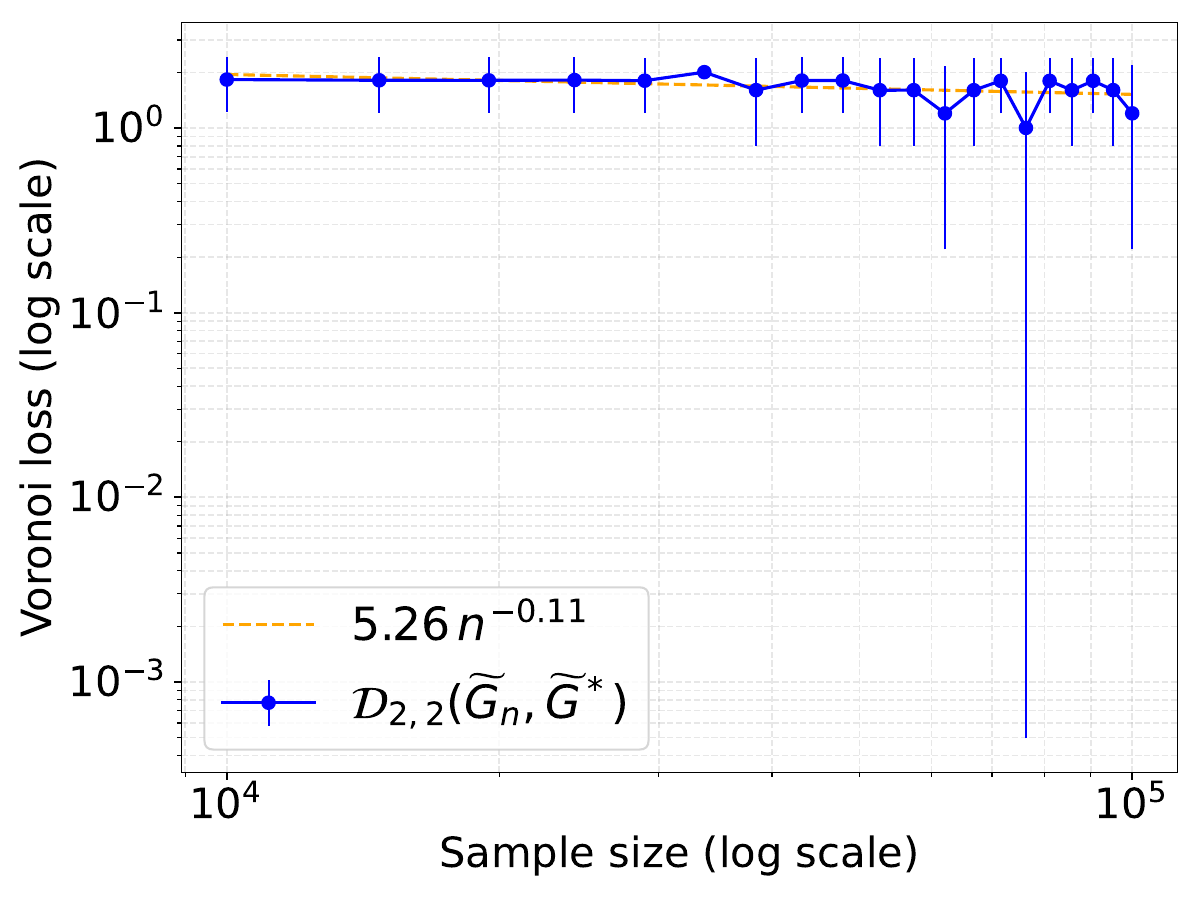}
  \caption{Convergence rate of $\gD_{2,2}(\widetilde{G}_n,\widetilde{G}^*)$ when $k = 3$}
  \label{fig:conv_d2_k3}
\end{subfigure}
\hfill
\begin{subfigure}[t]{0.30\textwidth}
  \centering
  \includegraphics[width=\linewidth]{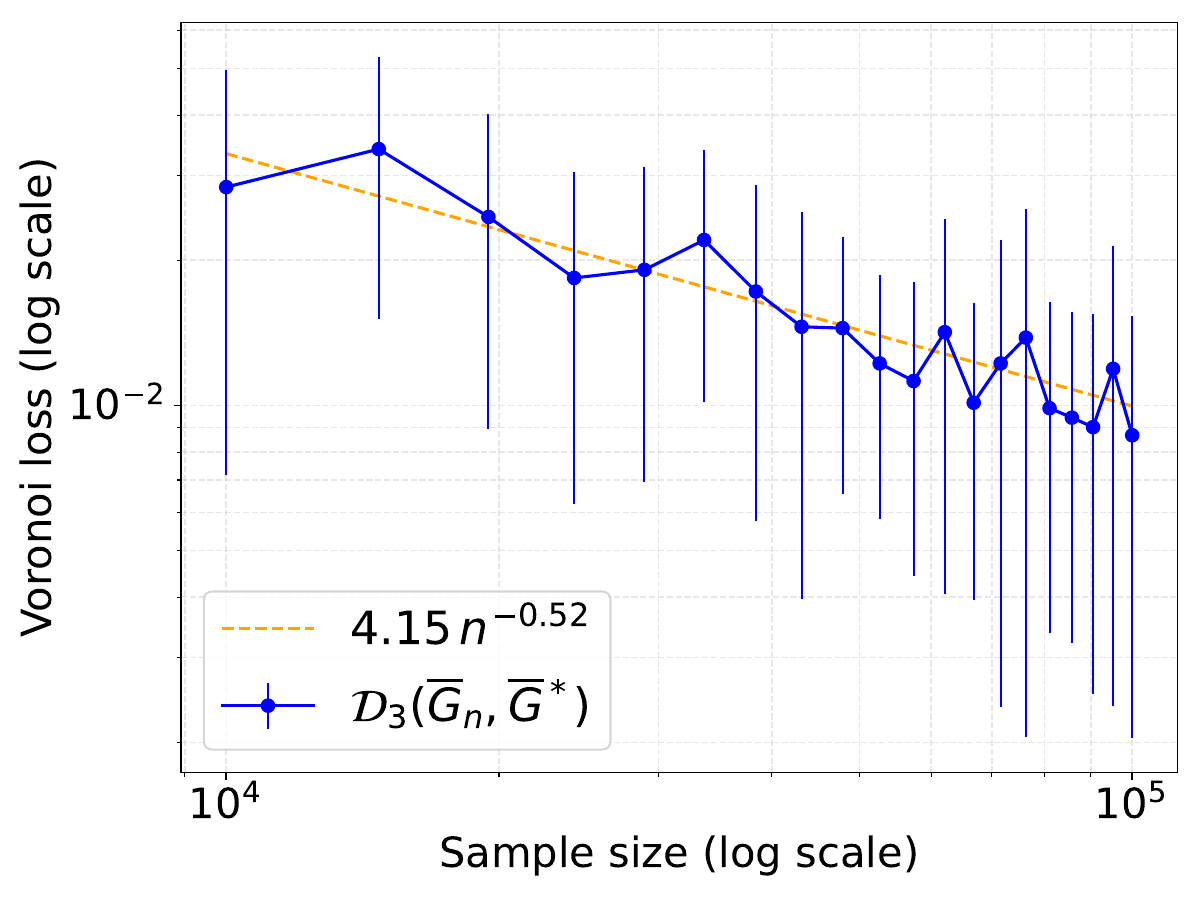}
  \caption{Convergence rate of $\gD_3(\overline{G}_n,\overline{G}^*)$ when $k = 3$}
  \label{fig:conv_d3_k3}
\end{subfigure}

\vspace{0.1cm}

\begin{subfigure}[t]{0.30\textwidth}
  \centering
  \includegraphics[width=\linewidth]{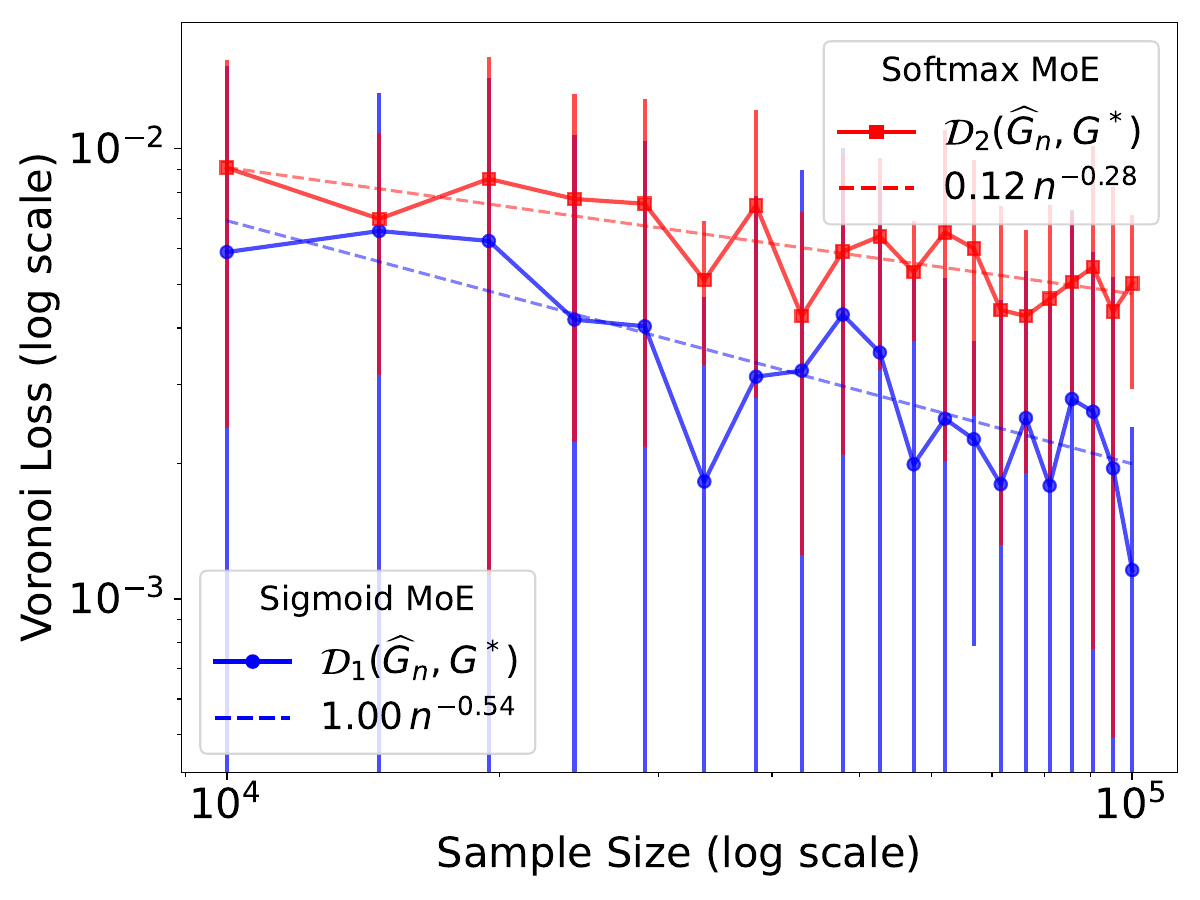}
  \caption{Convergence rate of $\gD_1(\hat{G}_n,{G}^*)$ vs $\gD_2(\hat{G}_n,{G}^*)$ when $k = 4$}
  \label{fig:subfig_compare_k4}
\end{subfigure}
\hfill
\begin{subfigure}[t]{0.30\textwidth}
  \centering
  \includegraphics[width=\linewidth]{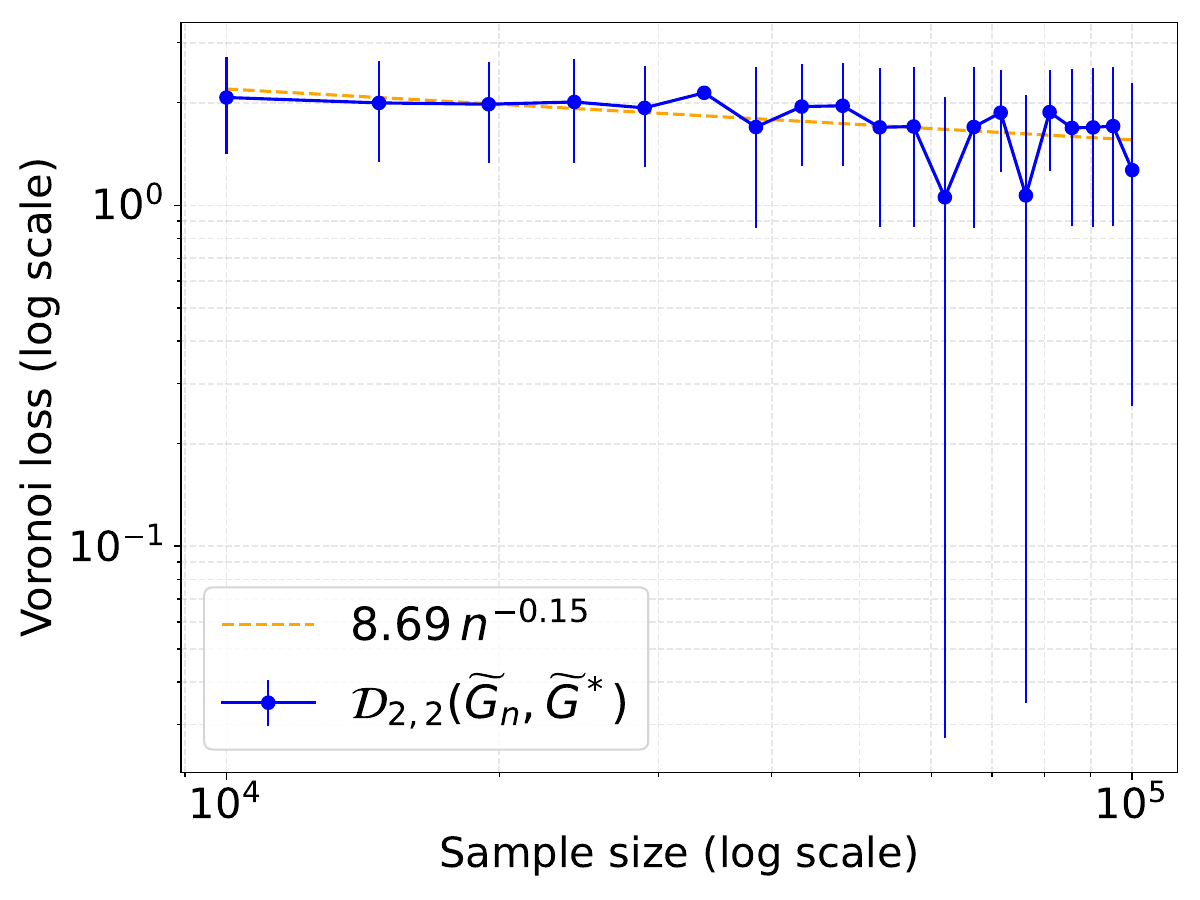}
  \caption{Convergence rate of $\gD_{2,2}(\widetilde{G}_n,\widetilde{G}^*)$ when $k = 4$}
  \label{fig:conv_d2_k4}
\end{subfigure}
\hfill
\begin{subfigure}[t]{0.30\textwidth}
  \centering
  \includegraphics[width=\linewidth]{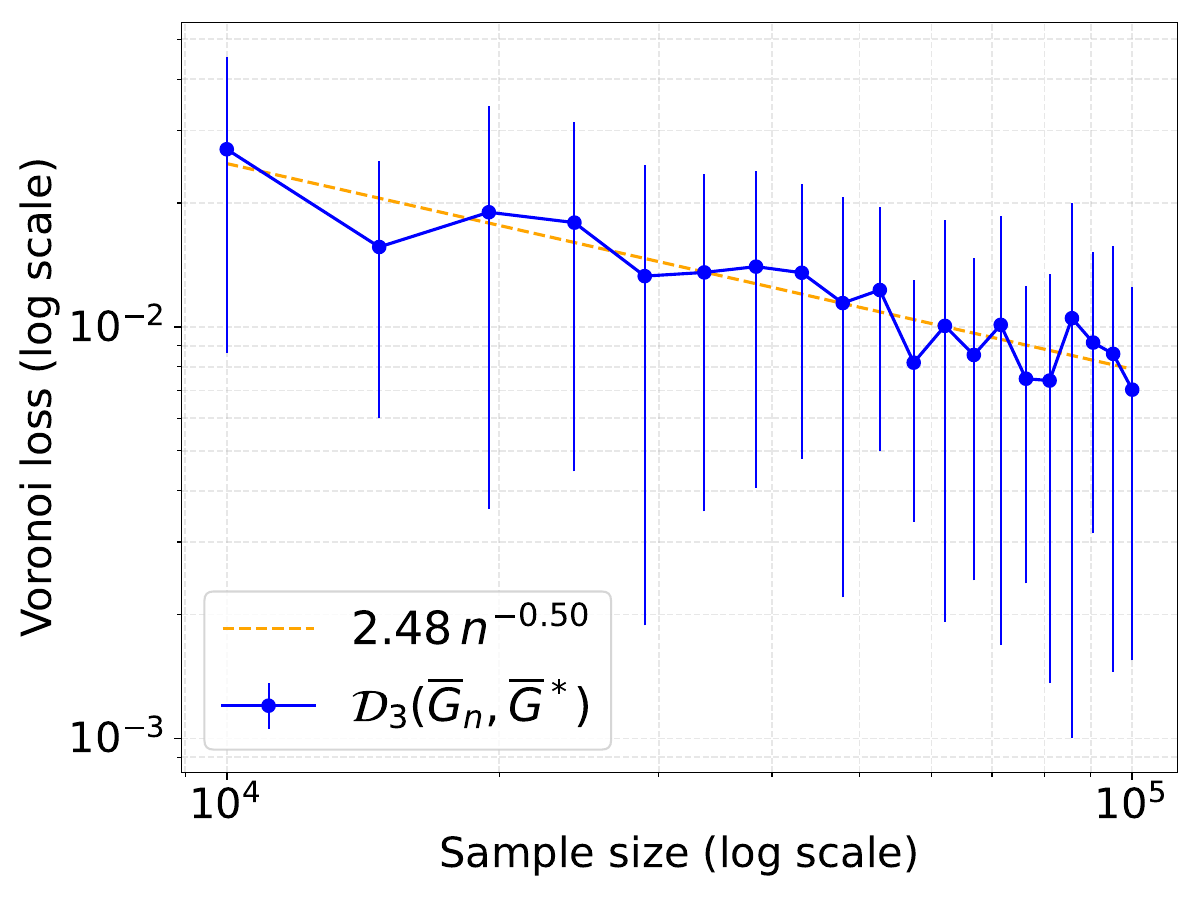}
  \caption{Convergence rate of $\gD_3(\overline{G}_n,\overline{G}^*)$ when $k = 4$}
  \label{fig:conv_d3_k4}
\end{subfigure}

\captionsetup{justification=justified,singlelinecheck=false}
\caption{Log-log scaled plots for the empirical convergence rates. Left column: comparison between sigmoid gating without temperature ($G_n$) and softmax gating for the Voronoi metric. Middle and right columns: convergence rates of the MLE $\widetilde{G}_n$ and $\overline{G}_n$ for inner product affinity score and Euclidean affinity score settings, respectively. In these figures, the corresponding empirical discrepancies are illustrated by the blue curves for sigmoid gating, while the red curves represent softmax gating. Orange dash-dotted lines indicate the least-squares fitted linear regression lines. Error bars represent two times the empirical standard deviation.}
\label{fig:convergence_rate}
\end{figure*}

\section{Discussion}
\label{sec:discussion}
In this paper, we investigate the convergence behavior of parameter and expert estimation in the sigmoid-gated MLMoE models. 
We first introduce a positive scale parameter in the standard sigmoid gating function to resolve its fundamental limitation in over-specified settings of MoE models where it fails to ensure the density convergence. Given this modification, we demonstrate that the sigmoid gate is more sample-efficient than the softmax gate in terms of estimating parameters and experts in the MLMoE.
We further characterize the role of the temperature parameter in the sigmoid gate with inner product affinity score. We find that its interaction with gating parameters leads to slow expert estimation rates of non-polynomial order. By using a novel Euclidean affinity score as an alternative, we show that the previous interaction is eliminated, thereby restoring the polynomial-order convergence rates of expert estimation. 

\vspace{0.5em}
\noindent
\textbf{Practical implications.} 
Our theories yield three important practical implications on the design of MLMoE models.

\vspace{0.5em}
\noindent
\emph{(I.1) Sigmoid gate is provably more sample-efficient than softmax gate in MoE for classification.}  
As shown in Table~\ref{tab:convergence-rate}, under the over-specified setting (the common practical scenario where the number of fitted experts exceeds the true number), the sigmoid gating function yields significantly faster rates for estimating parameters and experts than the softmax gating function.

\vspace{0.5em}
\noindent
\emph{(I.2) Involving a temperature parameter in the inner product affinity score reduces the model's sample efficiency.}
Our analysis shows that combining a temperature parameter with the conventional inner product affinity score creates an intrinsic interaction with gating parameters. This interaction forces the estimation rate to be slower than any polynomial order. Consequently, it implies that to achieve a given accuracy in estimating model parameters or experts, the required number of samples may grow exponentially rather than polynomially as in the absence of the temperature. 

\vspace{0.5em}
\noindent
\emph{(I.3) Euclidean affinity score is suitable for temperature-controlled gating.}  
Replacing the inner product operation with a Euclidean affinity score removes the detrimental interaction between the temperature and gating parameters and, thus, restores polynomial convergence rates. This provides a principled guideline for incorporating temperature control into the sigmoid gating function while maintaining reliable parameter estimation.

\vspace{0.5em}
\noindent
\textbf{Limitations and Future Directions.} Regarding the model formulation, in the MLMoE in equation~\eqref{eq:standard_density}, experts are set to be a multinomial logistic function of a simple linear network. In future development, we can generalize the results of this paper to the scenario in which the linear network is substituted by an arbitrary neural network. Regarding the problem setting, our analysis is conducted under a well-specified setting, where the data are generated from a sigmoid-gated MLMoE model. In practice, the true conditional distribution may lie outside this model class, leading to a misspecified setting in which the MLE converges to a pseudo-true mixing measure that minimizes the Kullback--Leibler divergence over a non-convex parameter space, where uniqueness and convergence rates are difficult to characterize. 
Moreover, under misspecification, the geometric foundation of our proof--particularly the Voronoi-cell decomposition used to control mixing-measure discrepancies--may break down, since pseudo-true parameters need not correspond to well-separated experts. Thus, developing a theory for sigmoid-gated MLMoE under misspecification remains an open problem.

\appendix

\vspace{1cm}
\centering
\textbf{\Large{Supplementary Material for 
``Rethinking Multinomial Logistic Mixture of Experts with Sigmoid Gating
Function''}}

\justifying
\setlength{\parindent}{0pt}
\vspace{0.5cm}

In this supplementary material, we present rigorous proofs for our theoretical results. The convergence rates of parameter estimation for the sigmoid-gated multinomial logistic mixture of experts under multiple settings are rigorously established in Appendix~\ref{appendix:main_results}. Supporting proofs for the auxiliary results on density estimation rates and model identifiability are provided in the Appendix~\ref{appendix:auxiliary_results}. Finally, we detail the experimental settings and configurations used in our numerical studies in Appendix~\ref{appendix:numerical}.

\section{Proofs of Main Results}
\label{appendix:main_results}

\subsection{Proof of Theorem~\ref{theorem:without_temperature}}
\label{appendix:without_temperature}

To establish the result in Theorem~\ref{theorem:without_temperature}, it suffices to demonstrate that
\[
\inf_{G \in \mathcal{G}_k(\Theta)}
\left(
\mathbb{E}_X\!\left[
d_H\!\bigl(
p_G(\cdot \mid X),\,
p_{G^*}(\cdot \mid X)
\bigr)
\right]
\,/\, \gD_1(G, G^*)
\right)
> 0.
\]

Given that the Total Variation distance is upper bounded by the Hellinger distance ($d_H(p,q) \gtrsim d_V(p,q)$), it suffices to establish: 
\[
\liminf_{\substack{
G \in \mathcal{G}_k(\Theta)
}}
\left(
\mathbb{E}_X\!\left[
d_V\!\bigl(
p_G(\cdot \mid X),\,
p_{G^*}(\cdot \mid X)
\bigr)
\right]
\,/\, \gD_1(G, G^*)
\right)
> 0.
\]

\noindent\textbf{Local Structure.} 
We first examine the local behavior of this inequality through a proof by contradiction:
\[
\liminf_{\substack{
\epsilon \to 0 \\
G \in \mathcal{G}_k(\Theta) \\
\gD_1(G, G^*) \le \epsilon
}}
\left(
\mathbb{E}_X\!\left[
d_V\!\bigl(
p_G(\cdot \mid X),\,
p_{G^*}(\cdot \mid X)
\bigr)
\right]
\,/\, \gD_1(G, G^*)
\right)
> 0.
\]

Suppose, for the sake of contradiction, that the above does not hold. 
Then, there must exist a sequence of mixing measures
$$
G_n := \sum_{j=1}^{k_n} \exp(\gamma_j^n) \, \delta_{(\alpha_j^n, \beta_j^n, a_j^n, b_j^n)} 
\in \mathcal{G}_k(\Theta)
$$
such that both
$\mathbb{E}_X\!\left[
d_V\!\bigl(
p_{G_n}(\cdot \mid X),\,
p_{G^*}(\cdot \mid X)
\bigr)
\right]
\,/\, \gD_1(G_n, G^*)
 \text{ and }
\gD_1(G_n, G^*)$
converge to zero as $n \to \infty$. 
We now proceed by splitting the remaining argument into three distinct steps.

\noindent\textbf{Step 1.} 
In this step, we apply a Taylor expansion to decompose the difference
$$
T_n(s) := \left[\sum_{j=1}^{k_*} \exp(\gamma^*_{j})\sigma((\alpha_j^*)^{\top}X + \beta_j^*) \right]\big[p_{G_n}(Y=s|X) - p_{G^*}(Y=s|X)\big].
$$
We define
\begin{align*}
    u(Y=s|X; \alpha, \beta, a, b) &:= \sigma(\alpha^\top X + \beta) f(Y=s|X; a, b); \\
    v(Y=s|X; \alpha, \beta) &:= \sigma(\alpha^\top X + \beta) p_{G_n}(Y=s|X),
\end{align*}
for each $s \in [K]$. 
Then $T_n(s)$ can be expressed as
\begin{align}
T_n(s) 
&= \sum_{j=1}^{k_*} \sum_{i \in \gA_j} \exp(\gamma^n_{i}) 
\left[ 
u(Y=s|X; \alpha^n_i, \beta^n_i, a^n_i, b^n_i) 
- 
u(Y=s|X; \alpha^*_j, \beta^*_j, a^*_j, b^*_j) 
\right] \nonumber \\
&\quad - \sum_{j=1}^{k_*} \sum_{i \in \gA_j} \exp(\gamma^n_{i}) 
\left[ 
v(Y=s|X; \alpha^n_i, \beta^n_i) 
- 
v(Y=s|X; \alpha^*_j, \beta^*_j) 
\right] \nonumber \\
&\quad + \sum_{j=1}^{k_*} 
\left( 
\sum_{i \in \gA_j} \exp(\gamma^n_{i}) - \exp(\gamma^*_{j}) 
\right)
\left[ 
u(Y=s|X; \alpha^*_j, \beta^*_j, a^*_j, b^*_j) 
- 
v(Y=s|X; \alpha^*_j, \beta^*_j) 
\right] \nonumber \\
&:= A_n - B_n + E_n.
\end{align}

Based on the expressions for $A_n$ and $B_n$, we further decompose each according to the size of the Voronoi cells $C_j$. In particular,
\begin{align*}
A_n &= \sum_{j: |\gA_j| = 1} \sum_{i \in \gA_j} \exp(\gamma_i^n) 
\left[ u(Y=s|X; \alpha_i^n, \beta_i^n, a_i^n, b_i^n) - u(Y=s|X; \alpha_j^*, \beta_j^*, a_j^*, b_j^*) \right] \\
&\quad + \sum_{j: |\gA_j| > 1} \sum_{i \in \gA_j} \exp(\gamma_i^n) 
\left[ u(Y=s|X; \alpha_i^n, \beta_i^n, a_i^n, b_i^n) - u(Y=s|X; \alpha_j^*, \beta_j^*, a_j^*, b_j^*) \right] \\
&= A_{n,1} + A_{n,2}.
\end{align*}

Next, we define $h_\ell(X; a_{j\ell}, b_{j\ell}) := (a_{j\ell})^{\top}X+b_{j\ell}, \forall\, \ell \in [K]
$. Then, 
\[
f(Y=s|X; a_j, b_j) = \frac{\exp((a_{js})^{\top}X+b_{js})}{\sum_{\ell=1}^{K}\exp((a_{j\ell})^{\top}X+b_{j\ell})} = \frac{\exp(h_s(X; a_{js}, b_{js}))}{\sum_{\ell=1}^{K}\exp(h_\ell(X; a_{js}, b_{js}))}.
\]
Using a first-order Taylor expansion, $A_{n,1}$ can be written as
\begin{align*}
A_{n,1} &= \sum_{j: |\gA_j|=1} \sum_{i \in \gA_j} \exp(\gamma_i^n) \sum_{|r|=1} \frac{1}{r!} 
(\Delta \alpha_{ij}^n)^{r_1}(\Delta \beta_{ij}^n)^{r_2} \prod_{\ell=1}^{K} (\Delta a_{ij\ell}^n)^{r_{3\ell}} (\Delta b_{ij\ell}^n)^{r_{4\ell}} \\
&
\qquad  \times\frac{\partial^{|r|} u}{\partial \alpha_j^{r_1} \partial \beta_j^{r_2}\prod_{\ell=1}^{K} \partial a_{j\ell}^{r_{3\ell}} \partial b_{j\ell}^{r_{4\ell}}}(Y=s|X; \alpha_j^*, \beta_j^*, a_j^*, b_j^*) + R_1(X,Y),
\end{align*}
where
$
r = (r_1, r_2, r_{31}, \dots, r_{3K}, r_{41}, \dots, r_{4K}),
$
with $r_1,r_{3\ell} \in \mathbb{N}^d$ and $r_2, r_{4\ell} \in \mathbb{N}$ for all $\ell \in [K]$. Additionally, $R_1(X,Y)$ is the Taylor remainder satisfying $R_1(X,Y)/\gD_1(G_n, G^*) \to 0$ as $n \to \infty$.

Let 
$
g(X; \alpha_{\ell}, \beta_{\ell}) := (\alpha_{\ell})^{\top}X+\beta_{\ell}, \forall \ell \in [\ks]
$, we can further write
\begin{align*}
A_{n,1} &= \sum_{j: |\gA_j|=1} \sum_{i \in \gA_j} \exp(\gamma_i^n) \sum_{|r|=1} \frac{1}{\alpha!} (\Delta \alpha_{ij}^n)^{r_1} (\Delta\beta_{ij}^n)^{r_2}\prod_{\ell=1}^{K} (\Delta a_{ij\ell}^n)^{r_{3\ell}} (\Delta b_{ij\ell}^n)^{r_{4\ell}} \\
&\quad \times \frac{\partial^{|r_1|+r_2} \sigma((\alpha_j^*)^{\top}X + \beta_j^*)}{\partial \alpha_j^{r_1} \partial \beta_j^{r_2}} \times\frac{\partial^{\sum_{\ell=1}^K|r_{3\ell}| + r_{4\ell}} f}{\prod_{\ell=1}^{K} \partial a_{j\ell}^{r_{3\ell}} \partial b_{j\ell}^{r_{4\ell}}}(Y=s|X; a_j^*, b_j^*) + R_1(X,Y)\\
 &= \sum_{j: |\gA_j|=1} \sum_{i \in \gA_j} \exp(\gamma_i^n) \sum_{|r|=1} \frac{1}{\alpha!} (\Delta \alpha_{ij}^n)^{r_1} (\Delta\beta_{ij}^n)^{r_2}\prod_{\ell=1}^{K} (\Delta a_{ij\ell}^n)^{r_{3\ell}} (\Delta b_{ij\ell}^n)^{r_{4\ell}} \\
&\quad \times X^{r_1+\sum_{\ell=1}^K r_{3\ell}} \times\frac{\partial^{|r_1|+r_2} \sigma((\alpha_j^*)^{\top}X + \beta_j^*)}{\partial {g}^{|r_1|+r_2}} \times\frac{\partial^{\sum_{\ell=1}^K|r_{3\ell}| + r_{4\ell}} f}{\prod_{\ell=1}^{K} \partial h_{\ell}^{|r_{3\ell}|+r_{4\ell}}}(Y=s|X; a_j^*, b_j^*) + R_1(X,Y)
\end{align*}

Let 
$
\mathcal{I}^t := \left\{
(q_1,q_2,q_3) : q_2+|q_3|=t, |q_1| \leq t \right\},
$
and 
$$
\mathcal{J}_{q_1, q_2, q_3} := \left\{
r = (r_1, r_2, r_{31}, \dots, r_{3K}, r_{41}, \dots, r_{4K}) : r_1 + \sum_{\ell=1}^{K} r_{3\ell} = q_1, |r_1|+r_2=q_2,  (r_{4\ell} + |r_{3\ell}|)_{\ell=1}^{K} = q_3 \right\}.
$$
$A_{n,1}$ can be expressed as
\begin{align*}
A_{n,1}  &= \sum_{j: |\gA_j|=1} \sum_{t=1}\sum_{(q_1,q_2,q_3) \in \mathcal{I}^t} \sum_{r \in \mathcal{J}_{q_1, q_2,q_3}}\sum_{i \in \gA_j} \frac{\exp(\gamma_i^n)}{r!} (\Delta \alpha_{ij}^n)^{r_1} (\Delta\beta_{ij}^n)^{r_2}\prod_{\ell=1}^{K} (\Delta a_{ij\ell}^n)^{r_{3\ell}} (\Delta b_{ij\ell}^n)^{r_{4\ell}} \\
&\quad \times X^{q_1} \times\frac{\partial^{q_2} \sigma((\alpha_j^*)^{\top}X + \beta_j^*)}{\partial {g}^{q_2}} \times\frac{\partial^{|q_3|} f}{\prod_{\ell=1}^{K} \partial h_{\ell}^{q_{31}}}(Y=s|X; a_j^*, b_j^*) + R_1(X,Y).
\end{align*}
Similarly, using a second-order Taylor expansion, $A_{n,2}$ can be written as
\begin{align*}
A_{n,2}  &= \sum_{j: |\gA_j|>1} \sum_{t=1}^2\sum_{(q_1,q_2,q_3) \in \mathcal{I}^t} \sum_{r \in \mathcal{J}_{q_1, q_2,q_3}}\sum_{i \in \gA_j} \frac{\exp(\gamma_i^n)}{r!} (\Delta \alpha_{ij}^n)^{r_1} (\Delta\beta_{ij}^n)^{r_2}\prod_{\ell=1}^{K} (\Delta a_{ij\ell}^n)^{r_{3\ell}} (\Delta b_{ij\ell}^n)^{r_{4\ell}} \\
&\quad \times X^{q_1} \times\frac{\partial^{q_2} \sigma((\alpha_j^*)^{\top}X + \beta_j^*)}{\partial {g}^{q_2}} \times\frac{\partial^{|q_3|} f}{\prod_{\ell=1}^{K} \partial h_{\ell}^{q_{31}}}(Y=s|X; a_j^*, b_j^*) + R_2(X,Y),
\end{align*}
Here $R_2(X, Y)$ denotes a Taylor remainder satisfying $R_2(X, Y) / \gD_1(G_n, \Gs) \to 0$ as $n \to \infty$. Applying analogous reasoning to decompose $B_n$, we obtain
\begin{align*}
B_n &= \sum_{j: |\gA_j|=1} \sum_{i \in \gA_j}\exp(\gamma_i^n)\sum_{|r|=1}\frac{1}{r!}(\Delta \alpha_i^n)^{r_1}(\Delta \beta_i^n)^{r_2}\frac{\partial^{|r|} \sigma((\alpha_j^*)^{\top}X + \beta_j^*)}{\partial \alpha_j^{r_1} \partial \beta_j^{r_2}} g_{\Gs}(Y=s|X) + R_3(X, Y) \\
&\quad + \sum_{j: |\gA_j|>1} \sum_{i \in \gA_j}\exp(\gamma_i^n)\sum_{|r|=1}^2\frac{1}{r!}(\Delta \alpha_i^n)^{r_1}(\Delta \beta_i^n)^{r_2}\frac{\partial^{|r|} \sigma((\alpha_j^*)^{\top}X + \beta_j^*)}{\partial \alpha_j^{r_1} \partial \beta_j^{r_2}} g_{\Gs}(Y=s|X) + R_4(X, Y) \\
&= \sum_{j: |\gA_j|=1} \sum_{i \in \gA_j}\exp(\gamma_i^n)\sum_{|r|=1}\frac{1}{r!}\times X^{r_1} \times (\Delta \alpha_i^n)^{r_1}(\Delta \beta_i^n)^{r_2}\frac{\partial^{|r|} \sigma((\alpha_j^*)^{\top}X + \beta_j^*)}{\partial g^{|r|}} g_{\Gs}(Y=s|X) + R_3(X, Y) \\
&\quad + \sum_{j: |\gA_j|>1} \sum_{i \in \gA_j}\exp(\gamma_i^n)\sum_{|r|=1}^2\frac{1}{r!}\times X^{r_1} \times(\Delta \alpha_i^n)^{r_1}(\Delta \beta_i^n)^{r_2}\frac{\partial^{|r|} \sigma((\alpha_j^*)^{\top}X + \beta_j^*)}{\partial g^{|r|}} g_{\Gs}(Y=s|X) + R_4(X, Y)
\end{align*}
Combining all the above results together, we obtain that
\begin{align*}
T_n(s) &= \sum_{j=1}^{k_*}\sum_{t=0}^{1+\mathbf{1}_{\{|\gA_j|>1\}}} \sum_{(q_1,q_2,q_3) \in \mathcal{I}^t} U_{q_1, q_2, q_3}^n(j) X^{q_1} \frac{\partial^{q_2} \sigma((\alpha_j^*)^{\top}X + \beta_j^*)}{\partial {g}^{q_2}} \times \frac{\partial^{|q_3|} f}{\prod_{\ell=1}^{K} \partial h_{\ell}^{q_{31}}}(Y=s|X; a_j^*, b_j^*) \\
&\quad + \sum_{j=1}^{k_*} \sum_{|r|=0}^{1+\mathbf{1}_{\{|\gA_j|>1\}}} W_r^n(j) \times X^{r_1} \times \frac{\partial^{|r|} \sigma((\alpha_j^*)^{\top}X + \beta_j^*)}{\partial g^{|r|}} \times p_{\Gs}(Y=s|X) + R(X, Y),
\end{align*}

where $R(X, Y)$ is the sum of Taylor remainders satisfying $R(X, Y) / \gD_1(X, Y) \to 0$ as $n \to \infty$ and
$$
U_{q_1, q_2,q_3}^n(j) = \begin{cases}
 \displaystyle\sum_{r \in \mathcal{J}_{q_1, q_2,q_3}}\sum_{i \in \gA_j} \frac{\exp(\gamma_i^n)}{r!} (\Delta \alpha_{ij}^n)^{r_1} (\Delta\beta_{ij}^n)^{r_2}\prod_{\ell=1}^{K} (\Delta a_{ij\ell}^n)^{r_{3\ell}} (\Delta b_{ij\ell}^n)^{r_{4\ell}}, & (q_1, q_2, q_3) \ne (\mathbf{0}_d, \mathbf{0}, \mathbf{0}_{K}) \\
    \sum_{i \in \gA_j} \exp(\gamma^n_{i}) - \exp(\gamma^*_{j}), & (q_1, q_2, q_3) = (\mathbf{0}_d, \mathbf{0},\mathbf{0}_{K})
\end{cases}
$$
and
$$
W_r^n(j) = \begin{cases}
    \sum_{i \in \gA_j}\frac{\exp(\gamma_i^n)}{r!}(\Delta \alpha_i^n)^{r_1}(\Delta \beta_i^n)^{r_2}, & |r| \ne \mathbf{0}_d, \\
    -\sum_{i \in \gA_j} \exp(\gamma^n_{i}) + \exp(\gamma^*_{j}), & |r| = \mathbf{0}_d,
\end{cases}
$$
for any $j \in [k_*]$.

\textbf{Step 2.} We now show that at least one among $U^n_{q_1,q_2,q_3}(j)/\gD_1(G_n, \Gs)$ does not vanish as $n \to \infty$. Assume, by contradiction, that all of them vanish as $n \to \infty$. 
By taking the summation of $|U^n_{q_1,q_2,q_3}(j)|/\gD_1(G_n, \Gs)$ for all $j \in \{j : |\gA_j|=1\}$, $q_2 = 1, q_3 = \mathbf{0}_K$, and $q_1 \in \{e_1, \ldots, e_d\}$, where $e_m := (0,\ldots,0,\underbrace{1}_{m\text{-th}},0,\ldots,0) \in \mathbb{R}^d$, we obtain
$$
\frac{1}{\gD_1(G_n, \Gs)} \sum_{j: |\gA_j|=1} \sum_{i \in \gA_j} \exp(\gamma_{i}^n) \|\Delta \alpha_{ij}^n\| \to 0.
$$
Similarly for $q_1 \in \{e_1, \ldots, e_d\}$, $q_2 = 0$, and iterating $q_3 \in \{e'_1, \ldots, e'_K\}$, where $e'_\ell := (0,\ldots,0,\underbrace{1}_{\ell\text{-th}},0,\ldots,0) \in \mathbb{R}^K$, we have
$$
\frac{1}{\gD_1(G_n, \Gs)} \sum_{j: |\gA_j|=1} \sum_{i \in \gA_j} \sum_{\ell=1}^K \exp(\gamma_{i}^n) \|\Delta a_{ij\ell}^n\| \to 0.
$$
On the other hand, taking $(q_1, q_2, q_3) = (\mathbf{0}_d, 1, \mathbf{0}_K)$ and considering the scalar component, we obtain that
$$
\frac{1}{\gD_1(G_n, \Gs)} \sum_{j: |\gA_j|=1} \sum_{i \in \gA_j} \exp(\gamma_{i}^n) |\Delta \beta_{ij}^n| \to 0.
$$
Similarity for $(q_1, q_2, q_3) = (\mathbf{0}_d, 0, e'_\ell)$ for each $\ell \in [K]$, we obtain
$$
\frac{1}{\gD_1(G_n, \Gs)} \sum_{j: |\gA_j|=1} \sum_{i \in \gA_j} \sum_{\ell=1}^K \exp(\gamma_{i}^n) |\Delta b_{ij\ell}^n| \to 0.
$$
Combining the 4 results above, we deduce that
\begin{equation}\label{eq: limit1_style1}
    \frac{1}{\gD_1(G_n, \Gs)} \sum_{j:|\gA_j|=1}\sum_{i \in \gA_j} \exp(\gamma_{i}^n) \left[ \|\Delta \alpha_{ij}^n\| + |\Delta \beta_{ij}^n| + \sum_{\ell=1}^K (\|\Delta a_{ij\ell}^n\| + |\Delta b_{ij\ell}^n|) \right] \to 0.
\end{equation}
By applying the same argument for the indices where $|\gA_j| > 1$ , we also deduce that
\begin{equation}\label{eq: limit2_style1}
    \frac{1}{\gD_1(G_n, \Gs)} \sum_{j:|\gA_j|>1}\sum_{i \in \gA_j} \exp(\gamma_{i}^n) \left[ \|\Delta \alpha_{ij}^n\|^2 + |\Delta \beta_{ij}^n|^2 + \sum_{\ell=1}^K (\|\Delta a_{ij\ell}^n\|^2 + |\Delta b_{ij\ell}^n|^2) \right] \to 0.
\end{equation}
Additionally, for tuple $(q_1,q_2,q_3) = (\mathbf{0}_d, 0, \mathbf{0}_K)$, it follows that
\begin{equation}\label{eq: limit3_style1}
    \frac{1}{\gD_1(G_n, \Gs)} \cdot \left(\sum_{j=1}^{k_*} \left|\sum_{i\in\gA_j} \exp(\gamma_{i}^n) - \exp(\gamma_{j}^*)\right|\right) \to 0.
\end{equation}
It is induced from the limits in equations \eqref{eq: limit1_style1}, \eqref{eq: limit2_style1}, \eqref{eq: limit3_style1} that $1 = \gD_1(G_n, \Gs)/\gD_1(G_n, \Gs) \to 0$ when $n \to \infty$, which is a contradiction. Thus, at least one among $U^n_{q_1,q_2,q_3}(j)/\gD_1(G_n, \Gs)$ does not go to zero when $n \to \infty$.

\noindent\textbf{Step 3.} Finally, we invoke Fatou's lemma to achieve a contradiction to the result in Step 2.

Let $m_n$ denote the maximum of the absolute values of 
$U_{q_1, q_2,q_3}^n(j) / \gD_1(G_n, \Gs)$ and 
$W_r^n(j) / \gD_1(G_n, \Gs)$. Then, by Fatou's lemma, 
\[
0 = \lim_{n \to \infty} \frac{\mathbb{E}_X\big[ 2d_V(p_{G_n}(\cdot|X), p_{G^*}(\cdot|X)) \big]}{m_n \gD_1(G_n, \Gs)}
\ge  \int \sum_{s=1}^K \liminf_{n \to \infty} \left| \frac{p_{G_n}(Y=s|X) - p_{\Gs}(Y=s|X)}{m_n \gD_1(G_n, \Gs)} \right| \mathrm{d} X \ge 0.
\]

As a result, we get
\[
\frac{|p_{G_n}(Y=s|X) - p_{\Gs}(Y=s|X)|}{m_n \gD_1(G_n, \Gs)} \to 0,
\]
which implies that
\[
\frac{T_n(s)}{m_n \gD_1(G_n, \Gs)} \to 0
\quad \text{as } n \to \infty, \quad \forall s \in [K], \text{ a.s. } X.
\]

Let 
\[
\frac{U_{q_1, q_2,q_3}^n(j)}{m_n \gD_1(G_n, \Gs)} \to \tau_{q_1, q_2,q_3}(j), 
\quad
\frac{W_r^n(j)}{m_n \gD_1(G_n, \Gs)} \to \eta_r(j)
\quad \text{as } n \to \infty.
\]
Then, the previous result indicates that
\begin{align*}
T(s) &= \sum_{j=1}^{k_*}\sum_{t=0}^{1+\mathbf{1}_{\{|\gA_j|>1\}}} \sum_{(q_1,q_2,q_3) \in \mathcal{I}^t} \tau_{q_1, q_2,q_3}(j) \, X^{q_1} \,
\frac{\partial^{q_2} \sigma((\alpha_j^*)^{\top}X + \beta_j^*)}{\partial g^{q_2}} \times
\frac{\partial^{|q_3|} f}{\prod_{\ell=1}^{K} \partial h_\ell^{q_{3\ell}}}(Y=s|X; a_j^*, b_j^*) \\
&\quad + \sum_{j=1}^{k_*} \sum_{|r|=0}^{1+\mathbf{1}_{\{|\gA_j|>1\}}} \eta_r(j) \, X^{r_1} \,
\frac{\partial^{|r|} \sigma((\alpha_j^*)^{\top}X + \beta_j^*)}{\partial g^{|r|}} \,
p_{\Gs}(Y=s|X) = 0,
\end{align*}
for any $s \in [K]$ and almost surely $X$. 
However, this implies that all the coefficients $\tau_{q_1,q_2,q_3}(j)$ and $\eta_r(j)$ must be zero, which is a contradiction.

\noindent\textbf{Global Structure:} Because the local inequality holds, there exists a constant $\epsilon' > 0$ such that
\[
\inf_{\substack{
G \in \mathcal{G}_k(\Theta) \\
\gD_1(G, G^*) \le \epsilon'
}}
\left(
\mathbb{E}_X\!\left[
d_V\!\bigl(
p_G(\cdot \mid X),\,
p_{G^*}(\cdot \mid X)
\bigr)
\right]
\,/\, \gD_1(G, G^*)
\right)
> 0.
\]
Assume, by contradiction, that this is not true. Then there exists a sequence $G_n' \in \mathcal{G}_k(\Theta)$ such that $\gD_1(G_n', \Gs) > \epsilon'$ and 
\[
\mathbb{E}_X[d_V(p_{G'_n}(\cdot|X), p_{G^*}(\cdot|X))] \to 0 \quad \text{as } n \to \infty.
\]
Since $\Theta$ is compact, we can extract a subsequence such that $G_n' \to G' \in \mathcal{G}_k(\Theta)$. 
Thus, $\gD_1(G', \Gs) > \epsilon'$. Applying Fatou's lemma yields
\[
0 = \lim_{n \to \infty} \frac{\mathbb{E}_X[2d_V(p_{G'_n}(\cdot|X), p_{G^*}(\cdot|X))]}{\gD_1(G_n', \Gs)}
\ge \int \sum_{s=1}^K \liminf_{n \to \infty} |p_{G_n'}(Y=s|X) - p_{\Gs}(Y=s|X)| \, \mathrm{d}X.
\]
Consequently,
\[
\int \sum_{s=1}^K |p_{G'}(Y=s|X) - p_{\Gs}(Y=s|X)| \, \mathrm{d}X = 0,
\]
which implies $p_{G'}(Y=s|X) = p_{\Gs}(Y=s|X)$ for all $s \in [K]$ almost surely in $X$. 
By identifiability, we conclude $G' = \Gs$, which contradicts $\gD_1(G', \Gs) > \epsilon'$. 

Hence, the global inequality holds, completing the proof.

\subsection{Proof of Theorem~\ref{theorem:with_temperature_inner}}
\label{appendix:with_temperature_inner}

First of all, we provide a useful lemma that will be utilized for this proof as follows:

\begin{lemma}\label{lemma A.1}
    For any $r \ge 1$, if the following holds:
$$
\lim_{\varepsilon \to 0}\inf_{G \in \gG_k(\Upsilon): \gD_{2,r}(G, \widetilde{G}^*)\le \varepsilon}\dfrac{\mathbb{E}_X[d_V(\widetilde{p}_G(\cdot|X), \widetilde{p}_{\widetilde{G}^*}(\cdot |X))]}{\gD_{2,r}(G, \widetilde{G}^*)} = 0,
$$
then we achieve that 
$$\inf_{G_n \in \gG_k(\Upsilon)}\sup_{G \in \gG_k(\Upsilon)\setminus \gG_{\ks-1}(\Upsilon)}\mathbb{E}_{\widetilde{p}_G}[\gD_{2,r}(G_n,G)] \gtrsim n^{-1/2}.$$
\end{lemma}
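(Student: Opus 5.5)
This is the standard Le Cam two-point reduction used for Voronoi-loss minimax bounds in the MoE literature. Fix $r\ge 1$ and a small $\varepsilon>0$. By hypothesis, there is a mixing measure $G'_\varepsilon\in\gG_k(\Upsilon)$ such that $\gD_{2,r}(G'_\varepsilon,\widetilde{G}^*)=2\varepsilon$ and
\[
\mathbb{E}_X[d_V(\widetilde p_{G'_\varepsilon}(\cdot|X),\widetilde p_{\widetilde G^*}(\cdot|X))]\le C_\varepsilon\,\varepsilon,
\]
with $C_\varepsilon\to 0$ as $\varepsilon\to0$. We can hit exactly $2\varepsilon$ by a continuity argument along the sequence constructed in the hypothesis. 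Since $\gD_{2,r}$ is small, $G'_\varepsilon$ still has at least $\ks$ atoms, so $G'_\varepsilon\in\gG_k(\Upsilon)\setminus\gG_{\ks-1}(\Upsilon)$. Both $G'_\varepsilon$ and $\widetilde G^*$ are therefore admissible in the supremum.

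\textbf{Step 1 (weak triangle inequality).} I first check that $\gD_{2,r}$ satisfies a weak triangle inequality
\[
\gD_{2,r}(G,\widetilde G^*)+\gD_{2,r}(G,G'_\varepsilon)\gtrsim \gD_{2,r}(G'_\varepsilon,\widetilde G^*)
\]
for all $G$, with a constant depending only on $r$, $k$ and $\Upsilon$. This uses compactness of $\Upsilon$ (so that $\exp(\gamma_i)$ is bounded above and below) and $|x+y|^r\le 2^{r-1}(|x|^r+|y|^r)$ applied to each parameter difference. Voronoi-cell reassignment is handled because, for $\varepsilon$ small, the atoms of $G'_\varepsilon$ lie in their true cells. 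Le Cam's lemma then gives, for any estimator $G_n$,
\[
\max_{G\in\{\widetilde G^*,G'_\varepsilon\}}\mathbb{E}_{\widetilde p_G}[\gD_{2,r}(G_n,G)]\gtrsim \varepsilon\Big(1-d_V\big(\widetilde p^{\,n}_{\widetilde G^*},\widetilde p^{\,n}_{G'_\varepsilon}\big)\Big),
\]
where the densities are the $n$-fold product densities of $(X,Y)$.

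\textbf{Step 2 (bounding the product TV).} I would bound the product total variation by the Hellinger distance,
\[
d_V(p^n,q^n)\le \sqrt2\, d_H(p^n,q^n)\le \sqrt{2n}\,d_H(p,q),
\]
and then bound $d_H^2$ of the joint densities by $\mathbb{E}_X[d_H^2(\widetilde p_{G'_\varepsilon}(\cdot|X),\widetilde p_{\widetilde G^*}(\cdot|X))]$. The main obstacle is converting the hypothesis, which controls total variation, into control of the Hellinger distance. My plan is to use that both conditional densities are bounded below uniformly on $[K]\times\mathcal X$. This holds because compactness of $\Upsilon$ and boundedness of $\mathcal X$ keep every sigmoid and softmax weight away from zero. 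Under that lower bound, $d_H^2\lesssim d_V$, and more precisely $d_H^2\lesssim \sum_s (p-q)^2/q\lesssim d_V\cdot\sup|p-q|$. Since $\sup_{X,s}|p-q|\lesssim\varepsilon^{1/r}$ on the constructed sequence, this gives $\mathbb{E}_X d_H^2\lesssim C_\varepsilon\varepsilon\cdot\varepsilon^{1/r}=o(\varepsilon^2)$ when $r=1$. For general $r$, I would instead aim for $d_H^2\lesssim (C_\varepsilon\varepsilon)^2$ by using a chi-square bound, $\mathbb{E}_X\sum_s(p-q)^2/q$. If the hypothesis only delivers a TV bound, I would go back to the explicit sequence from the hypothesis: there the density difference is pointwise $O(C_\varepsilon\varepsilon)$ uniformly in $X$, since the construction exploits the PDE \eqref{eq:interaction} to cancel the leading Taylor terms uniformly on bounded $\mathcal X$. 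That yields $d_H\lesssim C_\varepsilon\varepsilon$.

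\textbf{Step 3 (choosing $\varepsilon$).} Take $\varepsilon=n^{-1/2}$. Then $n\,d_H^2\lesssim C_{\varepsilon}^2\to 0$, so the product TV distance is at most $1/2$ for large $n$. The lower bound becomes $\gtrsim n^{-1/2}$, which is the claimed minimax rate.
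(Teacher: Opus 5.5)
Your route is the paper's: a two-point Le Cam bound against a competitor at $\gD_{2,r}$-distance $2\varepsilon$ from $\widetilde{G}^*$, the weak triangle inequality for $\gD_{2,r}$, and the choice $\varepsilon\asymp n^{-1/2}$. You go beyond the paper in Step 2, and you are right that something is needed there. The paper passes from $\mathbb{E}_X[d_V]\le M_1\varepsilon$ straight to the factor $1-\sqrt{1-(1-M_1^2\varepsilon^2)^n}$. That step implicitly treats $M_1\varepsilon$ as a per-sample Hellinger bound. Total variation alone only gives $\mathbb{E}_X[d_H^2]\le \mathbb{E}_X[d_V]\le M_1\varepsilon$, which bounds $n\,\mathbb{E}_X[d_H^2]$ only by order $\sqrt{n}$ at $\varepsilon\asymp n^{-1/2}$. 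Your fix uses the uniform lower bound on the class probabilities to get $\mathbb{E}_X[d_H^2]\lesssim \sup_{X,s}|\widetilde p_{G'_\varepsilon}-\widetilde p_{\widetilde G^*}|\cdot \mathbb{E}_X[d_V]\lesssim C_\varepsilon\varepsilon^{1+1/r}$. This closes the argument from the stated hypothesis when $r=1$. Your check that $G'_\varepsilon\notin\gG_{\ks-1}(\Upsilon)$ also fills a small omission in the paper.

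The gap is in your fallback for $r\ge 2$. Suppose the explicit construction only cancels the leading Taylor terms through \eqref{eq:interaction}. Then the remaining density difference is in general of order $t^2$, where $t$ is the size of the parameter perturbation, while $\gD_{2,r}\asymp t^r$. So you only get a pointwise difference of $O(\varepsilon^{2/r})$. That is not $o(\varepsilon)$ once $r\ge 2$. For $r>2$, your bound on $n\,\mathbb{E}_X[d_H^2]$ at $\varepsilon\asymp n^{-1/2}$ becomes $n^{1-2/r}\to\infty$. (The paper's justification of $R_1/\gD_{2,r}\to 0$ in the proof of Theorem~\ref{theorem:with_temperature_inner} rests on the same first-order reasoning.)

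What rescues the argument is an exact invariance, not a first-order one. $\widetilde u$ depends on $(\alpha,\beta,\tau)$ only through $(\alpha^\top X+\beta)/\tau$, so it is unchanged under $(\alpha,\beta,\tau)\mapsto c(\alpha,\beta,\tau)$; \eqref{eq:interaction} is just the infinitesimal form of this. The paper's choice $\tau^n=\tas+t_n$, $v_{n,j}=t_n\alsj/\tas$, $s_{n,j}=t_n\besj/\tas$ is exactly this rescaling with $c=1+t_n/\tas$. Hence $\widetilde p_{G_n}\equiv \widetilde p_{\widetilde G^*}$ and the Hellinger distance is zero. Along this family $\gD_{2,r}$ is a positive multiple of $t^r$. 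So you can hit $2\varepsilon$ exactly, which (in the paper as in your write-up) does not follow from the abstract hypothesis. The product total variation then vanishes, and Le Cam gives a bound $\gtrsim\varepsilon$ for every small $\varepsilon$, in particular $\varepsilon=n^{-1/2}$.

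Be aware that for $r\ge 2$ this proves the minimax bound from the explicit construction, not the lemma from its total-variation hypothesis. By your own computation, that hypothesis alone does not reach $n^{-1/2}$ through this route.
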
 
Proof of Lemma~\ref{lemma A.1} is deferred to Appendix \ref{appendix: proof of lemma 3.1}. Now we are ready to present the main proof of Theorem~\ref{theorem:with_temperature_inner}.  
\subsubsection{Main Proof}
Based on the result of Lemma~\ref{lemma A.1}, it is sufficient to construct a sequence of mixing measures $G_n$ such that $\gD_{2,r}(G_n, \widetilde{G}^*) \to 0$ and 
\begin{equation}\label{eq: frac to 0}
\dfrac{\mathbb{E}_X[d_V(\widetilde{p}_{G_n}(\cdot|X), \widetilde{p}_{\widetilde{G}^*}(\cdot |X))]}{\gD_{2,r}(G, \widetilde{G}^*)} \to 0,  
\end{equation}
as $n\to \infty$. For that purpose, we choose $G_n = \sum_{i=1}^{k^*+1}\exp(\gamma_{i}^n)\delta_{(\alpha_i^n, \beta_i^n, \tau^n, a_i^n, b_i^n)}$ such that 
\begin{itemize}
    \item $a_{i}^n = a_{j}^*, b_i^n = b_j^*$ for any $i \in \gA_j$;
    \item $\tau^n = \tau^* + t_n$;
    \item $\beta_{i}^n = \beta_{j}^* + s_{n,j}$ for any $i \in \gA_j$;
    \item $a_{i}^n = a_j^* + v_{n,j}$ for any $i \in \gA_j$;
    \item $\exp(\gamma_i^n) = \frac{1}{|\gA_j|}\exp(\gamma_j^*)$ for any $i \in \gA_j$, 
\end{itemize}
where $v_{n,j} := (v_{n,j}^{(1)},\ldots,v_{n,j}^{(d)}) \in \mathbb{R}^d$ and $t_n, s_{n,j} \in \mathbb{R}$ will be chosen later such that $v_{n,j}^{(u)},s_{n,j},t_n \to 0$ when $n \to \infty$ for any $u \in [d]$ and $j \in [\ks]$. Then, the loss function $\gD_{2,r}(G_n, \widetilde{G}^*)$ becomes
$$
\gD_{2,r}(G_n, \widetilde{G}^*) = \sum_{j=1}^{\ks}|\gA_j|\cdot \left(\|v_{n,j}\|^r + |t_n|^r + |s_{n,j}|^r\right)
$$
It is straightforward that $\gD_{2,r}(G_n, \widetilde{G}^*)\to 0$ when $n\to \infty$. 

Now, we will show that $\mathbb{E}_X[d_H(\widetilde{p}_{G_n}(\cdot|X), \widetilde{p}_{\widetilde{G}^*}(\cdot |X))]/\gD_{2,r}(G_n, \widetilde{G}^*)$ vanishes as $n \to \infty$. Let us consider the quantity $T_n(s) = \left[\sum_{i=1}^{\ks}\exp(\gamma_i^*)\sigma\left(\frac{(\alpha_i^*)^\top X+\beta_i^*}{\tau^*}\right)\right][\widetilde{p}_{G_n}(Y=s|X) - \widetilde{p}_{\widetilde{G}^*}(Y=s|X)]$ under the above setting of $G_n$ as follows
\begin{align*}
    T_n(s) &:= \sum_{j=1}^{k^*}\sum_{i \in \gA_j}\exp(\gamma_i^n)[\widetilde{u}(Y=s|X;\alpha_i^n, \beta_i^n,\tau^n, a_i^n, b_i^n) - \widetilde{u}(Y=s|X;\alpha_j^*, \beta_j^*,\tau^*, a_j^*, b_j^*)] \\
    &- \sum_{j=1}^{k^*}\sum_{i \in \gA_j}\exp(\gamma_i^n)[\widetilde{v}(Y=s|X;\alpha_i^n, \beta_i^n,\tau^n) - \widetilde{v}(Y=s|X;\alpha_j^*,\beta_j^*,\tau^*)] \\
    &+\sum_{j=1}^{k^*}\Big[\sum_{i \in \gA_j}\exp(\gamma_i^n) - \exp(\gamma_j^*)\Big]\Big[\widetilde{u}(Y=s|X;\alpha_j^*,\beta_j^*,\tau^*, a_j^*, b_j^*) - \widetilde{u}(Y=s|X;\alpha_j^*, \beta_j^*,\tau^*)\Big] \\
    &:= A_n - B_n + E_n,
\end{align*}
where we denote  $f(Y=s|X;a,b) := \dfrac{\exp(a_s^\top X + b_s)}{\sum_{\ell=1}^K\exp(a_\ell^\top X + b_\ell)}$, $\widetilde{u}(Y=s|X;\alpha,\beta,\tau,a,b) := \sigma\Big(\frac{\alpha^\top X + \beta}{\tau}\Big)f(Y=s|X;a,b)$ and $\widetilde{v}(Y=s|X;\alpha,\beta,\tau):= \sigma\Big(\frac{\alpha^\top X + \beta}{\tau}\Big) p_{G_n}(Y=s|X)$ for each $s \in [K]$. Given the above formulation of $G_n$, we can rewrite the term $A_n$ as 
$$
A_n = \sum_{j=1}^{k^*}\sum_{i \in \gA_j}\exp(\gamma_i^n) \Bigg[\sigma\left(\frac{(\alpha_i^n)^\top X+\beta_i^n}{\tau^n}\right) - \sigma\left(\frac{(\alpha_j^*)^\top X+\beta_j^*}{\tau^*}\right)\Bigg]f(Y=s|X;a^*,b^*)
$$
By means of first-order Taylor expansions,  we can rewrite $A_n$ as 
\begin{align*}
   A_n = \sum_{j=1}^{k^*}\sum_{i \in \gA_j}\exp(\gamma_i^n)\Bigg\{\sum_{u=1}^d\Big[\dfrac{v_{n,j}^{(u)}}{\tau^*}-\dfrac{t_n(\alpha_{j}^*)^{(u)})}{(\tau^*)^2}\Big]\cdot X^{(u)}+\Big[\dfrac{s_{n,j}}{\tau^*}-\dfrac{t_n\beta_j^*}{(\tau^*)^2}\Big]\Bigg\}\sigma\left(\frac{(\alpha_j^*)^\top X+\beta_j^*}{\tau^*}\right)\\
   \times \Bigg[1-\sigma\left(\frac{(\alpha_j^*)^\top X+\beta_j^*}{\tau^*}\right)\Bigg]f(Y=s|X;a^*,b^*) + R_1(X,Y),
\end{align*}
where $R_1(X,Y)$ is a Taylor remainder such that $R_1(X,Y)/\gD_{2,r}(G_n, \widetilde{G}^*) \to 0$ when $n\to \infty$. Hence, by choosing 
$$
t_n = \dfrac{1}{n};\qquad v_{n,j} = \dfrac{t_n\alpha_j^*}{\tau^*} = \dfrac{\alpha_j^*}{n\tau^*};\qquad s_{n,j} = \dfrac{t_n\beta_j^*}{\tau^*} = \dfrac{\beta_j^*}{n\tau^*},
$$
we obtain that $A_n/\gD_{2,r}(G_n, \widetilde{G}^*) \to 0$ when $n\to\infty$. 

By following the same line of reasoning, we also obtain that $B_n/\gD_{2,r}(G_n, \widetilde{G}^*) \to 0$ when $n\to \infty$. Combining with $E_n = 0$, it follows that $T_n(s) / \gD_{2,r}(G_n, \widetilde{G}^*) \to 0$ when $n\to \infty$. Since the term $\sum_{i=1}^{\ks}\exp(\gamma_i^*)\sigma\left(\frac{(\alpha_i^*)^\top X+\beta_i^*}{\tau^*}\right)$ is bounded, we deduce that $|\widetilde{p}_{G_n}(Y|X) - \widetilde{p}_{\widetilde{G}^*}(Y|X)|/\gD_{2,r}(G_n, \widetilde{G}^*)\to 0$, for almost surely $(X,Y)$ when $n\to \infty$. As a consequence, we satisfy the condition in equation \eqref{eq: frac to 0}. This completes the proof. 

\subsubsection{Proof of Lemma A.1.}\label{appendix: proof of lemma 3.1}
For a fixed constant $M_1 > 0$ and a sufficiently small $\varepsilon > 0$ that we will choose later, it follows from the assumption that we can find a mixing measure $G^*_1 \in \gG_{k}(\Upsilon)$ that satisfies $\gD_{2,r}(G^*_1, \widetilde{G}^*) = 2\varepsilon$ and $\mathbb{E}_X[d_V(\widetilde{p}_{G^*_1}(\cdot|X), \widetilde{p}_{\widetilde{G}^*}(\cdot |X))] \le M_1\varepsilon$. Additionally, for any sequence $G_n' \in \gG_k(\Upsilon)$, we have 
$$
2\max_{G \in \{G^*_1, \widetilde{G}^*\}} \mathbb{E}_{\widetilde{p}_G}[\gD_{2,r}(G'_n, G)] \ge \mathbb{E}_{\widetilde{p}_{\widetilde{G}^*}}[\gD_{2,r}(G'_n, \widetilde{G}^*)] + \mathbb{E}_{\widetilde{p}_{G^*_1}}[\gD_{2,r}(G'_n, G^*_1)],
$$
where $\mathbb{E}_{\widetilde{p}_G}$ denotes the expectation taken w.r.t the product measure with density $\widetilde{p}_G$. Since the loss $\gD_{2,r}$ satisfies the weak triangle inequality, we can find a constant $M_2 > 0$ such that 
$$
\gD_{2,r}(G'_n, \widetilde{G}^*) + \gD_{2,r}(G'_n, G^*_1) \ge M_2\gD_{2,r}(G^*_1, \widetilde{G}^*) = 2M_2\varepsilon.
$$
Consequently, it follows that 
\begin{align*}
    \max_{G \in \{G^*_1, \widetilde{G}^*\}} \mathbb{E}_{\widetilde{p}_G}[\gD_{2,r}(G'_n, G)] &\ge \dfrac{1}{2}\left(\mathbb{E}_{\widetilde{p}_{\widetilde{G}^*}}[\gD_{2,r}(G'_n, \widetilde{G}^*)] + \mathbb{E}_{\widetilde{p}_{G^*_1}}[\gD_{2,r}(G'_n, G^*_1)]\right)\nonumber\\
    &\ge M_2\varepsilon 
    \cdot \inf_{f_1,f_2} \left(\mathbb{E}_{\widetilde{p}_{\widetilde{G}^*}}[f_1] + \mathbb{E}_{\widetilde{p}_{G^*_1}}[f_2]\right),
\end{align*}
where $f_1$ and $f_2$ in the above infimum are measurable functions in terms of $X_1, X_2, \ldots, X_n$ that satisfy $f_1 + f_2 = 1$.

With the definition of the Total Variation distance, the infimum value $\inf_{f_1,f_2} \left(\mathbb{E}_{\widetilde{p}_{\widetilde{G}^*}}[f_1] + \mathbb{E}_{\widetilde{p}_{G^*_1}}[f_2]\right)$ is equal to $1 - d_V(\widetilde{p}_{\widetilde{G}^*}(\cdot|X), \widetilde{p}_{G^*_1}(\cdot |X))$. Thus, we obtain that 
\begin{align*}
    \max_{G \in \{G'_*, \widetilde{G}^*\}} \mathbb{E}_{p_G}[\gD_{2,r}(G_n', G)] &\ge M_2\varepsilon \Big(1 - d_V(p_{\widetilde{G}^*}(\cdot|X), p_{G^*_1}(\cdot |X))\Big) \nonumber\\
    &\ge M_2\varepsilon\left[1 - \sqrt{1 - (1-M_1^2\varepsilon^2)^n}\right].
\end{align*}
By choosing $\varepsilon = n^{-1/2}/M_1$, we have $M_1^2\varepsilon^2 = \frac{1}{n}$ and $\sqrt{1 - (1-M_1^2\varepsilon^2)^n} = \sqrt{1-(1-1/n)^n} \to \sqrt{(1 - e^{-1})} < 1$, which implies that 
$$
\inf_{G_n' \in \gG_k(\Upsilon)}\sup_{G \in \gG_k(\Upsilon)\setminus \gG_{\ks-1}(\Upsilon)}\mathbb{E}_{\widetilde{p}_G}[\gD_{2,r}(G_n',G)] \ge \max_{G \in \{G^*_1, \widetilde{G}^*\}} \mathbb{E}_{\widetilde{p}_G}[\gD_{2,r}(G'_n, G)] \gtrsim n^{-1/2}.
$$
Hence, we reach the conclusion of Theorem~\ref{theorem:with_temperature_inner}, which says that 
$$
\inf_{G_n \in \gG_k(\Upsilon)}\sup_{G \in \gG_k(\Upsilon)\setminus \gG_{\ks-1}(\Upsilon)}\mathbb{E}_{\widetilde{p}_G}[\gD_{2,r}(G_n,G)] \gtrsim n^{-1/2},
$$
for any $r \ge 1$.

\subsection{Proof of Theorem~\ref{theorem:with_temperature_euclidean}}
\label{appendix:with_temperature_euclidean}
To reach the desired conclusion in Theorem~\ref{appendix:with_temperature_euclidean}, we need to show the following key inequality:
\begin{equation}
    \inf_{G \in \gG_k(\Upsilon)}\mathbb{E}_X[d_V(\overline{p}_{G}(\cdot|X), \overline{p}_{\overline{G}^*}(\cdot |X))] / \gD_3(G,\overline{G}^*) > 0,
\end{equation}
which is then divided into two parts named local structure and global structure. Since the global structure can be treated analogously to the case in Appendix~\ref{appendix:without_temperature}, its proof is omitted here for brevity.

\textbf{Local Structure.} In this part, we aim to establish the following inequality using a proof by contradiction:
$$
\lim_{\varepsilon \to 0} \inf_{\substack{G \in \gG_k(\Upsilon)},\\\gD_{3}(G, \overline{G}^*)\le \varepsilon} \mathbb{E}_X[d_V(\overline{p}_{G}(\cdot|X), \overline{p}_{\overline{G}^*}(\cdot |X))]/ \gD_3(G,\overline{G}^*) > 0.
$$
Assume that this local inequality does not hold. By utilizing some derivations in Appendix~\ref{appendix:without_temperature}, we proceed with the three-step framework as follows:

\textbf{Step 1.} First of all, we define 
\begin{align*}
    \overline{u}(Y=s|X;\alpha,\beta, \tau,a,b) &:= \sigma\left(\frac{\|\alpha-X\|+\beta}{\tau}\right)f(Y=s|X;a,b);\\
    \overline{v}(Y=s|X;a,b,\tau) &:= \sigma\left(\frac{\|\alpha-X\|+\beta}{\tau}\right)\overline{p}_{\overline{G}_n}(Y=s|X),
\end{align*}
for any $s \in [K]$. Next,  decompose the following difference as follows
\begin{align*}
T_n(s)&:=\Big[\sum_{j=1}^{\ks}\exp(\gamma_j^*)\sigma\left(\frac{\|\alpha_j^*-X\| + \beta_j^*}{\tau^*}\right)\Big] (\overline{p}_{\overline{G}_n}(Y=s|X) - \overline{p}_{\overline{G}^*}(Y=s|X)) \\
&= \sum_{j=1}^{\ks}\sum_{i \in \gA_j} \exp(\gamma_i^n)[\overline{u}(Y=s|X;\alpha_j^n, \beta_j^n, \tau^n, a^n, b^n) - \overline{u}(Y=s|X;\alpha_j^*,\beta_j^*, \tau^*, a^*, b^*)] \\
& \quad - \sum_{j=1}^{\ks}\sum_{i \in \gA_j} \exp(\gamma_i^n)[\overline{v}(Y=s|X;\alpha_j^n, \beta_j^n, \tau^n) - \overline{v}(Y=s|X;\alpha_j^*,\beta_j^*,\tau^* )] \\
& \quad + \sum_{j=1}^{\ks}\left(\sum_{i \in \gA_j}\exp(\gamma_i^n) - \exp(\gamma_j^*)\right) \overline{u}(Y=s|X;\alpha_j^*,\beta_j^*,\tau^*,a^*,b^*) \\
&\quad - \sum_{j=1}^{\ks}\left(\sum_{i \in \gA_j}\exp(\gamma_i^n) - \exp(\gamma_j^*)\right)\overline{v}(Y=s|X;\alpha_j^*,\beta_j^*,\tau^*)\\
&:= A_n-B_n+E_{n,1}-E_{n,2}.
\end{align*}
Given the above formulations of $A_n$ and $B_n$, we further decompose each into two smaller components according to the cardinality of the Voronoi cell $\gA_j$. In particular,
\begin{align*}
    A_n &= \sum_{j: |\gA_j| = 1}\sum_{i \in \gA_j} \exp(\gamma_i^n)[\overline{u}(Y=s|X;\alpha_j^n, \beta_j^n, \tau^n, a^n, b^n) - \overline{u}(Y=s|X;\alpha_j^*,\beta_j^*, \tau^*, a^*, b^*)] \\ 
    &\quad + \sum_{j: |\gA_j| > 1}\sum_{i \in \gA_j} \exp(\gamma_i^n)[\overline{u}(Y=s|X;\alpha_j^n, \beta_j^n, \tau^n, a^n, b^n) - \overline{u}(Y=s|X;\alpha_j^*,\beta_j^*, \tau^*, a^*, b^*)] \\
    &:= A_{n,1} + A_{n,2}.
\end{align*}
Let us denote $h_{\ell}(X; a_{i\ell}, b_{i\ell}):= (a_{i\ell})^\top X+ b_{i\ell}, \quad g_i(X; \alpha_i, \beta_i,\tau) := (\|\alpha_i - X \|+ \beta_i)/\tau$  for any $\ell \in [K]$ and $i \in [k]$. By means of the first-order Taylor expansion, $A_{n,1}$ can be represented as 
\begin{align*}
    A_{n,1} &= \sum_{j:|\gA_j| =1}\sum_{i \in \gA_j} \exp(\gamma_i^n)\sum_{|t|=1}\dfrac{1}{t!}(\Delta\alpha_{ij})^{t_1}(\Delta \beta_{ij})^{t_2}(\Delta\tau_{ij})^{t_3}\prod_{\ell=1}^K(\Delta a_{ij\ell}^n)^{t_{4\ell}}(\Delta b_{ij\ell}^{n})^{t_{5\ell}} \\
    &\quad \times \dfrac{\partial^{|t_1|+t_2+t_3}\sigma \circ g_j}{\partial \alpha_j^{t_1}\partial\beta_j^{t_2}\partial\tau^{t_3}}(X;\alpha_j^*, \beta_j^*, \tau^*)\times \dfrac{\partial^{\sum_{\ell=1}^K(|t_{4\ell}|+t_{5\ell})}f}{\prod_{\ell=1}^K\partial a_{j\ell}^{|t_{4\ell}|}\partial b_{j\ell}^{t_{5\ell}}}(Y=s|X;a_j^*, b_j^*) + R_1(X,Y),
\end{align*}
where $t=( t_1,t_2,t_3, t_{41}, \ldots,t_{4K}, t_{51}, \ldots, t_{5K})$ with $t_1, t_{4\ell} \in \mathbb{N}^d$, $t_2,t_3, t_{5\ell} \in \mathbb{N}$ for all $\ell \in [K]$. Additionally, $R_1(X,Y)$ is a Taylor remainder such that $R_1(X,Y)/D_{3}(\overline{G}_n,\overline{G}^*) \to 0$ as $n \to \infty$. From the formulation of function $h$, we have 
\begin{align*}
    A_{n,1} &= \sum_{j: |\gA_j| = 1}\sum_{i \in \gA_j} \exp(\gamma_i^n)\sum_{|t|=1}\dfrac{1}{t!}(\Delta\alpha_{ij})^{t_1}(\Delta \beta_{ij})^{t_2}(\Delta\tau_{ij})^{t_3}\prod_{\ell=1}^K(\Delta a_{ij\ell}^n)^{t_{4\ell}}(\Delta b_{ij\ell}^{n})^{t_{5\ell}} \dfrac{\partial^{|t_1|+t_2+t_3}g_j}{\partial \alpha_j^{t_1}\beta_j^{t_2}\tau^{t_3}}(X;\alpha_j^*,\beta_j^*,\tau_j^*) \\
    &\quad \times X^{\sum_{\ell=1}^Kt_{4\ell}} \times \dfrac{\partial^{|t_1|+t_2+t_3}\sigma \circ g_j}{\partial g_j^{|t_1|+t_2+t_3}}(X;\alpha_j^*,\beta_j^*, \tau_j^*)\times \dfrac{\partial^{\sum_{\ell=1}^K(|t_{4\ell}|+t_{5\ell})}f}{\prod_{\ell=1}^K\partial h_{\ell}^{|t_{4\ell}|+t_{5\ell}}}(Y=s|X;a_j^*, b_j^*) + R_1(X,Y).
\end{align*}
We define $q_1 = \sum_{\ell=1}^Kt_{4\ell} \in \mathbb{N}^d, q_2 = |t_1| + t_2+t_3 \in \mathbb{N}$, $q_3 = (q_{3\ell})_{\ell=1}^K:= (|t_{4\ell}|+ t_{5\ell})_{\ell=1}^K \in \mathbb{N}^K$. Moreover, we denote 
$\gI^z := \{(q_1,q_2,q_3) \in \mathbb{N}^d \times \mathbb{N}\times \mathbb{N}^K : q_2 + |q_3|= z, |q_1| \le |q_3|\}$ and $$\gJ_{q_1,q_2,q_3}:= \left\{t = ( t_1,t_2,t_3, t_{41}, \ldots,t_{4K}, t_{51}, \ldots, t_{5K}): \sum_{\ell=1}^Kt_{4\ell} = q_1, |t_1|+t_2+t_3 = q_2, (|t_{4\ell}|+t_{5\ell})_{\ell=1}^K=q_3\right\}.$$
The term $A_{n,1}$ can be written as follows:
\begin{align*}
    A_{n,1} &= \sum_{j:|\gA_j|=1}\sum_{t=1}\sum_{(q_1,q_2,q_3) \in \gI^t}\sum_{t \in \gJ_{q_1,q_2,q_3}}\sum_{i \in \gA_j} \dfrac{\exp(\gamma_i^n)}{t!}(\Delta\alpha_{ij})^{t_1}(\Delta \beta_{ij})^{t_2}(\Delta\tau_{ij})^{t_3}\prod_{\ell=1}^K(\Delta a_{ij\ell}^n)^{t_{4\ell}}(\Delta b_{ij\ell}^{n})^{t_{5\ell}} \\
    &\quad \times \dfrac{\partial^{|t_1|+t_2+t_3}g_j}{\partial \alpha_j^{t_1}\partial\beta_j^{t_2}\partial\tau^{t_3}}(X;\alpha_j^*,\beta_j^*,\tau_j^*) \times X^{q_1} \times \sigma^{(q_2)}(g_j (X;\alpha_j^*,\beta_j^*, \tau_j^*)) \times \dfrac{\partial^{|q_3|}f}{\partial h^{q_3}}(Y=s|X;a_j^*, b_j^*)\\
    &+ R_1(X,Y).
\end{align*}
Similarly, by means of second-order Taylor expansion, the term $A_{n,2}$ can be expressed as follows
\begin{align*}
    A_{n,2} &= \sum_{j:|\gA_j|>1}\sum_{t=1}^2\sum_{(q_1,q_2,q_3) \in \gI^t}\sum_{t \in \gJ_{q_1,q_2,q_3}}\sum_{i \in \gA_j} \dfrac{\exp(\gamma_i^n)}{t!}(\Delta\alpha_{ij})^{t_1}(\Delta \beta_{ij})^{t_2}(\Delta\tau_{ij})^{t_3}\prod_{\ell=1}^K(\Delta a_{ij\ell}^n)^{t_{4\ell}}(\Delta b_{ij\ell}^{n})^{t_{5\ell}} \\
    &\quad \times \dfrac{\partial^{|t_1|+t_2+t_3}g_j}{\partial \alpha_j^{t_1}\partial\beta_j^{t_2}\partial\tau^{t_3}}(X;\alpha_j^*,\beta_j^*,\tau_j^*) \times X^{q_1} \times \sigma^{(q_2)}(g_j (X;\alpha_j^*,\beta_j^*, \tau_j^*)) \times \dfrac{\partial^{|q_3|}f}{\partial h^{q_3}}(Y=s|X;a_j^*, b_j^*) \\&+ R_2(X,Y), 
\end{align*}
where $R_2(X,Y)$ is the Taylor remainder such that $R_2(X,Y)/\gD_3(\overline{G}_n, \overline{G}^*) \to 0$ as $n\to \infty$. 

Similarly, by applying the same arguments for decomposing $A_n$, we can express $B_n = B_{n,1} + B_{n,2}$ where 
\begin{align*}
    B_{n,1} &= \sum_{j:|\gA_j|=1}\sum_{i \in \gA_j}\exp(\gamma_j^n)\sum_{|t|=1}\dfrac{1}{t!}(\Delta \alpha_{ij}^n)^{t_1}(\Delta\beta_{ij}^n)^{t_2}(\Delta\tau_{ij})^{t_3}\sigma^{(|t|)}(g_j(X;\alpha_j^*, \beta_j^*, \tau^*))\\
    &\qquad \times \dfrac{\partial^{|t|}g_j}{\partial\alpha_j^{t_1}\partial\beta_j^{t_2}\partial\tau^{t_3}}(X;\alpha_j^*, \beta_j^*, \tau^*) \overline{p}_{\overline{G}_n}(Y=s|X)+R_3(X,Y) \\
    B_{n,2} &= \sum_{j:|\gA_j|>1}\sum_{i \in \gA_j}\exp(\gamma_j^n)\sum_{|t|=1}^2\dfrac{1}{t!}(\Delta \alpha_{ij}^n)^{t_1}(\Delta\beta_{ij}^n)^{t_2}(\Delta\tau_{ij})^{t_3}\sigma^{(|t|)}(g_j(X;\alpha_j^*, \beta_j^*, \tau^*))\\
    &\qquad \times \dfrac{\partial^{|t_1| + t_2 + t_3}g_j}{\partial\alpha_j^{t_1}\partial\beta_j^{t_2}\partial\tau^{t_3}}(X;\alpha_j^*, \beta_j^*, \tau^*) \overline{p}_{\overline{G}_n}(Y=s|X)+R_4(X,Y),
\end{align*}
where $R_3(X,Y)$ and $R_4(X,Y)$ are Taylor remainders such that 

$R_3(X,Y)/\gD_3(\overline{G}_n, \overline{G}^*) \to 0, R_4(X,Y)/\gD_3(\overline{G}_n, \overline{G}^*) \to 0$ as $n \to \infty$. We can verify that 
\begin{align}
    A_n + E_{n,1} &= \sum_{j=1}^{\ks}\sum_{t=0}^{1+\mathbf{1}_{\{|\gA_j|>1\}}}\sum_{(q_1,q_2,q_3 )\in \gI^{t}}U^n_{q_1,q_2,q_3}(j)\times \dfrac{\partial^{|t_1|+t_2+t_3}g_j}{\partial \alpha_j^{t_1}\partial\beta_j^{t_2}\partial\tau^{t_3}}(X;\alpha_j^*,\beta_j^*,\tau_j^*)\nonumber\\
    &\quad \times X^{q_1} \times \sigma^{(q_2)}(g_j (X,\alpha_j^*,\beta_j^*, \tau_j^*)) \times \dfrac{\partial^{|q_3|}f}{\partial h^{q_3}}(Y=s|X;a_j^*, b_j^*)+R_1(X,Y) + R_2(X,Y); \label{eq: a_n + e_n,1}\\
    B_n+E_{n,2} &= \sum_{j=1}^{\ks}\sum_{t=0}^{1 + \mathbf{1}_{|\gA_j|>1}} W_t^n(j) \dfrac{\partial^{|t_1|+t_2+t_3}g_j}{\partial\alpha_j^{t_1}\partial\beta_j^{t_2}\partial\tau^{t_3}}(X;\alpha_j^*, \beta_j^*, \tau^*) \nonumber \\
    &\quad \times \sigma^{(|t|)}(g_j(X;\alpha_j^*, \beta_j^*, \tau^*)) \overline{p}_{\overline{G}_n}(Y=s|X)+R_3(X,Y) + R_4(X,Y) \label{eq: b_n + e_n,2}
\end{align}
where we denote $U^n_{q_1,q_2,q_3}(j)$ equals to 
$$
\begin{cases}
    \displaystyle\sum_{i \in \gA_j}\sum_{t \in \gJ_{q_1,q_2,q_3}} \dfrac{\exp(\gamma_i^n)}{t!}(\Delta\alpha_{ij})^{t_1}(\Delta \beta_{ij})^{t_2}(\Delta\tau_{ij})^{t_3}\prod_{\ell=1}^K(\Delta a_{ij\ell}^n)^{t_{4\ell}}(\Delta b_{ij\ell}^{n})^{t_{5\ell}}, &(q_1,q_2,q_3)\ne (\mathbf{0}_d, 0, \mathbf{0}_K) 
    \\
    \sum_{i \in \gA_j}\exp(\gamma_i^n) - \exp(\gamma_j^*), &(q_1,q_2,q_3) = (\mathbf{0}_d, 0, \mathbf{0}_K)
\end{cases}
$$
and 
$$
W_t^n(j) = \begin{cases}
    \sum_{i \in \gA_j}\dfrac{\exp(\gamma_j^n)}{t!}(\Delta \alpha_{ij}^n)^{t_1}(\Delta\beta_{ij}^n)^{t_2}(\Delta\tau_{ij})^{t_3}, &(t_1,t_2,t_3) \ne (\mathbf{0}_d, 0, 0) \\ 
    \sum_{i \in \gA_j}\exp(\gamma_i^n) - \exp(\gamma_j^*)&(t_1,t_2,t_3) = (\mathbf{0}_d, 0, 0).
\end{cases}
$$
\textbf{Step 2.} In this step, we will prove that at least one among $U^n_{q_1,q_2,q_3}(j)/\gD_3(G_n, \overline{G}^*)$ does not vanish as $n \to \infty$. Assume, by contradiction, that all of them vanish as $n \to \infty$. By taking the summation of $|U^n_{q_1,q_2,q_3}(j)|/\gD_3(G_n, \overline{G}^*)$ for all $j \in \{[\ks]: |\gA_j|=1\}$, $q_2 = 0, q_3 = \mathbf{0}_K$, and  $q_1 \in \{e_1,e_2,\ldots,e_d\}$, where $e_i := (0,\ldots,0,\underbrace{1}_{i\text{-th}},0,\ldots,0) \in \mathbb{R}^d$, we achieve that 
$$
\dfrac{1}{\gD_3(\overline{G}_n, \overline{G}^*)}\sum_{j:|\gA_j| = 1}\sum_{i \in \gA_j}\exp(\gamma_i)\|\Delta a_{ij}^n\| \to 0.
$$
Similarity for $q_1 = \mathbf{0}_d$, $q_2 = 0$, and $q_3 \in \{e'_1, \ldots, e'_K\}$, where $e'_\ell := (0,\ldots,0,\underbrace{1}_{\ell\text{-th}},0,\ldots,0) \in \mathbb{R}^K$, we have 
$$
\dfrac{1}{\gD_3(\overline{G}_n, \overline{G}^*)}\sum_{j:|\gA_j| = 1}\sum_{i \in \gA_j}\exp(\gamma_i)|\Delta b_{ij}^n| \to 0.
$$
On the other hand, taking $(q_1,q_2,q_3) = (\mathbf{0}_d, 1, \mathbf{0}_K)$ and considering the three cases $|t_1|=1, t_2=1$, and $t_3 =1$ respectively, we obtain that 
$$
\dfrac{1}{\gD_3(\overline{G}_n, \overline{G}^*)}\sum_{j:|\gA_j| = 1}\sum_{i \in \gA_j}\exp(\gamma_i)\|\Delta \alpha_{ij}^n\| \to 0.
$$
$$
\dfrac{1}{\gD_3(\overline{G}_n, \overline{G}^*)}\sum_{j:|\gA_j| = 1}\sum_{i \in \gA_j}\exp(\gamma_i)|\Delta \beta_{ij}^n| \to 0.
$$
$$
\dfrac{1}{\gD_3(\overline{G}_n, \overline{G}^*)}\sum_{j:|\gA_j| = 1}\sum_{i \in \gA_j}\exp(\gamma_i)|\Delta \tau| \to 0.
$$
Combining the 5 results above, we deduce that 
\begin{equation}\label{eq: limit1_temperature}
    \sum_{j:|\gA_j| = 1}\sum_{i \in \gA_j}K_{ij}(1,1,1,1,1)/\gD_3(\overline{G}_n, \overline{G}^*) \to 0.
\end{equation}
By applying the same argument, we also deduce that 
\begin{equation}\label{eq: limit2_temperature}
    \sum_{j:|\gA_j| > 1}\sum_{i \in \gA_j}K_{ij}(2,2,2,2,2)/\gD_3(G_n, \overline{G}^*) \to 0
\end{equation} Additionally, for tuple $(q_1,q_2,q_3) = (\mathbf{0}_d, 0, \mathbf{0}_K)$, it follows that 
\begin{equation}\label{eq: limit3_temperature}
    \dfrac{1}{\gD_3(\overline{G}_n, \overline{G}^*)}\cdot \left(\sum_{j=1}^{\ks}
    \left|\sum_{i\in\gA_j} \exp(\gamma_i) - \exp(\gasi)\right|\right) \to 0
\end{equation}
It is induced from the limits in equations \eqref{eq: limit1_temperature}, \eqref{eq: limit2_temperature}, \eqref{eq: limit3_temperature} that $1 = \gD_3(\overline{G}_n, \overline{G}^*)/\gD_3(\overline{G}_n, \overline{G}^*) \to 0$ when $n \to \infty$, which is a contradiction. Thus, at least one among $U^n_{q_1,q_2,q_3}(j)/\gD_3(\overline{G}_n, \overline{G}^*)$ does not go to zero when $n \to \infty$. 

\textbf{Step 3.} Now, we denote $\overline{m}_n$ as the maximum of the absolute values of $U^n_{q_1,q_2,q_3}(j)/\gD_3(\overline{G}_n, \overline{G}^*)$ and $W^n_t(j)/\gD_3(\overline{G}_n, \overline{G}^*)$. Since at least one among $U^n_{q_1,q_2,q_3}(j)/\gD_3(\overline{G}_n, \overline{G}^*)$ does not go to zero when $n \to \infty$, we deduce that $\overline{m}_n \not\to 0$, and equivalently $1/\overline{m}_n \not\to \infty$. Then we denote 
\begin{align*}
    U^n_{q_1,q_2,q_3}(j)/\gD_3(\overline{G}_n, \overline{G}^*) &\to \zeta_{q_1,q_2,q_3}(j) \\
    W^n_t(j)/\gD_3(\overline{G}_n, \overline{G}^*) &\to \eta_t(j),
\end{align*}
as $n \to \infty$. By invoking the Fatou's lemma, we have that 
$$
0 = \lim_{n \to \infty} \frac{\mathbb{E}_X[2d_V(\overline{p}_{\overline{G}_n}(\cdot|X), \overline{p}_{\overline{G}^*}(\cdot|X))]}{\overline{m}_n \gD_3(\overline{G}_n, \overline{G}^*)} \ge \int \sum_{s=1}^K \liminf_{n \to \infty} \left| \frac{\overline{p}_{\overline{G}_n}(Y=s|X) - \overline{p}_{\overline{G}^*}(Y=s|X)}{m_n \gD_3(\overline{G}_n, \overline{G}^*)} \right| \mathrm{d} X \ge 0.
$$
Consequently, it indicates that $|\overline{p}_{\overline{G}_n}(Y=s|X) - \overline{p}_{\overline{G}^*}(Y=s|X)|/[\overline{m}_n \gD_3(\overline{G}_n, \overline{G}^*)]\to 0$ for any $s \in [K]$ and almost surely $X$. This result is equivalent to 
\begin{equation}\label{eq: step 3}
    T_n(s)/[m_n\gD_3(\overline{G}_n, \overline{G}^*)] \to 0,
\end{equation}
as $n \to \infty$, for any $s \in [K]$. Putting the results in equations \eqref{eq: a_n + e_n,1}, \eqref{eq: b_n + e_n,2}, and \eqref{eq: step 3} together, we have 
\begin{align*}
\sum_{j=1}^{k^*}\sum_{t=0}^{1+\mathbf{1}_{\{|\gA_j|>1\}}}\Bigg[\sum_{(q_1,q_2,q_3)\in \gI^{t}}\zeta_{q_1,q_2,q_3}(j)X^{q_1}\sigma^{(q_2)}(g_j (X,\alpha_j^*,\beta_j^*, \tau_j^*)) \dfrac{\partial^{|q_3|}f}{\partial h^{q_3}}(Y=s|X;a_j^*, b_j^*)\\
+\eta_t(j)\sigma^{(|t|)}(g_j(X;\alpha_j^*, \beta_j^*, \tau^*)) \overline{p}_{\overline{G}^*}(Y=s|X)\Bigg]\times \dfrac{\partial^{|t_1|+t_2+t_3}g_j}{\partial\alpha^{t_1}\partial\beta^{t_2}\partial\tau^{t_3}}(X;\alpha_j^*, \beta_j^*, \tau^*) = 0,
\end{align*}
for almost surely $(X,Y)$. It is worth noting that parameters $\alpha_j^*, \beta_j^*$ are pair-wise distinct, then, the set
\begin{align*}
    \Bigg\{ &X^{q_1}\sigma^{(q_2)}(g_j (X,\alpha_j^*,\beta_j^*, \tau_j^*)) \dfrac{\partial^{|q_3|}f}{\partial h^{q_3}}(Y=s|X;a_j^*, b_j^*)\dfrac{\partial^{|t_1|+t_2+t_3}g_j}{\partial\alpha^{t_1}\partial\beta^{t_2}\partial\tau^{t_3}}(X;\alpha_j^*, \beta_j^*, \tau^*) ; \\
    & \sigma^{(|t|)}(g_j(X;\alpha_j^*, \beta_j^*, \tau^*)) \overline{p}_{\overline{G}^*}(Y=s|X) \dfrac{\partial^{|t_1|+t_2+t_3}g_j}{\partial\alpha^{t_1}\partial\beta^{t_2}\partial\tau^{t_3}}(X;\alpha_j^*, \beta_j^*, \tau^*) \Bigg\}
\end{align*}
is linearly independent for any $j \in [\ks]$, $0 \le q_2 + |q_3| \le 2, 0 \le |t| \le 2$ and $j \in [\ks]$. Therefore, we obtain that all the coefficients $\zeta_{q_1,q_2,q_3} = \eta_{t} = 0$, which contradicts the fact that at least one among $\zeta_{q_1,q_2,q_3}$ is different from zero. 

\section{Proofs of Auxiliary Results}
\label{appendix:auxiliary_results}

\subsection{Proof of Proposition~\ref{prop:density_rate}}
\label{appendix:density_rate}

\subsubsection{Key Results}

Initially, we introduce some notations that will be used in our proof.
Let $P_k(\Theta) := \{ p_G(Y\mid X) : G \in \gG_k(\Theta)\}$ be the set of conditional
density functions of all mixing measures in $\gG_k(\Theta)$.
In addition, we define $N(\varepsilon, P_k(\Theta), \|\cdot\|_\infty)$ as the covering number \cite{Vandegeer-2000} of metric space $(P_k(\Theta), \|\cdot\|_\infty)$, while
$H_B(\varepsilon, P_k(\Theta), h)$ is the bracketing entropy \cite{Vandegeer-2000} of $P_k(\Theta)$
under the Hellinger distance. Then, the following result provides the upper bound of these quantities:

\begin{lemma}\label{lemma:n-covering}
For any bounded set $\Theta$ and $\varepsilon \in (0,1/2)$, we have
\begin{itemize}
    \item[(i)] $\log N(\varepsilon, P_k(\Theta),\|\cdot\|_\infty) \lesssim Kdk \log(1/\varepsilon)$;
    \item[(ii)] $H_B(\varepsilon, P_k(\Theta), d_{H}) \lesssim Kdk \log(1/\varepsilon)$.
\end{itemize}
\end{lemma}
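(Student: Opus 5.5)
Part (i) reduces the sup-norm covering problem for $P_k(\Theta)$ to a covering of the compact parameter set, using Lipschitz continuity of $G\mapsto p_G$. Write $\eta=\varepsilon/L$ for a constant $L$ fixed below. Let $\Theta_\eta$ be an $\eta$-net of $\Theta$ in Euclidean norm. By Assumption (A.1), $\Theta$ sits in a space of dimension $O(Kd)$, namely $1+d+1+dK+K$. Hence $|\Theta_\eta|\lesssim (1/\eta)^{O(Kd)}$. For each $G=\sum_{i=1}^{k'}\exp(\gamma_i)\delta_{\theta_i}\in\gG_k(\Theta)$, pick $\bar G$ whose atoms $(\bar\gamma_i,\bar\theta_i)$ are the nearest net points. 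The number of such $\bar G$ is at most $k\,|\Theta_\eta|^k$, so
\[
\log N(\varepsilon,P_k(\Theta),\|\cdot\|_\infty)\lesssim Kdk\log(1/\varepsilon).
\]
It remains to show $\|p_G-p_{\bar G}\|_\infty\le L\eta$. Both the numerator $\sum_i \exp(\gamma_i)\sigma(\alpha_i^\top X+\beta_i)f(s|X;a_i,b_i)$ and the denominator $\sum_i\exp(\gamma_i)\sigma(\cdot)$ are Lipschitz in the parameters, uniformly over $X$ in the bounded set $\mathcal{X}$. This uses the facts that $\sigma'\le 1/4$, that $f$ has bounded gradient in $(a,b)$ (with $X$ bounded), and that $\exp(\gamma)$ is bounded on compacts. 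Moreover, the denominator is bounded below by a positive constant: the compactness of $\Theta$ and the boundedness of $\mathcal{X}$ force $\sigma(\alpha^\top X+\beta)\ge c>0$, and $\exp(\gamma_i)\ge c'$. The quotient of Lipschitz functions with a denominator bounded away from zero is Lipschitz, which gives the constant $L$. The same argument works verbatim in the temperature models once $\tau$ is bounded away from $0$ on $\Upsilon$.

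For (ii), we convert the sup-norm net into Hellinger brackets in the standard way (cf.\ van de Geer). Let $\eta\le\varepsilon$ and let $g_1,\dots,g_N$ be an $\eta$-net of $P_k(\Theta)$ in $\|\cdot\|_\infty$. Form brackets $[g_i^L,g_i^U]$ with $g_i^L=\max(g_i-\eta,0)$ and $g_i^U=\min(g_i+\eta,1)$; every $p_G$ lies in one of them. Since $Y$ ranges over the finite set $[K]$ and $X$ has a probability distribution, the bracket size satisfies
\[
d_H^2(g^L,g^U)\le\tfrac12\int\sum_{s}|g^U-g^L|\,d\mathbb{P}_X\le K\eta,
\]
using $(\sqrt u-\sqrt v)^2\le|u-v|$. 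Choosing $\eta=\varepsilon^2/K$ gives brackets of Hellinger size $\varepsilon$. Therefore
\[
H_B(\varepsilon,P_k(\Theta),d_H)\le\log N(\varepsilon^2/K,P_k(\Theta),\|\cdot\|_\infty)\lesssim Kdk\log(1/\varepsilon),
\]
where the extra $\log K$ and the factor $2$ in the exponent are absorbed into constants for $\varepsilon<1/2$.

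The main obstacle is the uniform lower bound on the gating denominator, which is what makes the Lipschitz constant finite. It relies on the compactness in (A.1), including the bound on $\gamma$. Note that (A.3) fixes only $\gamma_{k^*}=0$, so the bound on the other $\gamma$'s must come from the compactness of $\Theta$. The other delicate point is keeping track of the dimension count so that the exponent is $Kdk$ rather than something larger.
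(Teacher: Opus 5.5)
Your proposal is correct and follows the paper's route. In (i) you discretize the compact parameter set by nets and use uniform Lipschitz continuity of $G\mapsto p_G$ in sup-norm; your single joint net and explicit lower bound on the gating denominator are a tidier version of the paper's two-stage expert/gating cover, where the Lipschitz bound on the normalized gate is asserted without justification. In (ii) you turn a sup-norm $\eta$-net into brackets $[\max(p_i-\eta,0),\min(p_i+\eta,1)]$, as the paper does, and your conversion $d_H^2(g^L,g^U)\le\frac12\int\sum_s|g^U-g^L|\,d\mathbb{P}_X$ with $\eta=\varepsilon^2/K$ is more accurate than the paper's step ``$h\le\|\cdot\|_1$'' with $\eta=\varepsilon/(2K)$, since Hellinger bracket size is controlled by the square root of the $L_1$ width; the squared scale only changes the constant in front of $\log(1/\varepsilon)$.
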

\begin{proof}[Proof of Lemma~\ref{lemma:n-covering}]

\textbf{Part (i)}  
Firstly, we introduce the parameter sets
\[
\Omega := \{ (a,b) \in \mathbb{R}^{d\times K} \times \mathbb{R}^{K} : (\gamma, \alpha, \beta,a,b) \in \Theta\}.
\]
\[
\Delta := \{ (\gamma, \alpha, \beta) \in \mathbb{R}\times\mathbb{R}^d\times\mathbb{R} : (\gamma, \alpha, \beta,a,b)\in\Theta\},
\]
Because $\Theta$ is a compact set, both $\Omega$ and $\Theta$ are compact. Consequently, each admits an $\varepsilon$-cover; denote them by $\Omega_\varepsilon$ and $\Delta_\varepsilon$ respectively.
It can be validated that
\[
|\Omega_\varepsilon| \lesssim \mathcal{O}(\varepsilon^{-K(d+1)k}),
\qquad
|\Delta_\varepsilon| \lesssim \mathcal{O}(\varepsilon^{-(d+2)k}).
\]

Next, consider a mixing measure $
G = \sum_{i=1}^{k'} \exp(\gamma_{i})\delta_{(\alpha_{i}, \beta_{i},a_i,b_i)} \in \gG_k(\Theta),
 \forall \, k' \in [k].
$ Using the $\varepsilon$-covers introduced above, we associate to $G$ two approximating measures.

First, let $(\bar{a}_i, \bar{b}_i) \in \Omega_\varepsilon$ be the point in the cover that is closest to $(a_i, b_i)$ for each $i \in [k']$, and define
\[
\widetilde{G} = \sum_{i=1}^{k'} \exp(\gamma_{i}) \delta_{(\alpha_{i},\beta_{i},\bar{a}_i,\bar{b}_i)}.
\]

Next, take $(\bar{\gamma}_i, \bar{\alpha}_i, \bar{\beta}_i) \in \Delta_\varepsilon$ as the closest cover point to $(\gamma_i, \alpha_i, \beta_i)$, and set
\[
\bar{G} = \sum_{i=1}^{k'} \exp(\bar{\gamma}_{i}) \delta_{(\bar{\alpha}_{i},\bar{\beta}_{i},\bar{a}_i,\bar{b}_i)}.
\]

It follows directly from the construction that $p_{\bar{G}}$ belongs to the family
\[
\mathcal{H} := \left\{
    p_G \in P_k(\Theta)
    :
    (\bar{\gamma}_{i},\bar{\alpha}_{i}, \bar{\beta}_{i}) \in \Delta_\varepsilon,\;
    (\bar{a}_i,\bar{b}_i)\in\Omega_\varepsilon,\;
    \forall i \in [k']
\right\}.
\]

We now verify that $\mathcal{H}$ is an $\varepsilon$-cover of the metric space
$(P_k(\Theta),\|\cdot\|_\infty)$ but not necessarily the minimal one.
For that purpose, we bound the uniform distance
$\| p_G - p_{\overline{G}} \|_\infty$.
Applying the triangle inequality gives
\[
    \| p_G - p_{\overline{G}}\|_\infty
    \le
    \| p_G - p_{\widetilde{G}}\|_\infty
    +
    \| p_{\widetilde{G}} - p_{\overline{G}}\|_\infty.
\]

For any mixing measure 
\(G = \sum_{i=1}^{k'} \exp(\gamma_i)\delta_{(\alpha_i,\beta_i,a_i,b_i)}\),
we recall that the corresponding conditional density is
\[
p_G(Y = s\mid X)
=
\sum_{i=1}^{k'}
\pi(Y\mid X;\gamma_i,\alpha_i,\beta_i)\,
f(Y = s\mid X; a_i,b_i),
\]
where \[\pi(Y\mid X;\gamma_i,\alpha_i,\beta_i) = \frac{\exp(\gamma_i)\sigma((\alpha_i)^{\top}X+\beta_i)}{\sum_{j=1}^{k'}\exp(\gamma_j)\sigma((\alpha_j)^{\top}X+\beta_j)}\nonumber\text{, and } f(Y = s\mid X; a_i,b_i) = \frac{\exp((a_{is})^{\top}X+b_{is})}{\sum_{\ell=1}^{K}\exp((a_{i\ell})^{\top}X+b_{i\ell})}\]
Turning to the first term on the right-hand side,
\begin{align*}
    &\|p_{\widetilde{G}} - p_G \|_\infty\\
    & = \sum_{s=1}^K \sup_{X}
        \left|
            p_G(Y=s\mid X) - p_{\widetilde{G}}(Y=s\mid X)
        \right|. \\
    &= \sum_{s=1}^K \sup_{X} \left|
    \sum_{i=1}^{k'}
        \pi(Y\mid X;\gamma_i,\alpha_i,\beta_i)\;
        f(Y=s\mid X; a_i,b_i)
        -
    \sum_{i=1}^{k'}
        \pi(Y\mid X;{\gamma}_i,
                        {\alpha}_i,
                        {\beta}_i)\;
        f(Y=s\mid X; \bar{a}_i,\bar{b}_i)
\right| \\
& \le \sum_{s=1}^K
    \sup_X
    \sum_{i=1}^{k'}
        \left|
            \pi(Y\mid X;\gamma_i,\alpha_i,\beta_i)
        \right|
        \cdot
        \left|
            f(Y=s\mid X; a_i,b_i)
            -
            f(Y=s\mid X; \bar{a}_i,\bar{b}_i)
        \right| \\
    & \le \sum_{s=1}^K
    \sup_X
    \sum_{i=1}^{k'}
        \left|
            f(Y=s\mid X; a_i,b_i)
            -
            f(Y=s\mid X; \bar{a}_i,\bar{b}_i)
        \right|. 
\end{align*}
The differentiability of $f$ implies it is Lipschitz continuous with some constant $L>0$.
In addition, since $\mathcal{X}$ is a bounded set, we may assume that
$\|X\| \le B$ for some constant $B>0$.
This yields, we have
\[
\| p_{\widetilde{G}} - p_G \|_\infty
    \le \sum_{s=1}^K \sum_{i=1}^{k'}
        L \cdot
        \left(
            \| a_{is} - \bar{a}_{is} \|\cdot\|X\|
            + \| b_{is} - \bar{b}_{is} \|
        \right)
\]
\[
\le K k' L (B\varepsilon + \varepsilon)
    \lesssim \varepsilon.
\]

Similarly, we bound the second term as
\[
\| p_{\overline{G}} - p_{\widetilde{G}} \|_\infty
    = \sum_{s=1}^K
        \sup_X
        \sum_{i=1}^{k'}
        \left|
                \pi(Y\mid X;\overline{\gamma}_i,
                                \overline{\alpha}_i,
                                \overline{\beta}_i)
                -
                \pi(Y\mid X;\gamma_i,\alpha_i,\beta_i)
        \right| f(Y=s|X;\overline{a}_i; \overline{b}_i).
\]

\[
\le \sum_{s=1}^K \sum_{i=1}^{k'}
    L \cdot
    \left(
        \|\alpha_{i} - \bar{\alpha_{i}}\|\cdot\|X\|
        +
        |\beta_{i} - \bar{\beta}_{i}|
        + |\gamma_{i} - \bar{\gamma}_{i}|
    \right)
\]
\[
\le K k' L (B\varepsilon + \varepsilon + \varepsilon)
    \lesssim \varepsilon.
\]
Putting these bounds together gives
\(
\| p_G - p_{\overline{G}} \|_\infty
    \lesssim \varepsilon.
\)
Hence $\mathcal{H}$ is an $\varepsilon$–cover of the metric space $(P_k(\Theta),\|\cdot\|_\infty)$.
From the definition of covering numbers, we therefore obtain
\[
N(\varepsilon, P_k(\Theta),\|\cdot\|_\infty)
    \le |\mathcal{H}|
    = |\Omega_\varepsilon| \times |\Delta_\varepsilon|
    = \mathcal{O}(\varepsilon^{-K(d+1)k})
      \times
      \mathcal{O}(\varepsilon^{-(d+2)k})
\]
\[
= \mathcal{O}(\varepsilon^{-(Kd + K + d + 2)k}),
\]
which is equivalent to
\[
\log N(\varepsilon, P_k(\Theta), \|\cdot\|_\infty)
    \lesssim Kdk\log(1/\varepsilon)\lesssim Kdk\log(1/\varepsilon).
\]

\noindent\textbf{Part (ii)} Fix $\varepsilon>0$ and choose $\eta \le \varepsilon$ (to be specified later)..
Let ${p_1,p_2,\ldots,p_N}$ be an $\eta$-cover of $P_k(\Theta)$ under the $|\cdot|\infty$ norm, with $N := N(\eta, P_k(\Theta), |\cdot|_\infty)$.
We now construct brackets of the form $[L_i(Y\mid X), U_i(Y\mid X)]$ for all $i\in[N]$ as below:
\[
L_i(Y\mid X) := \max\{p_i(Y\mid X)-\eta,\,0\},\qquad
U_i(Y\mid X) := \min\{p_i(Y\mid X)+\eta,\,1\}.
\]
From this construction it follows that
$P_k(\Theta) \subset \bigcup_{i=1}^N [L_i(Y\mid X),U_i(Y\mid X)]$
and $U_i(Y\mid X)-L_i(Y\mid X)\le \min\{2\eta,1\}$.
Moreover
\[
\|U_i(\cdot\mid X)-L_i(\cdot\mid X)\|_1
= \sum_{\ell=1}^K \bigl[U_i(Y=\ell\mid X)-L_i(Y=\ell\mid X)\bigr]
\le 2K\eta.
\]
Recall that $H_B(2K\eta, P_k(\Theta),|\cdot|_1)$ is defined as the logarithm of the minimal number of brackets with $L_1$-width no larger than $2K\eta$ needed to cover $P_k(\Theta)$. For any $p\in P_k(\Theta)$, there exists some $p_i$ in the $\eta$-cover such that $|p-p_i|\infty \le \eta$. The corresponding bracket $[L_i,U_i]$ then satisfies
\[
p \in [L_i,U_i]
\qquad\text{and}\qquad
\|U_i - L_i\|_1 \le 2K\eta.
\]
Hence each $p_i$ in the $\eta$-cover yields a valid $2K\eta$-bracket.
Consequently, the number of required brackets does not exceed $N(\eta, P_k(\Theta),|\cdot|\infty)$, giving
\[
H_B(2K\eta, P_k(\Theta),\|\cdot\|_1)
\;\le\;
\log N(\eta, P_k(\Theta),\|\cdot\|_\infty)
\;\lesssim\;
 Kdk\log(1/\eta),
\]
where the final inequality uses the bound from Part (i).
Taking $\eta=\varepsilon/(2K)$ therefore yields $H_B(\varepsilon, P_k(\Theta), |\cdot|_1)\lesssim Kdk \log(1/\varepsilon)$.
Since the Hellinger distance satisfies $h \le |\cdot|_1$, we obtain
\[
H_B(\varepsilon, P_k(\Theta), d_{H}) \;\lesssim Kdk\; \log(1/\varepsilon).
\]
\end{proof}
Given a mixing set $\gG_k(\Theta)$, we introduce the associated conditional density families
\[
\overline{P}_k(\Theta)\;:=\;\{\,p_{(G+G^*)/2}(\,\cdot\mid X):\,G\in \gG_k(\Theta)\,\},
\qquad
\overline{P}_k^{1/2}(\Theta)\;:=\;\{\,p_{(G+G^*)/2}^{1/2}(\,\cdot\mid X):\,G\in \gG_k(\Theta)\,\}.
\]
For any $\xi>0$, we define the Hellinger ball centered around the true conditional density
$p_{\Gs}(\,\cdot\mid X)$ and intersected with the set $\overline{P}_k^{1/2}(\Theta)$ as
\[
\overline{P}_k^{1/2}(\Theta,\xi)
:=\Bigl\{
p^{1/2}\in \overline{P}_k^{1/2}(\Theta):\;
\bbE_X\!\big[d_{H}\!\big(p(\,\cdot\mid X),\,p_{\Gs}(\,\cdot\mid X)\big)\big]\le \xi
\Bigr\}.
\]

As before, $H_B(\varepsilon,\mathcal{F},|\cdot|)$ denotes the bracketing entropy of a function class $\mathcal{F}$ with respect to the norm $|\cdot|$. Following \cite{Vandegeer-2000},
we define the localized bracketing integral to capture for $\xi>0$ the size of the Hellinger ball as
\[
J_B\!\big(\xi,\overline{P}_k^{1/2}(\Theta,\xi)\big)
:=\int_{\xi^2/2^{13}}^{\xi} \sqrt{\,H_B\!\big(t,\overline{P}_k^{1/2}(\Theta,\xi),\|\cdot\|\big)\,}\;dt \lor \xi.
\]
where $t \lor \xi := \max\{t, \xi\}$. We now state a key result from \cite{Vandegeer-2000} on density estimation rates, adapted to our notation.


\begin{lemma}\label{lemma:geer-2000}
(Theorem 7.4, \cite{Vandegeer-2000}) \\
Let $\Psi(\xi)$ satisfy $\Psi(\xi) \ge J_B(\xi,\overline{P}_k^{1/2}(\Theta,\xi))$ with $\Psi(\xi)/\xi^2$ non-increasing in $\xi$. Then for a universal constant $c$ and any sequence $(\xi_n)$ fulfilling $\sqrt{n}\xi_n^2 \ge c\Psi(\xi_n)$, we have
\[
\mathbb{P}\Big( \mathbb{E}_X[d_{H}(p_{\widehat{G}_n}(\cdot|X), p_{G^*}(\cdot|X))] > \xi \Big)
\le c\exp\Big( -\frac{n\xi^2}{c^2} \Big),
\]
for any $\xi \ge \xi_n$.
\end{lemma}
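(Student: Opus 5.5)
This is Theorem 7.4 of \cite{Vandegeer-2000} restated in our notation, so the plan is to recall its mechanism and check that our conditional setting fits it, not to reprove the chaining machinery.

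\textbf{Reduction to a joint density.} Let $q(x)$ be the marginal density of $X$ and view each $p_G(y\mid x)q(x)$ as a joint density on $\mathcal{X}\times[K]$. Since the factor $q$ is common to all models, the joint squared Hellinger distance equals $\bbE_X[d_H^2(p_G(\cdot\mid X),p_{\Gs}(\cdot\mid X))]$. By Jensen, $\bbE_X[d_H]\le(\bbE_X[d_H^2])^{1/2}$. It therefore suffices to prove the tail bound for the joint Hellinger distance $h$, because $\widehat{G}_n$ is exactly the joint MLE.

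\textbf{Basic inequality via the midpoint density.} Write $\bar p_n:=p_{(\widehat{G}_n+\Gs)/2}$, understood as the pointwise average of the two densities.
\begin{enumerate}
\item By concavity of $\log$, $\log(\bar p_n/p_{\Gs})\ge\frac12\log(p_{\widehat{G}_n}/p_{\Gs})$, and the empirical mean of the right-hand side is nonnegative by maximality of the MLE.
\item The standard inequality $\int\log(p^*/\bar p)\,p^*\ge 2h^2(\bar p,p^*)$ then gives
\[
h^2(\bar p_n,p_{\Gs})\lesssim(P_n-P)\Big[\tfrac12\log\big(\bar p_n/p_{\Gs}\big)\Big].
\]
\item Since $h(p_{\widehat{G}_n},p_{\Gs})\le 4\,h(\bar p_n,p_{\Gs})$, it suffices to control $\bar p_n$.
\end{enumerate}
The midpoint trick matters because $\bar p_n\ge p_{\Gs}/2$. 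This makes the log-ratio bounded below, so it can be compared to $\sqrt{\bar p_n/p_{\Gs}}-1$, with Bernstein-type norm controlled by the Hellinger distance.

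\textbf{Peeling and chaining.} On the event $h(\bar p_n,p_{\Gs})>\xi$, I will peel into shells $2^{s}\xi<h\le 2^{s+1}\xi$. On each shell, the supremum of the empirical process indexed by $\overline{P}_k^{1/2}(\Theta,2^{s+1}\xi)$ must exceed order $\sqrt n\,2^{2s}\xi^2$. The uniform deviation inequality for empirical processes with bracketing (van de Geer, Theorem 5.11) bounds this probability by $c\exp(-n2^{2s}\xi^2/c^2)$, provided $\sqrt n(2^s\xi)^2\ge c\,J_B(2^{s+1}\xi,\cdot)$. That proviso follows from $\sqrt n\xi_n^2\ge c\Psi(\xi_n)$, together with $\Psi(\xi)/\xi^2$ non-increasing and $\xi\ge\xi_n$. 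Summing the geometric series over $s$ gives the stated bound.

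\textbf{Main obstacle.} The delicate step is the chaining bound for the log-likelihood-ratio class, since $\log$ is unbounded. The lower bound $\bar p_n\ge p_{\Gs}/2$ is what resolves this: it makes the class satisfy a Bernstein condition with norm comparable to the Hellinger distance. It also lets brackets for $\bar p^{1/2}$ in $L_2$ translate into brackets for the log-ratio. In our application, these bracketing entropies are finite and of order $Kdk\log(1/t)$ by Lemma~\ref{lemma:n-covering}, which is what will make $\Psi$ computable downstream.
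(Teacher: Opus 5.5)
Your proposal is correct and follows the same route as the paper. The paper gives no argument beyond invoking Theorem 7.4 of \cite{Vandegeer-2000}, and your basic inequality, the bound $\bar p_n\ge p_{\Gs}/2$, the factor-$4$ comparison and the peeling with the bracketing deviation inequality are exactly the mechanism of that theorem.

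The checks you add are ones the paper leaves implicit, and they all go in the right direction. The conditional MLE is the joint MLE for $q(x)p_G(y\mid x)$, and $\bbE_X[d_H]\le h$ by Jensen. Reading $p_{(\widehat{G}_n+\Gs)/2}$ as the pointwise average is necessary: the literal mixing-measure average reweights the two densities by $X$-dependent gate masses, and then MLE maximality no longer yields the basic inequality. Finally, the lemma's localized class is an $\bbE_X[d_H]$-ball, which contains the joint-Hellinger ball. So index each shell by the joint-Hellinger ball, where the Bernstein norm is controlled, and bound its entropy by that of the larger $\bbE_X[d_H]$-ball.
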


The proof of this lemma appears in \cite{Vandegeer-2000}.

\subsubsection{Main Proof}

Observe that
\(
H_B\!\left(t,\,\overline{P}_{k}^{1/2}(\Theta,t),\,\|\cdot\|\right)
\;\le\;
H_B\!\left(t,\,P_k(\Theta),\,d_{H}\right),\forall \, t>0
\). 
Consequently,
\begin{align*}
J_B\!\left(\xi,\,\overline{P}_{k}^{1/2}(\Theta,\xi)\right)
&=
\int_{\xi^2/2^{13}}^{\xi}
H_B^{1/2}\!\left(t,\,\overline{P}_{k}^{1/2}(\Theta,t),\,\|\cdot\|\right)\,dt  \lor \xi
\\[4pt]
&\le
\int_{\xi^2/2^{13}}^{\xi}
H_B^{1/2}\!\left(t,\,P_k(\Theta),\,d_H\right)\,dt  \lor \xi
\qquad\text{(monotonicity)}
\\[4pt]
&\lesssim
\int_{\xi^2/2^{13}}^{\xi}
\sqrt{Kdk}\sqrt{\log(1/t)}\,dt  \lor \xi
\qquad\text{(using Lemma~\ref{lemma:n-covering} )}
\\[4pt]
&\lesssim\xi\,\sqrt{Kdk\log(1/\xi)}\\
\end{align*}
Define
\[
\Psi(\xi) \;=\; \sqrt{Kdk}\cdot\xi\,\sqrt{\log(1/\xi)}
\quad\text{and}\quad
\xi_n=\sqrt{Kdk\log(n)/n}.
\]
As $\Psi(\xi)\ge J_B(\xi,\overline{P}_k^{1/2}(\Theta,\xi))$ and $\Psi(\xi)/\xi^2$ is non-increasing, Lemma~\ref{lemma:geer-2000} gives
\[
\mathbb{P}\!\left(
\mathbb{E}_X\!\left[d_{H}\!\left(p_{\widehat{G}_n}(\cdot\mid X),\,p_{G^*}(\cdot\mid X)\right)\right]
>C\sqrt{Kdk\log(n)/n}
\right)
\;\le\;
\,\exp\!\left(
-c\log(n)
\right),
\]
where $C$ and $c$ are universal positive constants depending only on $\Theta$.
This completes the proof.

\subsection{Proof of Proposition~\ref{prop:Identifiability}}
\label{appendix:Identifiability}

Under the assumption, for every $s \in [K]$ and $\mu$-almost surely $\mathbf{X} \in \mathcal{X}$, we have

\begin{align}
&\sum_{i=1}^{k}\frac{\exp(\gamma_i)\sigma((\alpha_i)^{\top}X+\beta_i)}{\sum_{j=1}^{k}\exp(\gamma_j)\sigma((\alpha_j)^{\top}X+\beta_j)
}
     \cdot \frac{\exp((a_{is})^{\top}X+b_{is})}{\sum_{\ell=1}^{K}\exp((a_{i\ell})^{\top}X+b_{i\ell})}\nonumber\\
&=\quad \sum_{i=1}^{k'}\frac{\exp(\gamma'_i)\sigma((\alpha'_i)^{\top}X+\beta'_i)}{\sum_{j=1}^{k'}\exp(\gamma'_j)\sigma((\alpha'_j)^{\top}X+\beta'_j)
}
     \cdot \frac{\exp((a'_{is})^{\top}X+b'_{is})}{\sum_{\ell=1}^{K'}\exp((a'_{i\ell})^{\top}X+b'_{i\ell})}
\end{align} 
\cite{grun2008iden}
show that multinomial logistic mixtures are identifiable, implying that any two mixing measures \(G\) and \(G'\) must have the same number of experts and identical gating sets, i.e., \(k = k'\) and

$$
\left\{\frac{\exp(\gamma_i)\sigma((\alpha_i)^{\top}X+\beta_i)}{\sum_{j=1}^{k}\exp(\gamma_j)\sigma((\alpha_j)^{\top}X+\beta_j)
} : i \in [k] \right\} = \left\{ \frac{\exp(\gamma'_i)\sigma((\alpha'_i)^{\top}X+\beta'_i)}{\sum_{j=1}^{k'}\exp(\gamma'_j)\sigma((\alpha'_j)^{\top}X+\beta'_j)
} : i \in [k'] \right\},
$$

for almost surely $\mathbf{X} \in \mathcal{X}$. Without loss of generality, we assume that

\begin{equation}
    \label{20}
    \frac{\exp(\gamma_i)\sigma((\alpha_i)^{\top}X+\beta_i)}{\sum_{j=1}^{k}\exp(\gamma_j)\sigma((\alpha_j)^{\top}X+\beta_j)
} = \frac{\exp(\gamma'_i)\sigma((\alpha'_i)^{\top}X+\beta'_i)}{\sum_{j=1}^{k'}\exp(\gamma'_j)\sigma((\alpha'_j)^{\top}X+\beta'_j)
} 
\end{equation}

for any $i \in [k]$.
The above result leads to $\gamma_i = \gamma'_i$, $\alpha_i=\alpha_i'$ and $\beta_i=\beta_i'$ for all $i \in [k] $. \\
Thus, the equation~\ref{20} becomes
\begin{equation} \label{eq:41}
\sum_{i=1}^k \exp(\gamma_{i}) u(Y=s|X; \alpha_i, \beta_i, a_i, b_i) = \sum_{i=1}^k  \exp(\gamma_{i}) u(Y=s|X; \alpha'_i, \beta'_i, a'_i, b'_i),
\end{equation}
for any $s \in [K]$ and almost surely $\mathbf{X} \in \mathcal{X}$, where $u(Y=s|X; \alpha_, \beta, a, b) := \sigma(\alpha^\top X + \beta) f(Y=s|X; a, b)$

Next, we will consider $r$ subsets of the set $[k]$, denoted by $\mathcal{S}_1, \mathcal{S}_2, \dots, \mathcal{S}_r$ satisfying the following property: $\gamma_{i} = \gamma_{i'}$ for any $i, i' \in \mathcal{S}_t$ for $t \in [r]$. Therefore, equation~\ref{eq:41} can be rewritten as
$$
\sum_{t=1}^r \sum_{i \in \mathcal{S}_t} \exp(\gamma_{i}) u(Y=s|X; \alpha_i, \beta_i, a_i, b_i) = \sum_{t=1}^r \sum_{i \in \mathcal{S}_t} \exp(\gamma_{i}) u(Y=s|X; \alpha'_i, \beta'_i, a'_i, b'_i).
$$

for any $s \in [K]$ and almost surely $\mathbf{X} \in \mathcal{X}$. \\
It follows from the above equation that for each $t \in [r]$, we get $\{ (a_{il})^{\top}X+b_{il})_{l=1}^K : i \in \mathcal{S}_t \} = \{ (a'_{il})^{\top}X+b'_{il})_{l=1}^K : i \in \mathcal{S}_t \}$ for almost surely $\mathbf{X} \in \mathcal{X}$. This leads to
$$
\{ (a_{i1}, \dots, a_{iK}, b_{i1}, \dots, b_{iK}) : i \in \mathcal{S}_t \} = \{ (a'_{i1}, \dots, a'_{iK}, b'_{i1}, \dots, b'_{iK}) : i \in \mathcal{S}_t \}.
$$
Again, we may assume that $(a_{i1}, \dots, a_{iK}, b_{i1}, \dots, b_{iK}) = (a'_{i1}, \dots, a'_{iK}, b'_{i1}, \dots, b'_{iK})$ for any $i \in \mathcal{S}_t$.\\
As a result, we achieve that $G \equiv G'$, which completes the proof.
\section{Experimental Details}\label{appendix:numerical}
In this appendix, we present numerical experiments that empirically validate our theoretical results on the convergence rates of maximum likelihood estimation for the standard sigmoid gating multinomial logistic MoE model without temperature parameter in the Appendix~\ref{experiment:w/o temp}, and we compare it with that for the softmax gating multinomial logistic MoE model in the Appendix~\ref{experiment:compare_gating}. Meanwhile, in the Appendix~\ref{experiment:w/ temp}, for the temperature enhanced gating variant, we empirically demonstrate the advantage of using a Euclidean affinity score instead of the standard inner product affinity score for parameter estimation in the sigmoid gating multinomial logistic MoE model.

\textbf{Synthetic data.} For each sample size $n$, we generate independent and identically distributed samples ${(X_i, Y_i)}_{i=1}^n$ by first drawing $X_i$'s from the uniform distribution over $[0,1]$ and then sampling $Y_i$'s from the true conditional density $p_{\Gs}(Y=s|X)$, $\widetilde{p}_{\widetilde{G}^*}(Y=s|X)$, and $\overline{p}_{\overline{G}^*}(Y=s|X)$ corresponding to the MoE model specified in each theorem configuration. 

\textbf{Maximum likelihood estimation.} A widely used approach for computing the maximum likelihood estimator $\widehat{G}$ (or $\widetilde{G}, \overline{G}$)
for each set of samples is to use the Expectation-Maximization (EM) algorithm \cite{dempster1977maximum}. Since closed form updates for the gating parameters are not available in the maximization step, we instead employ an EM based numerical procedure previously adopted in \cite{chamroukhi2009time}. We set the convergence tolerance to $\varepsilon = 10^{-6}$ and run the algorithm for a maximum of 1000 iterations.  

\textbf{Experimental design.} Our empirical study considers three experimental configurations, each directly aligned with the theoretical settings developed in the main paper. For each configuration, we generate 20 independent datasets across 20 distinct sample sizes ranging from $10^4$ to $10^5$. To ensure consistency and facilitate comparison across experiments, we adopt a model architecture with two true experts $(\ks=2)$ and two classes ($K=2$), while fitting models with three and four experts in the experimental settings.

\textbf{Initialization.} For each $k \in \{\ks+1, \ks+2\}$, the indices in the set ${1,2,...,k}$ are randomly assigned to $\ks$ Voronoi cells denoted by $\gA_1, \ldots,\gA_{\ks}$,  ensuring that each cell contains at least one element. This assignment is repeated independently for each replication. Subsequently, the model parameters are initialized by sampling from Gaussian distributions centered at their true values with a small variance to ensure stable convergence. 

\subsection{Sigmoid Gating without Temperature}\label{experiment:w/o temp}
\begin{table}[!ht]
\centering
\caption{True parameters for the comparative analysis of sigmoid and softmax gating functions.}
\begin{tabular}{cccc}
\hline
 & \textbf{Gating Parameters} & \multicolumn{2}{c}{\textbf{Expert Parameters}} \\
 & & Class 1 $(a^*, b^*)$ & Class 2 $(a^*, b^*)$ \\ \hline
\multicolumn{4}{l}{\textbf{Sigmoid Gating ($p_{G^*}$)}} \\
Expert 1 & $(\gamma_0^*, \alpha_0^*,\beta_0^*) = (0.2, -1.0, -0.5)$ & $(0.0, -1.0)$ & $(-1.0, 1.0)$ \\
Expert 2 & $(\gamma_0^*, \alpha_0^*,\beta_0^*) = (-0.3, 1.0, 0.5)$ & $(0.0, 0.0)$ & $(0.0, 2.0)$ \\
\hline
\multicolumn{4}{l}{\textbf{Softmax Gating ($g_{G^*}$)}} \\
Expert 1 & $(\beta_{01}^*, \beta_{11}^*) = (0.2, -1.0)$ & $(-1.0, 0.0)$ & $(1.0, -1.0)$ \\
Expert 2 & $(\beta_{02}^*, \beta_{12}^*) = (-0.3, 1.0)$ & $(0.0, 0.0)$ & $(2.0, 0.0)$ \\ \hline
\end{tabular}

\label{table:true params compare}
\end{table}
The problem setting is defined as in Theorem \eqref{theorem:without_temperature}. The values corresponding to the true parameters are detailed in Table \ref{table:true params compare}.    
As illustrated in Figure \eqref{fig:convergence_rate}, the maximum likelihood estimator MLE $\widehat{G}_n$ exhibits empirical convergence to the true mixing measure $G^*$ under the Voronoi metric $\gD_1$ at the rate of orders $\gO(n^{-0.49})$ and $\gO(n^{-0.54})$ when $k=3$ and $k=4$, respectively. This result strongly supports our theoretical result in Theorem~\ref{theorem:without_temperature}. 

\subsection{Convergence Analysis between Sigmoid Gating and Softmax Gating}\label{experiment:compare_gating}

\textbf{Model Definitions.} Next, we empirically compare the convergence behavior of the standard sigmoid-gating MLMoE model in Theorem~\eqref{theorem:without_temperature} with the softmax-gating MLMoE model, whose conditional density $g_{G^*}(Y=s|X)$ is defined in Equation (1) of \cite{nguyen2024general}. We configure the true parameters of the model to satisfy the conditions of {Regime 2} from \cite{nguyen2024general}, where at least one expert has vanishing coefficients for the covariate $X$ ($b_{i\ell}^* = 0$ for all $\ell$), a regime proven to induce slow convergence rates for parameter estimation under standard softmax gating.

\textbf{True Parameter Configuration.} The true parameter values for both models are detailed in Table~\ref{table:true params compare}.

\textbf{Empirical Convergence Results.} 
Figure~\ref{fig:convergence_rate} presents the empirical convergence of the maximum likelihood estimator for both gating functions when fitting $k = 3$ and $k = 4$ experts to the true $k^* = 2$ expert model. The sigmoid gating model (measured under $\mathcal{D}_1$) demonstrates faster convergence rates: $\mathcal{O}(n^{-0.49})$ for $k=3$ and $\mathcal{O}(n^{-0.54})$ for $k=4$, while the softmax gating model (measured under $\mathcal{D}_2$ from \cite{nguyen2024general}) converges at slower rates $\mathcal{O}(n^{-0.30})$ ($k=3$) and $\mathcal{O}(n^{-0.28})$ ($k=4$).

\subsection{Sigmoid Gating with Temperature}\label{experiment:w/ temp}
\textbf{Sigmoid gating with inner product affinity score.}


\begin{table}[!ht]
\centering   
\caption{True parameters under the MLMoE model with temperature and inner product affinity score}
\begin{tabular}{cccc}
\hline
    & Gating Parameters                    & \multicolumn{2}{c}{Expert Parameters}            \\
    & $(\gamma^*, \alpha^*, \beta^*, \tau^*)$ & Class 1 $(a^*, b^*)$                & Class 2 $(a^*, b^*)$               \\ \hline
Expert $1$ & $(0, 1.0, 0, 0.1)$  
           & $(0, 0)$ & $(1.86, 0.963)$ \\
Expert $2$ & $(0, 1.001, 0, 0.1)$ 
           & $(0, 0)$ & $(1.87, 0.964)$ \\ \hline
\end{tabular}

\label{table:true params w temp inner}
\end{table}
We consider the same experimental setup as in Theorem~\eqref{theorem:with_temperature_inner}. 
The true values of the model parameters are summarized in Table~\ref{table:true params w temp inner}. 
Two subfigures~\eqref{fig:conv_d2_k3}, \eqref{fig:conv_d2_k4} show the estimation error of the maximum likelihood estimator $\widetilde{G}_n$ under the Voronoi metric $\gD_{2,2}$. 
We observe empirical convergence toward the true mixing measure $G^*$ with rates of order $\gO(n^{-0.11})$ and $\gO(n^{-0.15})$ for $k=3$ and $k=4$, respectively. 
These observations are consistent with the theoretical guarantee in Theorem~\ref{theorem:with_temperature_inner}.

\textbf{Sigmoid gating with Euclidean affinity score.}
\begin{table}[!ht]
\centering    
\caption{True parameters under the MLMoE model with temperature and Euclidean affinity score}
\begin{tabular}{cccc}
\hline
    & Gating Parameters                    & \multicolumn{2}{c}{Expert Parameters}            \\
    &                                      & Class 1                 & Class 2                \\ \hline
Expert $1$ & $(\gamma_0^*, \alpha_0^*,\beta_0^*,\tau^*) = (1,-5,-0.5,2.0)$  
           & $(a_{11}^*, b_{11}^*) = (-1,2)$ & $(a_{12}^*, b_{12}^*) = (0,0)$ \\
Expert $2$ & $(\gamma_0^*, \alpha_0^*,\beta_0^*,\tau^*) = (-1,5,0.5,2.0)$ 
           & $(a_{21}^*, b_{21}^*) = (1,-1)$ & $(a_{22}^*, b_{22}^*) = (0,0)$ \\ \hline
\end{tabular}

\label{table:true params w temp}
\end{table}

The experimental configuration follows Theorem~\eqref{theorem:with_temperature_euclidean}. 
Table~\ref{table:true params w temp} reports the corresponding true parameter values used in this setting.    
As shown in two subfigures~\eqref{fig:conv_d2_k3},\eqref{fig:conv_d2_k4}, the estimator $\overline{G}_n$ converges empirically to $G^*$ under the Voronoi metric $\gD_3$. 
The observed rates are of order $\gO(n^{-0.52})$ and $\gO(n^{-0.50})$ for $k=3$ and $k=4$, respectively, indicating stable convergence behavior. 
This empirical evidence aligns well with the theoretical prediction in Theorem~\ref{theorem:with_temperature_euclidean}.
\bibliography{references}

@inproceedings{chi_representation_2022,
	title = {On the Representation Collapse of Sparse Mixture of Experts},
	booktitle = {Advances in {Neural} {Information} {Processing} {Systems}},
	author = {Chi, Zewen and Dong, Li and Huang, Shaohan and Dai, Damai and Ma, Shuming and Patra, Barun and Singhal, Saksham and Bajaj, Payal and Song, Xia and Mao, Xian-Ling and Huang, Heyan and Wei, Furu},
	editor = {Oh, Alice H. and Agarwal, Alekh and Belgrave, Danielle and Cho, Kyunghyun},
	year = {2022},
}

@inproceedings{nguyen2024sigmoid,
  title={Sigmoid Gating is More Sample Efficient than Softmax Gating in Mixture of Experts},
  author={Nguyen, Huy and Ho, Nhat and Rinaldo, Alessandro},
  booktitle = "Advances in Neural Information Processing Systems",
  year={2024}
}

@inproceedings{han2024fusemoe,
  title={FuseMoE: Mixture-of-Experts Transformers for Fleximodal Fusion},
  author={Han, Xing and Nguyen, Huy and Harris, Carl and Ho, Nhat and Saria, Suchi},
  booktitle = "Advances in Neural Information Processing Systems",
  year={2024}
}

@inproceedings{nguyen2025cosine,
    author = {Huy Nguyen and Pedram Akbarian and Trang Pham and Trang Nguyen and Shujian Zhang and Nhat Ho},
    title = {Statistical Advantages of Perturbing Cosine Router in Mixture of Experts},
    booktitle = {International Conference on Learning Representations},
    year = 2025
}

@inproceedings{dai2024deepseekmoe,
    title = "{D}eep{S}eek{M}o{E}: Towards Ultimate Expert Specialization in Mixture-of-Experts Language Models",
    author = "Dai, Damai  and
      Deng, Chengqi  and
      Zhao, Chenggang  and
      Xu, R.x.  and
      Gao, Huazuo  and
      Chen, Deli  and
      Li, Jiashi  and
      Zeng, Wangding  and
      Yu, Xingkai  and
      Wu, Y.  and
      Xie, Zhenda  and
      Li, Y.k.  and
      Huang, Panpan  and
      Luo, Fuli  and
      Ruan, Chong  and
      Sui, Zhifang  and
      Liang, Wenfeng",
    booktitle = "Proceedings of the 62nd Annual Meeting of the Association for Computational Linguistics (Volume 1: Long Papers)",
    month = aug,
    year = "2024",
    publisher = "Association for Computational Linguistics",
    doi = "10.18653/v1/2024.acl-long.70",
    pages = "1280--1297",
}

@article{bing2025learninglargesoftmaxmixtures,
      title={Learning large softmax mixtures with warm start {EM}}, 
      author={Xin Bing and Florentina Bunea and Jonathan Niles-Weed and Marten Wegkamp},
      journal={arXiv preprint arXiv:2409.09903},
      year={2025}
}

@article{comanici2025gemini25pushingfrontier,
      title={Gemini 2.5: Pushing the Frontier with Advanced Reasoning, Multimodality, Long Context, and Next Generation Agentic Capabilities}, 
      author={GeminiTeam},
      journal={arXiv preprint arXiv:2507.06261},
      year={2025}
}

@article{abdin2024phi3,
  title={Phi-3 Technical Report: A Highly Capable Language Model Locally on Your Phone},
  author={Marah Abdin and others},
  journal={arXiv preprint arXiv:2404.14219},
  year={2024}
}

@article{grun2008iden,
  author={B. Grün and F. Leisch},
  title={Identifiability of Finite Mixtures of Multinomial Logit Models with Varying and Fixed Effects},
  journal={Journal of Classification},
  year={2008},
  volume={25},
  number={2},
  pages={225-247}
}

@article{nguyen2025deepseekmoe,
      title={On {D}eep{S}eek{M}o{E}: Statistical Benefits of Shared Experts and Normalized Sigmoid Gating}, 
      author={Huy Nguyen and Thong T. Doan and Quang Pham and Nghi D. Q. Bui and Nhat Ho and Alessandro Rinaldo},
      year={2025},
     journal = {arXiv preprint arXiv:2505.10860}
}

@article{dempster1977maximum,
  title={Maximum likelihood from incomplete data via the EM algorithm},
  author={Dempster, Arthur P and Laird, Nan M and Rubin, Donald B},
  journal={Journal of the royal statistical society: series B (methodological)},
  volume={39},
  number={1},
  pages={1--22},
  year={1977},
  publisher={Wiley Online Library}
}

@article{chamroukhi2009time,
  title={Time series modeling by a regression approach based on a latent process},
  author={Chamroukhi, Faicel and Sam{\'e}, Allou and Govaert, G{\'e}rard and Aknin, Patrice},
  journal={Neural Networks},
  volume={22},
  number={5-6},
  pages={593--602},
  year={2009},
  publisher={Elsevier}
}

@article{deepseekv3,
  title={Deepseek-v3 technical report},
  author={DeepSeek-AI and others},
  journal={arXiv preprint arXiv:2412.19437},
  year={2024}
}

@article{qwen2025,
  title={Qwen2.5 Technical Report},
  author={Qwen and others},
  journal={arXiv preprint arXiv:2412.15115},
  year={2025}
}

@article{deepseekv2,
  title={DeepSeek-V2: A Strong, Economical, and Efficient Mixture-of-Experts Language Model},
  author={DeepSeek-AI and others},
  journal={arXiv preprint arXiv:2405.04434},
  year={2024}
}

@ARTICLE{yuksel2012twenty,
  author={Yuksel, Seniha Esen and Wilson, Joseph N. and Gader, Paul D.},
  journal={IEEE Transactions on Neural Networks and Learning Systems}, 
  title={Twenty Years of Mixture of Experts}, 
  year={2012},
  volume={23},
  number={8},
  pages={1177-1193},
  doi={10.1109/TNNLS.2012.2200299}}

@inproceedings{kwon_em_2020,
	series = {Proceedings of {Machine} {Learning} {Research}},
	title = {{EM} {Converges} for a {Mixture} of {Many} {Linear} {Regressions}},
	volume = {108},
	url = {https://proceedings.mlr.press/v108/kwon20a.html},
	booktitle = {Proceedings of the {Twenty} {Third} {International} {Conference} on {Artificial} {Intelligence} and {Statistics}},
	publisher = {PMLR},
	author = {Kwon, Jeongyeol and Caramanis, Constantine},
	editor = {Chiappa, Silvia and Calandra, Roberto},
	month = aug,
	year = {2020},
	pages = {1727--1736},
}

@article{chen_improved_1999,
	title = {Improved learning algorithms for mixture of experts in multiclass classification},
	volume = {12},
	issn = {0893-6080},
	url = {https://www.sciencedirect.com/science/article/pii/S089360809900043X},
	doi = {https://doi.org/10.1016/S0893-6080(99)00043-X},
	number = {9},
	journal = {Neural Networks},
	author = {Chen, K. and Xu, L. and Chi, H.},
	year = {1999},
	pages = {1229--1252},
}

@inproceedings{liang_m3vit_2022,
	title = {M$^3${ViT}: {Mixture}-of-{Experts} {Vision} {Transformer} for {Efficient} {Multi}-task {Learning} with {Model}-{Accelerator} {Co}-design},
	booktitle = {{NeurIPS}},
	author = {Liang, Hanxue and Fan, Zhiwen and Sarkar, Rishov and Jiang, Ziyu and Chen, Tianlong and Zou, Kai and Cheng, Yu and Hao, Cong and Wang, Zhangyang},
	year = {2022},
}

@book{Vandegeer-2000,
author= "S. van de Geer",
title="Empirical Processes in M-estimation",
publisher= "Cambridge University Press",
year="2000"
}

@article{Jacob_Jordan-1991,
	author="R. A. Jacobs and M. I. Jordan and S. J. Nowlan and G. E. Hinton",
	title="Adaptive mixtures of local experts",
	journal="Neural Computation",
	volume="3",
	page="79-87",
	year="1991"
}

@INPROCEEDINGS{shazeer2017topk,
   AUTHOR = "N. Shazeer and A. Mirhoseini and K. Maziarz and A. Davis and Q. Le and G. Hinton and J. Dean",
   TITLE = "Outrageously Large Neural Networks: The Sparsely-Gated Mixture-of-Experts Layer",
   BOOKTITLE = 	 "In International Conference on Learning Representations", 
   YEAR = 	 2017
}

@InProceedings{manole22refined,
  title = 	 {Refined Convergence Rates for Maximum Likelihood Estimation under Finite Mixture Models},
  author =       {T. Manole and N. Ho},
  booktitle = 	 {Proceedings of the 39th International Conference on Machine Learning},
  pages = 	 {14979--15006},
  year = 	 {2022},
  volume = 	 {162},
  series = 	 {Proceedings of Machine Learning Research},
  month = 	 {17--23 Jul},
  publisher =    {PMLR}
}

@inproceedings{
diep2025on,
title={{O}n {Z}ero-{I}nitialized {A}ttention: {O}ptimal {P}rompt and {G}ating {F}actor {E}stimation},
author={Nghiem Tuong Diep and Huy Nguyen and Chau Nguyen and Minh Le and Duy Minh Ho Nguyen and Daniel Sonntag and Mathias Niepert and Nhat Ho},
booktitle={Forty-second International Conference on Machine Learning},
year={2025}
}

@inproceedings{
truong2025replora,
title={{R}ep{L}o{{R}{A}}: {R}eparameterizing {L}ow-rank {A}daptation via the {P}erspective of {M}ixture of {E}xperts},
author={Tuan Truong and Chau Nguyen and Huy Nguyen and Minh Le and Trung Le and Nhat Ho},
booktitle={Forty-second International Conference on Machine Learning},
year={2025}
}

@inproceedings{
le2025revisiting,
title={{R}evisiting {P}refix-tuning: {S}tatistical {B}enefits of {R}eparameterization among {P}rompts},
author={Minh Le and Chau Nguyen and Huy Nguyen and Quyen Tran and Trung Le and Nhat Ho},
booktitle={The Thirteenth International Conference on Learning Representations},
year={2025},
url={https://openreview.net/forum?id=QjTSaFXg25}
}

@inproceedings{
yun2024flexmoe,
title={Flex-MoE: Modeling Arbitrary Modality Combination via the Flexible Mixture-of-Experts},
author={Sukwon Yun and Inyoung Choi and Jie Peng and Yangfan Wu and Jingxuan Bao and Qiyiwen Zhang and Jiayi Xin and Qi Long and Tianlong Chen},
booktitle={The Thirty-eighth Annual Conference on Neural Information Processing Systems},
year={2024}
}

@article{grattafiori2024llama3,
  title={The Llama 3 Herd of Models},
  author={Aaron Grattafiori and Abhimanyu Dubey and Abhinav Jauhri and Abhinav Pandey and Abhishek Kadian and Ahmad Al-Dahle and Aiesha Letman and Akhil Mathur and others},
  journal={arXiv preprint arXiv:2407.21783},
  year={2024}
}

@article{jiang2024mixtral,
      title={Mixtral of Experts}, 
      author={Albert Q. Jiang and Alexandre Sablayrolles and Antoine Roux and Arthur Mensch and Blanche Savary and Chris Bamford and Devendra Singh Chaplot and Diego de las Casas and Emma Bou Hanna and Florian Bressand and Gianna Lengyel and Guillaume Bour and Guillaume Lample and Lélio Renard Lavaud and Lucile Saulnier and Marie-Anne Lachaux and Pierre Stock and Sandeep Subramanian and Sophia Yang and Szymon Antoniak and Teven Le Scao and Théophile Gervet and Thibaut Lavril and Thomas Wang and Timothée Lacroix and William El Sayed},
      year={2024},
      Journal = {arxiv preprint arxiv 2401.04088}
}

@inproceedings{lepikhin_gshard_2021,
	title = {{GS}hard: {Scaling} {Giant} {Models} with {Conditional} {Computation} and {Automatic} {Sharding}},
	booktitle = {International {Conference} on {Learning} {Representations}},
	author = {D. Lepikhin and H. Lee and Y. Xu and D. Chen and O. Firat and Y. Huang and M. Krikun and N. Shazeer and Z. Chen},
	year = {2021},
}

@inproceedings{Riquelme2021scalingvision,
 author = {C. Riquelme and J. Puigcerver and B. Mustafa and M. Neumann and R. Jenatton and A. Susano Pint and D. Keysers and N. Houlsby},
 booktitle = {Advances in Neural Information Processing Systems},
 pages = {8583--8595},
 publisher = {Curran Associates, Inc.},
 title = {Scaling Vision with Sparse Mixture of Experts},
 volume = {34},
 year = {2021}
}

@inproceedings{chen2022theory,
 author = {Chen, Zixiang and Deng, Yihe and Wu, Yue and Gu, Quanquan and Li, Yuanzhi},
 booktitle = {Advances in Neural Information Processing Systems},
 editor = {S. Koyejo and S. Mohamed and A. Agarwal and D. Belgrave and K. Cho and A. Oh},
 pages = {23049--23062},
 publisher = {Curran Associates, Inc.},
 title = {Towards Understanding the Mixture-of-Experts Layer in Deep Learning},
 volume = {35},
 year = {2022}
}

@article{pham2022functional,
  title     = {Functional Mixture-of-Experts for Classification},
  author    = {Pham, Nhat Thien and Chamroukhi, Fa{\"i}cel},
  journal   = {arXiv preprint arXiv:2202.13934},
  year      = {2022}
}

@inproceedings{wang2025expressive,
  title     = {On the Expressive Power of Mixture-of-Experts for Structured Complex Tasks},
  author    = {Wang, Mingze and E, Weinan},
  booktitle = {Advances in Neural Information Processing Systems 38 (NeurIPS)},
  year      = {2025},
  note      = {Spotlight}
}

@article{rigollet2025granularity,
  title   = {The Power of Fine-Grained Experts: Granularity Boosts Expressivity in Mixture of Experts},
  author  = {Boix-Adsera, Enric and Rigollet, Philippe},
  journal = {arXiv preprint arXiv:2505.06839},
  year    = {2025}
}

@InProceedings{nguyen2024general,
  title = 	 {A General Theory for Softmax Gating Multinomial Logistic Mixture of Experts},
  author =       {Nguyen, Huy and Akbarian, Pedram and Nguyen, Trungtin and Ho, Nhat},
  booktitle = 	 {Proceedings of the 41st International Conference on Machine Learning},
  pages = 	 {37617--37648},
  year = 	 {2024}
}

@article{faria2010regression,
author = {Susana Faria and Gilda Soromenho},
title = {Fitting mixtures of linear regressions},
journal = {Journal of Statistical Computation and Simulation},
volume = {80},
number = {2},
pages = {201-225},
year = {2010},
publisher = {Taylor & Francis}
}

@inproceedings{
li2023sparse,
title={Sparse Mixture-of-Experts are Domain Generalizable Learners},
author={Bo Li and Yifei Shen and Jingkang Yang and Yezhen Wang and Jiawei Ren and Tong Che and Jun Zhang and Ziwei Liu},
booktitle={The Eleventh International Conference on Learning Representations },
year={2023}
}

@inproceedings{csordas2023approximating,
    title = "Approximating Two-Layer Feedforward Networks for Efficient Transformers",
    author = {Csord{\'a}s, R{\'o}bert  and
      Irie, Kazuki  and
      Schmidhuber, J{\"u}rgen},
    editor = "Bouamor, Houda  and
      Pino, Juan  and
      Bali, Kalika",
    booktitle = "Findings of the Association for Computational Linguistics: EMNLP 2023",
    month = dec,
    year = "2023",
    address = "Singapore",
    publisher = "Association for Computational Linguistics",
    pages = "674--692"
}
\bibliographystyle{abbrv}
\end{document}